\documentclass{article}

\PassOptionsToPackage{numbers, sort}{natbib}

\usepackage[preprint]{neurips_2024}

\usepackage[utf8]{inputenc} %
\usepackage{mystyle}

\usepackage[T1]{fontenc}    %
\usepackage{hyperref}       %
\usepackage{url}            %
\usepackage{booktabs}       %
\usepackage{amsfonts}       %
\usepackage{nicefrac}       %
\usepackage{xcolor}         %

\usepackage[ruled,vlined]{algorithm2e}
\usepackage{tabularx}%
\usepackage{tikz}
\usepackage{url}
\usepackage{amsmath,amsthm, amssymb}
\usepackage{cleveref}
\usepackage{enumitem}
\usepackage{color}
\definecolor{green}{rgb}{0.09, 0.45, 0.27}

\def\Oracle{$\chi(\varepsilon)$-Learning Procedure}
\def\oracle{$\chi(\varepsilon)$-learning procedure}
\def\oraclearg[#1]{#1-learning procedure}
\def\Oraclearg[#1]{#1-Learning Procedure}

\SetKwInOut{Input}{Input}
\SetKwInOut{Output}{Output}

\usepackage{pifont}%
\newcommand{\cmark}{\ding{51}}%
\newcommand{\xmark}{\ding{55}}%

\def\sA{{\mathscr{A}}}

\RequirePackage{bm} 

\numberwithin{equation}{section}
\newtheorem{theorem}{Theorem}

\newtheorem*{restatethm}{Theorem}
\newtheorem{corollary}{Corollary}[theorem]
\newtheorem{lemma}{Lemma}

\newtheorem{proposition}{Proposition}

\newtheorem{definition}{Definition}

\newtheorem{assumption}{Assumption}

\newtheorem{example}{Example}

\newtheoremstyle{nonindented}{1ex}{1ex}{}{}{\bfseries}{.}{.5em}{}
\newtheoremstyle{indented}{1ex}{1ex}{\itshape\addtolength{\leftskip}{0.6cm}\addtolength{\rightskip}{0.6cm}}{}{\bfseries}{.}{.5em}{}
\theoremstyle{nonindented}
\theoremstyle{indented}
\theoremstyle{plain}

\newcommand{\abs}[1]{\left| #1 \right|}

\renewcommand{\hat}{\widehat}
\renewcommand{\tilde}{\widetilde}
\renewcommand{\bar}{\overline}

\newcommand{\norm}[1]{\left\lVert#1\right\rVert}
\newcommand{\bignorm}[1]{\bigg|\bigg|#1\bigg|\bigg|}

\DeclareMathOperator{\Reg}{Reg}
\DeclareMathOperator{\Vol}{Vol}

\def\min{\qopname\relax n{min}}
\def\max{\qopname\relax n{max}}

\def\argmin{\qopname\relax n{argmin}}
\def\argmax{\qopname\relax n{argmax}}

\def\minarg{\qopname\relax n{minarg}}
\def\maxarg{\qopname\relax n{maxarg}}

\def\Ex{\qopname\relax n{\mathbf{E}}}

\newcommand{\bx}{\bm{x}}

\newcommand{\cA}{\mathcal{A}}
\newcommand{\cB}{\mathcal{B}}

\newcommand{\cE}{\mathcal{E}}

\newcommand{\cG}{\mathcal{G}}

\newcommand{\cL}{\mathcal{L}}
\newcommand{\cM}{\mathcal{M}}

\newcommand{\cO}{\mathcal{O}}
\newcommand{\cP}{\mathcal{P}}
\newcommand{\cQ}{\mathcal{Q}}

\newcommand{\cS}{{\mathcal{S}}}
\newcommand{\cT}{{\mathcal{T}}}

\newcommand{\cV}{\mathcal{V}}

\newcommand{\cX}{\mathcal{X}}
\newcommand{\cY}{\mathcal{Y}}
\newcommand{\cZ}{\mathcal{Z}}

\newcommand{\BB}{\mathbb{B}}
\newcommand{\EE}{\mathbb{E}}

\newcommand{\PP}{\mathbb{P}}

\newcommand{\RR}{\mathbb{R}}
\newcommand{\SSS}{\mathbb{S}}

\newcommand{\bpi}{\bm{\pi}}

\newcommand{\one}{{\bf 1}}

\newcommand{\mini}[1]{\mbox{minimize} & {#1} &\\}

\newcommand{\st}{\mbox{subject to} }

\newcommand{\qcon}[2]{&#1, & \mbox{for } #2.  \\}
\newenvironment{lp}{\begin{equation}  \begin{array}{lll}}{\end{array}\end{equation}}
\newenvironment{lp*}{\begin{equation*}  \begin{array}{lll}}{\end{array}\end{equation*}}

\title{Contractual Reinforcement Learning: \\Pulling Arms with Invisible Hands\thanks{This work is supported in part by  Army Research
Office Award W911NF-23-1-0030, ONR Award N00014-23-1-2802 and NSF Award CCF-2303372. }}

\author{Jibang Wu\thanks{Department of Computer Science, University of Chicago; Corresponding author emails: \texttt{\{wujibang, haifengxu\}@uchicago.edu}.} 
\And Siyu Chen\thanks{Department of Statistics and Data Science, Yale University.}
\And Mengdi Wang\thanks{Department of Electrical and Computer Engineering, Princeton University.}
\And  Huazheng Wang\thanks{School of Electrical Engineering and Computer Science, Oregon State University.}
\And Haifeng Xu\footnotemark[2] }

\begin{document}

\maketitle

\begin{abstract}
The agency problem emerges in today's large scale machine learning tasks, where the learners are unable to direct content creation or enforce data collection. In this work, we propose a theoretical framework for aligning economic interests of different stakeholders in the online learning problems through contract design. 
The problem, termed \emph{contractual reinforcement learning}, naturally arises from the classic model of Markov decision processes, where a learning principal seeks to optimally influence the agent's action policy for their common interests through a set of payment rules contingent on the realization of next state. 
For the planning problem, we design an efficient dynamic programming algorithm to determine the optimal contracts against the far-sighted agent. For the learning problem, we introduce a generic design of no-regret learning algorithms to untangle the challenges from robust design of contracts to the balance of exploration and exploitation, reducing the complexity analysis to the construction of efficient search algorithms.  For several natural classes of problems, we design tailored search algorithms that provably achieve $\tilde{O}(\sqrt{T})$ regret. We also present an algorithm with $\tilde{O}(T^{2/3})$ for the general problem that improves the existing analysis in online contract design with mild technical assumptions.
\end{abstract}

\section{Introduction}\label{sec:intro}
\begin{quote}
    \textit{``Every individual...  intends only his own gain, and is led by an invisible hand to promote an end which was no part of his intention.''}  
    
    \hspace{35mm} --- Adam Smith, \emph{The Theory of Moral Sentiments}, 1759. 
\end{quote}

The ``invisible hand'' metaphor by Adam Smith illustrates how properly designed incentive structures can guide self-interested individuals to inadvertently promote the greater social good. 
This concept is increasingly relevant in the realm of machine learning, as the scale of applications expands and the conflict of economic interests intensifies.
For example, an Internet platform wants to estimate the ad revenues from serving different types of content, but it is up to the creators to decide what content to produce. While the platform seeks high-quality content to  boost its long-term growth, creators may opt to minimize their production costs.
This misalignment has prompted platforms to implement revenue-sharing models, fueling the growth of the \emph{creator economy}, projected to exceed half a trillion by 2027~\cite{bhargava2022creator, creatoreconomy, florida2022rise, goldman}. 
However, current incentive models are inadequate, especially in light of their roles in exacerbating the proliferation of clickbait and misinformation online~\cite{yao2023bad,yao2024rethinking,immorlica2024clickbait}.
Moreover, this issue of misalignment extends well beyond content platforms. E-commerce sites rely on sufficient consumers experimenting with new products for accurate preference assessments. Gig platforms depend on freelance workers accepting tasks to gather essential operational data. Even recommender systems are paying users for their engagement in order to effectively optimize their algorithms~\cite{tiktok}.
In these cases, the learner's hands are tied, and decision-makers interacting with the environment have their own objectives, dooming the system to under-exploration regardless of the learner's objective.
Hence, there is a pressing need to pursue formal treatments of incentive alignment problems between the learners and decision-makers and to design principled learning algorithms with statistical and computational efficiency guarantees.

\noindent\textbf{Contributions. }
On the conceptual side, the presence of self-interested decision-makers challenges our common assumption in online learning, where a single learner controls all the interactions with the environment.
This paper introduces the contractual reinforcement learning (RL) problem in the principal-agent Markov decision process (PAMDP), where we adopt the principal-agent model from contract theory~\cite{grossman1992analysis,dutting2019simple} to capture strategic interactions between the learner and decision-maker.
As illustrated in Figure~\ref{fig:pamdp}, the learner (henceforth, principal/she) collects the rewards from the actions of decision-maker (henceforth, agent/he). Without any incentive design, the agent simply optimizes his policy in a standard Markov decision process (MDP) based on his cost function. However, since the agent's optimal policy is not necessarily in the principal's best interest, the principal is motivated to properly incentivize the agent to act in her favor by designing contracts that specify the payment rules contingent on the realization of the next state.
The core challenge in this design problem is the information asymmetry at two levels: (1) the principal cannot observe the agent's action a priori and has to condition her payment on the probabilistic outcome of the action --- a phenomenon known as the \emph{moral hazard} in economics; (2) the agent is far-sighted that he is willing to take suboptimal actions at one step in order to reach a more favorable state in future steps --- a major barrier for theoretical analysis in multi-agent learning problems.

\vspace{-5pt}
\begin{figure}[tbh]
    \centering
    \includegraphics[width=0.6\linewidth]{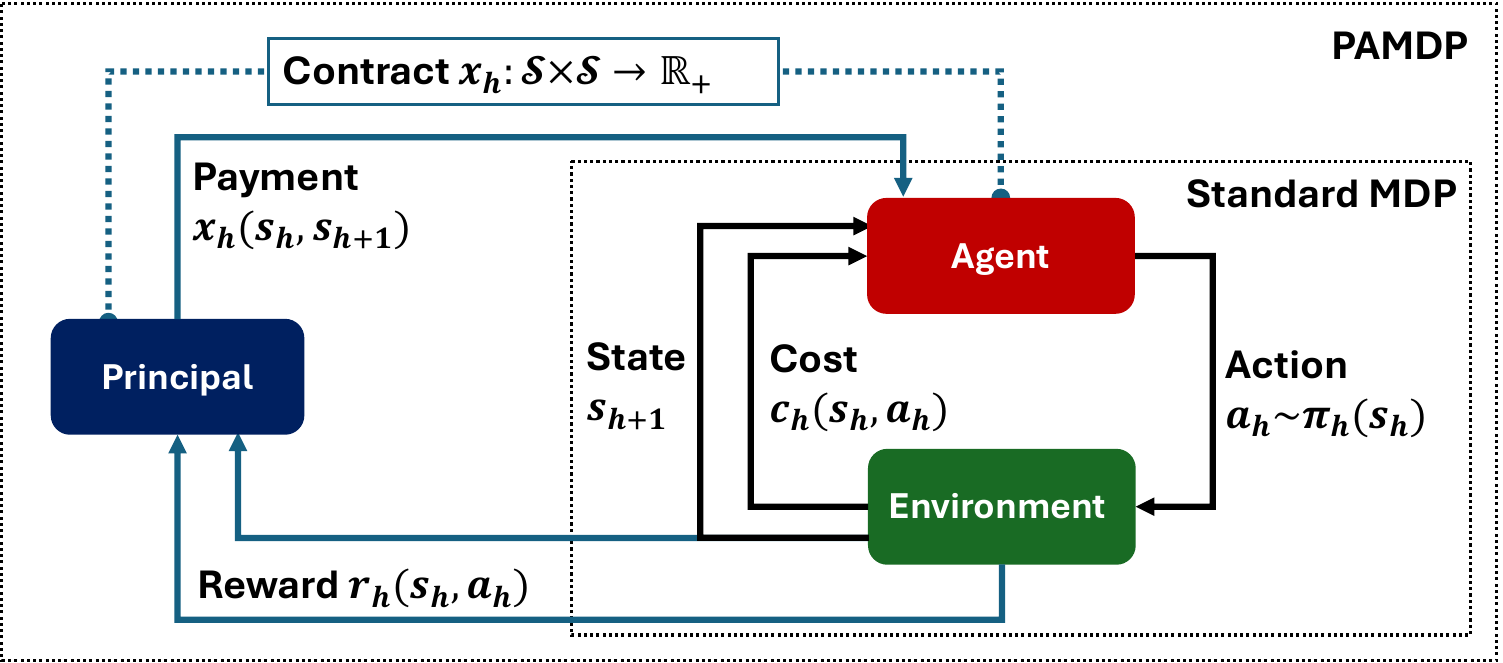}
    \caption{An illustration of contractual RL in the PAMDP.}
    \label{fig:pamdp}
\end{figure}
\vspace{-5pt}

On the technical side, this paper provides a comprehensive solution framework to address the unique learning and computational challenges when  moral hazard meets far-sighted agency in contractual RL problems. 
In Section~\ref{sec:formal-model}, we define state value functions for both the agent and principal, from which we derive a new class of Bellman equations to characterize the intricate correspondence between the principal and agent's optimal policy. This leads to our Theorem~\ref{prop:bellman-opt}, which shows that the principal's optimal planning problem can be solved by a clean formulation of dynamic programming in polynomial time.
The learning problem is more involved, so we begin with the contractual bandit learning problem (episode length $H=1$) in Section~\ref{sec:bandits} to focus on the challenges from moral hazard. In particular, to achieve low regret, the principal's learning algorithm must balance exploration and exploitation while continuously improving its estimation of the agent's preferences to determine cost-efficient contracts. In Theorem~\ref{thm:decoupling}, we construct a generic algorithm that reduces the learning problem into a standard online learning problem and an efficient search problem for the agent's decision boundary. As a consequence, we are able to obtain sublinear regret guarantee under different setups, summarized in Table~\ref{tab:complexity}. The efficient search algorithm we designed for learning the outcome distribution difference in the simplex may be of interest for general use.
With these insights, we delve into the full contractual RL problem in Section~\ref{sec:mdp} and show a provably efficient learning algorithm under several technical assumptions in Theorem~\ref{thm:contractual-rl}. Meanwhile, the general result highlights a trade-off between statistical and computational tractability, leaving an intriguing open question on the existence of the best-of-both-worlds solution. The complexity of search  is in logarithmic order yet with a large constant in the Markovian setup, and we expect an improved analysis by organically combining the search and exploration in the algorithm design.

\begin{table}[tbh]
    \caption{Regrets in Contractual RL, $\tilde{O}$ omits logarithmic terms and other problem-specific constants}
    \label{tab:complexity}
    \centering
    \begin{tabular}{cccl} \toprule
         Moral Hazard & Far-sighted Agency & Known Cost & Regret \\ \midrule
         \xmark & \xmark & \xmark & $\tilde{O}(T^{1/2})$, Corollary~\ref{coro:mab}\\ \midrule
       \cmark & \xmark & \xmark & $\tilde{O}({T}^{2/3})$, Corollary~\ref{coro:general-contractual} \\ \midrule
        \cmark & \xmark & \cmark  & $\tilde{O}({T}^{1/2})$, Corollary~\ref{coro:general-contractual-known-cost} \\ \midrule
      \cmark & \cmark & \cmark & $\tilde{O}({T}^{1/2})$, Theorem~\ref{thm:contractual-rl} \\ \bottomrule
    \end{tabular}
\end{table}

\noindent\textbf{Related Work. }
Our problem is built upon the principal-agent model in contract theory, a crucial branch of economics \cite{grossman1992analysis, smith2004contract, laffont2009theory}. 
Driven by an accelerating trend of contract-based markets deployed to Internet-based applications, the contract design problem recently started to receive a surging interest especially from the computer science community \cite{dutting2019simple, guruganesh2021contracts, alon2021contracts, castiglioni2022designing}. 
The principal-agent model has been also applied for the delegation of online search problems~\cite{bechtel2022delegated, kleinberg2018delegated} and machine learning tasks~\cite{saig2024delegated}.
While these works focus on the computational aspects of contract design, we consider the adaptive design problem of the contract between learners and decision-makers in an initially unknown environment. 
For our learning problem, the dynamic (contextual) pricing problem \cite{kleinberg2003value, mao2018contextual, shah2019semi, lobel2018multidimensional, leme2018contextual} can be viewed as one of its special cases, where the contract is contingent on the agent's binary action and the principal already knows her reward function. As we will see in Section~\ref{sec:bandits}, our algorithm is able to borrow some design insights from these pricing problems.
Meanwhile, the online contract design problem begins as a variant of dynamic pricing~\cite{kleinberg2003value} where the agent's cost is stochastic (or adversarially) chosen, and regret bound is $\Theta(\sqrt{T})$ (or $\Theta(T^{2/3})$ in adversarial setup). 
\citet{ho2016adaptive, zhu2022sample} consider a generalized model where the agent has multiple actions, both the cost and reward of his actions are determined by the agent's Bayesian type that are unknown to the learner. This problem relates to the continuum-armed bandit problem~\cite{agrawal1995continuum}, except the principal's utility is not continuous, and \citet{zhu2022sample} shows an almost tight linear regret bound $\tilde{\Theta}(T^{1-K/|\cS|})$ for some constant $K$ and the number of outcomes $|\cS|$. 
In comparison, our learning problem is closer to the standard contract design model, in which the agent type is observable by the principal (captured by the initial state or context), as many platforms hold a good amount of data on their users and content creators. More importantly, this modeling choice allows us to focus on solving the key challenges of learning and planning the optimal contract under moral hazard, where we are able to achieve $\tilde{O}(\sqrt{T})$ regret for a large class of problems and $\tilde{O}(T^{2/3})$ in general under mild assumptions.  
Lastly, several recent works~\cite{dogan2023repeated, dogan2023estimating, scheid2024incentivized} consider the simple special case of our problem, where there is no Markov state transition and principal can directly incentivize the agent to take certain action without the barrier of moral hazard. 
We defer further discussion of the related work to Appendix~\ref{sec:all-related-work}.

\section{Problem Formulation}
\label{sec:formal-model}

\subsection{The Principal-Agent Markov Decision Process}
Let us first recall the standard reinforcement learning problem in a (finite-horizon) Markov decision process $(\cA, \cS, \{P_h, r_h\}_{h=1}^{H}, P_0)$, where we have the agent's action space $\cA$, the environment's state space $\cS$, the transition kernel $P_h: \cS\times \cA \to \Delta(\cS)$, the expected reward function $r_h: \cS \times \cA \to [0,1]$, the initial state distribution $P_0 \in \Delta(\cS)$ and the horizon length $H$. The contractual reinforcement learning problem simply extends the MDP to a principal-agent Markov decision process $(\cA, \cS, \{P_h, r_h, c_h\}_{h=1}^{H}, P_0)$ with the additional cost function $c_h: \cS \times \cA \to [0,1]$.~\footnote{The $[0,1]$ scale of the cost and reward function range is without loss of generality, due to constant shifting and rescaling, thereby covers existing models~\cite{dogan2023repeated, dogan2023estimating, scheid2024incentivized} that assume a positive reward function for the agent.} In this process, the agent interacts with the environment by taking actions and bearing the costs, whereas the principal receives the reward from the environment. Unable to directly interact with the environment, the principal has to instead design and implement contracts to incentivize the agent to take actions in her interest. Below, we formalize the design of their policies.

Following from a standard MDP, the agent's \emph{action policy} $\bpi = \{ \pi_h: \cS\to \Delta(\cA) \}_{h=1}^{H}$ specifies that at each step $h$, given the state $s$, the agent would take the action $a\sim\pi_h^{\bx}(s)$. 
In the following subsection, we will discuss how the agent chooses his action policy and that it suffices to only consider deterministic action policies.
Meanwhile, the principal's \emph{contract policy} $\bx = \{ x_h: \cS \times \cS \to \RR_{+} \}_{h=1}^{H} $ is a sequence of non-liable payment rules $x_h$, where $x_h(s_h, s_{h+1})$ specifies the payment to the agent if the next state $s_{h+1}$ is realized, given the current state $s_h$ at the $h$-th step. The non-liability constraint ensures that the principal's payment in the contract for any realization of the next state must be non-negative; the problem would otherwise degenerate with an trivially optimal solution for the principal (see e.g., \cite{dutting2019simple}). Denote $\Pi, \cX$ as the agent and principal's policy space, respectively. Let $|\cS|=S, |\cA|=A$ and thus $|\Pi|=(SA)^{H}$.

The typical setting of the PAMDP problems can be summarized by the following steps. 
In the beginning of each episode, the initial state $s_1 \sim P_0$ is realized and observed by both the principal and the agent. Afterwards, the principal commits to a contract policy $\bx$ and the agent accordingly chooses an action policy $\bpi$. Their interactions then proceed as follows at each step $h \in [H]$,
\vspace{-15pt}
\begin{center}
\fcolorbox{black}{gray!20!white}{
\parbox{0.98\linewidth}{
\begin{enumerate}[leftmargin=*]
    \item The agent takes an action $a_h \sim \pi_h(s_h)$ and bears the cost $c_h(s_h, a_h)$.
    \item The next state $s_{h+1} \sim P_h(s_h, a_h)$ is realized and observed by both the principal and agent.
    \item The principal receives a noisy reward $\iota_h( s_{h}, s_{h+1})$ and pays the agent $x_h( s_{h}, s_{h+1} )$.
    \item The principal observes the agent's action $a_h$.
\end{enumerate}
}}
\end{center}
\vspace{-5pt}
In this step, the principal's utility is $\iota_h( s_{h},  s_{h+1}) - x_h(s_h, s_{h+1})$, her reward minus the payment to agent, whereas the agent's utility is $x_h( s_{h}, s_{h+1} ) - c_h(s_h, a_h)$, the payment from principal minus his cost. The reward noise has zero mean such that $ r_h(s, a) = \Ex_{s'\sim P_h(s, a)} \iota_h(s, s'), \forall s\in\cS,a\in\cA$.
We refer the readers to Appendix~\ref{sec:rl-sketches} for a summary of notations and Appendix~\ref{sec:modeling-choice} for a full discussion of our modeling choices.

\subsection{The Optimal Contract Policy} 

Without any contract design, the model reduces to a standard MDP $(\cA, \cS, \{P_h, c_h\}_{h=1}^{H}, P_0)$ for the agent and the principal passively collects the reward from the agent's policy. This outcome could be suboptimal for both the principal and agent. Instead, by reshaping the agent's reward environment through the design of contract policy, the principal could induce the agent adopt some action policy with higher social surplus. This motivates the problem of designing the optimal contract policy. We focus on a realistic yet challenging setup in the face of a long-lived, far-sighted and Bayesian rational agent who is also planning optimally for his cumulative reward --- we expect the case of myopic agents can be worked out with simpler approach. In particular, since the agent's utility is not necessarily $0$ under the principal's optimal contract at any state due to moral hazard, a far-sighted agent could take certain actions that are sub-optimal in the current step, yet secure him toward certain future states where he can obtain higher cumulative utility.

We extend  notions of value functions and optimal policies from MDP to PAMDP. 
Under any action policy $\bpi$ and contract policy $\bx$,
we define the principal's state value function  at the $h$-th step as, 
$$
 V_h^{\bx, \bpi}(s) := \Ex \big[ \sum_{\tau=h}^{H} r_\tau(s_\tau, a_\tau )   - x_\tau(s_\tau, s_{\tau+1}) \big| \{ \pi_\tau \}_{\tau=h}^{H}, s_\tau = s \big],
$$
and the agent's state value function at the $h$-th step as, 
$$
 U_h^{\bx, \bpi}(s) := \Ex \big[ \sum_{\tau=h}^{H} x_\tau(s_\tau, s_{\tau+1}) - c_\tau(s_\tau, a_\tau ) \big| \{ \pi_\tau \}_{\tau=h}^{H}, s_\tau = s \big],
$$
where the expectation in both $V, U$ are with respect to the randomness of the trajectory (due to the stochasticity of state transitions and action policy). 
Let $V^{\bx, \bpi} := \Ex_{s \sim P_0} V_1^{\bx, \bpi}(s)$ and $U^{\bx, \bpi} := \Ex_{s \sim P_0} U_1^{\bx, \bpi}(s)$.
The principal's goal is to maximize her value $V^{\bx, \bpi}$, given the agent's optimal response $\bpi$, which equivalently maximizes $V_1^{\bx, \bpi}(s)$ at any initial state $s$ with $P_0(s)>0$. 
Hence, we define the principal's optimal contract policy $\bx^* = \{ x^*_h \}_{h=1}^{H}$ and the corresponding optimal value function $V^*$ as the optimal solution and value of the following bi-level optimization problem,~\footnote{Throughout this paper, we assume the agent breaks tie in favor of the principal. This is without loss of generality in generic games, since the principal can force the tie-breaking by making an infinitesimally small additional payment to the action of her interest.}
\begin{equation}\label{eq:principal-optimal-contract}
    V^*, \bx^* := \maxarg_{\bx \in \cX} V^{\bx, \bpi^{\bx}}\quad\text{s.t.}\quad \bpi^{\bx} = \argmax_{\bpi \in \Pi} U^{\bx, \bpi},
\end{equation}
where  ``$\maxarg $'' is a convenient operator notation on an optimization problem that returns the optimal objective value followed by its optimal solution. 
For notational convenience, we will denote the agent's optimal action policy in response to contract policy $\bpi$ as $\bpi^{\bx} = \argmax_{\bpi \in \Pi} U^{\bx, \bpi}$, and use shorthands $ V_h^{\bx} := V_h^{\bx, \bpi^{\bx}}, U_h^{\bx} := U_h^{\bx, \bpi^{\bx}} $  for the principal's and agent's value function under contract policy $\bx$ at the $h$-th step given that the agent responds optimally.
Meanwhile, we denote $\bx^{\bpi} = \argmax_{\bx \in \cX} V^{\bx, \bpi} \text{ s.t. } \bpi  = \argmax_{\bpi \in \Pi} U^{\bx, \bpi}$ as the principal's optimal contract policy to induce the agent's action policy $\bpi$. We use similar shorthands $ V_h^{\bpi} := V_h^{\bx^{\bpi}, \bpi}, U_h^{\bx} := U_h^{\bx^{\bpi}, \bpi} $  for the principal's and agent's value function under contract policy $\bx^{\bpi}$ at the $h$-th step given that the agent responds optimally.
Notably, since the optimization problem~\eqref{eq:principal-optimal-contract} hinges on the intricate correspondence between $\bx$ and $\bpi$, it is unclear for now if the principal can efficiently plan his optimal policy adopting the standard approach in MDP.

\noindent\textbf{Solving for the Agent's Optimal Policy. }
One key observation is that the correspondence between $\bpi$ and $\bx$ has a clean characterization through the Bellman equation. 
Specifically, both functions $\{\pi_h^{\bx}, U_h^{\bx}\}_{h=1}^{H}$ can be solved through backward induction with $U^{\bx}_{H+1}(s) = 0$: 
\begin{equation}\label{eq:agent-optimal-bellman}
 \textup{given $U^{\bx}_{h+1}$,}\qquad  U^{\bx}_h(s), \pi^{\bx}_h(s)  = \maxarg_{a \in \cA} P_h(s, a) \cdot [x_h(s) + U^{\bx}_{h+1}] - c_h(s,a).
\end{equation}
Notice that since $\pi^{\bx}_h(s)$ is a maximizer of a linear function, the agent's best responding policy $\pi_h$ is deterministic without loss of generality. 
With $\bpi^{\bx}$, the principal's value function under $\bx$ can also be computed iteratively from $V^{\bx}_{H+1}(s)=0$: 
\begin{equation}\label{eq:principal-expected-value}
 \textup{given $V^{\bx}_{h+1}$,}\qquad   V_h^{\bx}(s) = r_h(s, \pi_h^{\bx}(s) ) + P_h(s,  \pi_h^{\bx}(s) ) \cdot [   V^{\bx}_{h+1} - x_h(s)].
\end{equation}
Due to the space limit, we only solve the agent's best response $\bpi^{\bx}$ at any given $\bx$. We refer the reader to Appendix~\ref{sec:value-iterations} for the more involved formulation to solve the optimal policy $\bx^{\bpi}$ for any given $\bpi$.

\noindent\textbf{Value Decomposition. }
Another key observation is that the value functions can  decomposed into  parts that are only depends on the agent's action policy. 
This is analogous to the standard contract design where principal's and agent's utility sums up to the social surplus, i.e., the difference between the reward and cost of the agent's  action.
Here, let the principal's expected reward and agent's expected cost function in the $h$-th step be 
$$ R_h^{\bpi}(s) := \Ex \big[\sum_{\tau=h}^{H} r_\tau(s_\tau, a_\tau ) \big| \{ \pi_\tau \}_{\tau=h}^{H}, s_\tau = s \big],\  
 C_h^{\bpi}(s) :=  \Ex \big[\sum_{\tau=h}^{H} c_\tau(s_\tau, a_\tau )\big| \{ \pi_\tau \}_{\tau=h}^{H}, s_h = s \big].
$$
By linearity of expectation, for any policy $\bx\in \cX, \bpi\in\Pi$, at any state $s$ of any step $h$, we have
$$
V_h^{\bx, \bpi}(s) = R_h^{\bpi}(s) - C_h^{\bpi}(s) - U_h^{\bx, \bpi}(s). 
$$
Both functions $R, C$ are fixed to the agent's action policy $\bpi$, regardless of the contract policy $\bx$. In addition, $C_h^{\bpi}(s) - U_h^{\bx, \bpi}(s)$ captures the total amount of expected payment transferred from the principal to the agent since $h$-th step at state $s$.
Since the total reward is fixed under any given $\bpi$, the principal's value is maximized under the minimal total payment, $\zeta_h^{\bpi}(s) := C_h^{\bpi}(s) + U_h^{\bpi}(s)$. The function $\zeta_h^{\bpi}$ thus serves as the equivalent optimization objective in the least-payment Bellman equation in Appendix~\ref{sec:value-iterations}.

\noindent\textbf{Solving for the Optimal Contract Policy. }
With the two observations above, it is clear that the principal's the optimal value and policy $V^* = \max_{\bpi\in \Pi} V^{\bpi} $ can be determined by computing $V^{\bpi}$ for every $\bpi$, according to the least-payment bellman equation in Appendix~\ref{sec:value-iterations}. 
However, this maximization problem is still intractable, as there are exponentially many possible $\bpi$. 
Instead, we have to interleave the process of solving for the optimal policy with least payment and maximum reward. This enables the following construction of a bi-level backward induction that iteratively solves for the optimal contract policy $\bx^*$. 

\begin{theorem}[Bellman Equations of PAMDP]\label{prop:bellman-opt}
The optimal contract policy can solved by dynamic programming in polynomial time, from $h=H$ to $1$ with $U^{\bx}_{H+1}(s), V^{\bx}_{H+1}(s) = 0, \forall s\in \cS, a\in \cA$,
\begin{equation}
    \label{eq:principal-optimal-bellman}
\begin{aligned}
    W^*_h(s, a; x) &= P_h(s, a) \cdot [x 
+ U^*_{h+1} ] - c_h(s,a),   \\
    x^{*}_h(s;a) &= \argmin_{x: \cS \to \RR_{+} }  \left\{ P_h(s, a) \cdot x \mid  W^*_h(s, a; x)  \geq W^*_h(s, a'; x), \forall a'\in \cA  \right\},   \\
    Q^*_h(s, a) &=  r_h(s, a) + P_h(s, a) \cdot [ V^*_{h+1} - x^*_h(s; a)  ],  \\
    V_h^*(s),  \pi^*_h(s) &= \maxarg_{a \in \cA} Q^*_h(s, a), \ x^*_h(s) =  x^*_h(s; \pi^*_h(s)), \
    U_h^*(s) = W^*_h(s, \pi^*_h(s); x^*_h(s)  ),
\end{aligned}
\end{equation}
\end{theorem}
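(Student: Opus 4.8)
The plan is to prove the theorem by backward induction on the step $h$, showing that the recursion in \eqref{eq:principal-optimal-bellman} computes, at every state $s$, both the principal's optimal continuation value $V^*_h(s) = \max_{\bx} V_h^{\bx}(s)$ (the maximum over all contract policies on steps $h,\dots,H$ with the agent best-responding) and the agent's corresponding value $U^*_h(s)$, together with the per-step maps $\pi^*_h, x^*_h$. The base case $h=H+1$ is immediate from $U^{\bx}_{H+1}=V^{\bx}_{H+1}=0$. The backbone of the induction is the \emph{optimal substructure} of the bi-level program \eqref{eq:principal-optimal-contract}: because the agent solves his own Bellman equation \eqref{eq:agent-optimal-bellman} by backward induction, his continuation policy and values $\{U^{\bx}_\tau\}_{\tau\ge h+1}$ depend only on the contracts $\{x_\tau\}_{\tau\ge h+1}$ and not on $x_h$. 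Hence the continuation subproblem from step $h+1$ can be optimized state-by-state independently of the step-$h$ contract, which is exactly what licenses a backward recursion.

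For the inductive step I would argue the two directions separately. \textbf{Achievability.} Fixing the optimal continuation $\{x^*_\tau\}_{\tau\ge h+1}$ supplied by the induction hypothesis, I verify that $x^*_h(s)=x^*_h(s;\pi^*_h(s))$ makes $\pi^*_h(s)$ a best response: the constraint set defining $x^*_h(s;a)$ is precisely the incentive-compatibility condition that $a$ maximizes $W^*_h(s,a;x)=P_h(s,a)\cdot[x+U^*_{h+1}]-c_h(s,a)$, which is the right-hand side of \eqref{eq:agent-optimal-bellman}. Substituting this contract into \eqref{eq:agent-optimal-bellman} and \eqref{eq:principal-expected-value} and reading off the forward recursion reproduces exactly the expressions for $U^*_h(s)$ and $V^*_h(s)=Q^*_h(s,\pi^*_h(s))$, so these values are attained (invoking the paper's principal-favorable tie-breaking at the $\maxarg$). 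I also note the $\argmin$ is over a nonempty set — the action the agent takes under the zero contract is always implementable with $x=0$ — so that $\maxarg_{a}Q^*_h(s,a)$ is well defined, with non-implementable actions assigned payment $+\infty$ and hence $Q^*_h=-\infty$.

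\textbf{Optimality.} Here I take an arbitrary contract policy $\bx$ on steps $h,\dots,H$, let $a=\pi^{\bx}_h(s)$ be the induced action, and bound $V_h^{\bx}(s)$. Using the value decomposition $V_h^{\bx}=R_h^{\bpi^{\bx}}-C_h^{\bpi^{\bx}}-U_h^{\bx}$ — equivalently the one-step identity $V_h^{\bx}(s)=r_h(s,a)+P_h(s,a)\cdot[V^{\bx}_{h+1}-x_h(s)]$ — together with the induction hypothesis $V^{\bx}_{h+1}(s')\le V^*_{h+1}(s')$ for all $s'$, the claim $V_h^{\bx}(s)\le V^*_h(s)$ reduces to showing that the step-$h$ contract under $\bx$ cannot exploit its (possibly suboptimal) continuation to pay strictly less than $P_h(s,a)\cdot x^*_h(s;a)$ once the continuation loss $P_h(s,a)\cdot[V^*_{h+1}-V^{\bx}_{h+1}]$ is charged. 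I would prove this by a sensitivity/duality argument: writing the minimal step-$h$ payment as the linear program $\min\{P_h(s,a)\cdot x : x\ge 0,\ [P_h(s,a)-P_h(s,a')]\cdot(x+U_{h+1})\ge c_h(s,a)-c_h(s,a')\ \forall a'\}$, the dual-feasibility inequality $\sum_{a'}\lambda_{a'}[P_h(s,a)-P_h(s,a')]\le P_h(s,a)$ (componentwise) shows that raising the continuation agent value at any next state $s'$ relaxes the incentive constraints by an amount whose payment saving is at most the probability $P_h(s,a)$ puts on $s'$ per unit — exactly the rate at which raising that value costs the principal in continuation. Thus no continuation beats the principal-optimal one, yielding $V_h^{\bx}(s)\le Q^*_h(s,a)\le V^*_h(s)$.

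\textbf{Complexity and the main obstacle.} Finally I would count operations: the recursion runs over $H$ steps and, at each $(h,s)$, solves one linear program over $x\in\RR_{+}^{S}$ with $A$ incentive constraints for each of the $A$ candidate actions, followed by an $A$-way maximization, giving $\poly(H,S,A)$ total. The step I expect to be the crux is the optimality direction — disentangling the coupling created by the far-sighted agent, whereby the step-$h$ contract and the continuation values $U^*_{h+1}$ influence each other, and doing so while respecting non-liability $x\ge 0$. It is precisely this constraint that forbids the naive ``shift the contract by $U^{\bx}_{h+1}-U^*_{h+1}$'' construction and forces the duality argument above; the achievability direction and the complexity bound are routine by comparison.
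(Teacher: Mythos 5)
Your proposal follows the same backward-induction skeleton as the paper's proof in the appendix: the base case at $h=H$ is a static least-payment LP per action; the inductive step plugs $U^*_{h+1}$ into the incentive constraints (valid because the agent's continuation value is conditionally independent of the current action given the realized next state); and the complexity count ($S\cdot A$ LPs per step, each of polynomial size) is identical. Your achievability direction is exactly the paper's argument. The real difference is that the paper never carries out your optimality direction at all: its inductive step simply \emph{fixes} the optimal continuation $\{x^*_\tau\}_{\tau=h+1}^{H}$ and optimizes step $h$ against it, taking the optimal-substructure property for granted. You correctly identified that this is the crux.

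Unfortunately your duality argument does not close it. Dual feasibility does give the one-sided bound you state: raising $U_{h+1}(s')$ by $\delta$ lowers the minimal step-$h$ payment by at most $P_h(s,a)(s')\,\delta$. But the matching claim --- that raising the agent's continuation value \emph{costs} the principal at exactly that rate --- holds only along a fixed continuation action policy, where $V_{h+1}+U_{h+1}=R_{h+1}-C_{h+1}$ is constant. Chasing your inequalities through (e.g., via the shift $y=x_h(s)+U^{\bx}_{h+1}-U^*_{h+1}$ when it is nonnegative) yields $V^{\bx}_h(s)\le Q^*_h(s,a)+P_h(s,a)\cdot\big[(V^{\bx}_{h+1}+U^{\bx}_{h+1})-(V^*_{h+1}+U^*_{h+1})\big]$, so you additionally need the $V$-optimal continuation to maximize \emph{social surplus} state-by-state --- precisely what moral hazard breaks, since the principal often prefers a cheaper-to-incentivize, lower-surplus continuation action. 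And the inequality your induction needs can genuinely fail. Take $H=2$: at state $s'$, action $a_1$ has cost $0$ and success probability $0.5$, action $a_2$ has cost $0.4$ and success probability $0.95$, with reward $1$ on success; the recursion picks $a_1$ (giving $V^*_2(s')=0.5$, $U^*_2(s')=0$) over $a_2$ at its least payment $8/9$ on success (giving $V=19/180\approx 0.106$, $U=4/9\approx 0.444$, but surplus $0.55>0.5$). Upstream, let the agent reach $s'$ deterministically at cost $0.9$, with principal reward $0.45$ for that transition and a zero outside option. The dynamic program pays $x_1(s_0,s')=0.9$ and attains value $0.45+0.5-0.9=0.05$, whereas committing instead to the $a_2$-continuation and lowering $x_1(s_0,s')$ by $4/9$ leaves every incentive constraint intact (the agent's step-$1$ utility is unchanged because continuation rent substitutes one-for-one for current payment) and attains $0.45+19/180-(0.9-4/9)=0.1$, the gain being exactly $P(s')\cdot(0.55-0.5)$. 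So the step you flagged is not merely hard: the comparison your sensitivity argument relies on is false without further assumptions, no refinement of the LP duality bound alone can rescue it, and the paper's own proof avoids the issue only by never attempting this direction.
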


To interpret the Bellman equation above, $ x^*_h(s; a)$ denotes the contract with the least payment to induce the agent to take action $a$ at state $s$ in step $h$. Given that $\pi^*_h(s)$ is the best agent action for the principal to induce, the optimal contract at state $s$ in step $h$ can be determined as $x_h(s) = x_h(s; \pi^*_h(s) ) $. $Q_h^*(s, a), W_h^*(s, a;x)$ are respectively the principal's and agent's total expected utility from $h$-th step under policy $\{ x^*_\tau \}_{\tau=h+1}^{H}$ and $\{ \pi^*_\tau \}_{\tau=h+1}^{H}$, which can be viewed as their optimal state-action value function at $h$-th step, serving as the intermediate variable for the computation. See Appendix~\ref{sec:bellman-optimality} for the proof of correctness.

\subsection{The Contractual Reinforcement Learning Problem}
\label{sec:model-contractual-rl}
We now introduce the reinforcement learning problem in PAMDP, where the principal acts as the learner and seeks to adaptively improve its contract policy by interacting with the agent.
Following the online learning convention, we use the expected regret to evaluate the learning performance in $T$ episodes, 
$
\Reg(T) := \sum_{t=1}^{T} V^* - V^{\bx^t},
$
where $\bx^t$ is the principal's contract policy in the $t$-th episode.

This paper makes a few assumptions for the analysis of reinforcement learning problems. First, the far-sighted agent has perfect knowledge of his cost function and the state transition kernel $\{ P_h, c_h \}_{h\in [H]}$ such that he can always chooses the best response. This is realistic because in applications of our interest, agents are the experts (e.g., content creators, freelance workers, ride-sharing drivers) in the fields who has learnt about the environment sufficiently well  whereas principal as the system designer does not know. 
Second, the agent at time $t$ is assumed to best respond to $\bx^t$. This can be equivalently interpreted as the agent at each time $t$ showing up only once. This is motivated by the reality of Internet applications where each individual agent's participation only accounts for a negligible portion of the system's traffic hence has little influence over the entire system's learning policy, so the best response (regardless of the learning policy) is optimal for each individual.
Thirdly, we assume that the design space of contract is restricted to $\{ x_h: \cS\times \cS \to [0, \eta] \}$ at any step $h\in [H]$. This reflects the practical concern of  contract design under randomness: while contract with bounded payment may sacrifice the optimality, it regularizes the variance in the payment transfer and reduces the risk for both the principal and agents. Moreover, as long as the environment parameters have finite precision,  the parameter $\eta$ can be matched to the finite bit complexity of the optimal contract. Though this assumption is without loss of generality from a modeling perspective, we expect future work to develop tighter analysis techniques to relax the dependency on $\eta$. 
For other regularity assumptions necessary to obtain tractable complexity results, we defer to the technical sections.

\section{Warm-up: Solving the Contractual Bandit Learning Problem}
\label{sec:bandits}
In this section, we consider an important special case of the contractual reinforcement problem with $H=1$, which allows us to first focus on the learning challenge from moral hazard without the concern of far-sight agency.
We refer to this problem as the contractual bandit learning problem. Below, we first describe the contractual bandit learning problem with much simplified notations, since it suffices to omit the current state and the time step in the subscripts given that $H=1$.
We then showcase a generic analysis of the statistical complexity of contractual bandit learning problem.

\subsection{The Contractual Bandit Learning Problem}
In this setup, the agent's policy space is simply its action space $\cA$, i.e., the set of bandit arms. $P: \cA \to \Delta(\cS)$ specifies an outcome distribution for each action, where the outcome space $\cS$ could naturally capture the reward stochasticity of each arm in bandit learning problems.
The principal designs the contract $x: \cS \to \RR_{+}$, contingent on the outcome space $\cS$, to influence the agent's choice of action. The principal's reward and agent's cost are both function of the agent's action, $r, c: \cA \to [0,1]$.
We consider a contractual bandit learning problem with $T$ rounds.
In the beginning of each round $t$, the principal commits to a contract $x_t$ and interacts with the agent as follows:
\vspace{-15pt}
\begin{center}
\fcolorbox{black}{gray!20!white}{
\parbox{0.98\linewidth}{
\begin{enumerate}[leftmargin=*]
    \item The agent takes the action $a_t$. 
    \item The outcome $s_t \sim P(a_t)$ is realized and observed by both the principal and agent.
    \item The principal receives the noisy reward $\iota(s_t)$ and pays the agent $x_t(s_t)$. 
    \item The principal observes the agent's action $a_t$.  
\end{enumerate}
}}
\end{center}
\vspace{-5pt}
Here, the noisy reward function satisfies $\Ex_{s \sim P(a_t)} \iota(s)=  r(a_t)$, and we assume the agent's action always maximizes his expected utility, i.e., $a_t \in \argmax_{a\in \cA} \{ P(a)\cdot x_t - c(a) \}$.
To determine the principal's optimal contract, let us recall the notion of \emph{least payment} function from the general setup. We similarly define $\zeta: \cA \to \RR_{+}$ such that for any given action $a\in \cA$, it outputs the least amount of the expected payment necessary to induce $a$,
$\zeta(a)  := \min_{x \in \cX^a}   P(a) \cdot x,$
where $\cX^a = \{ x\in \cX : [P(a) - P(a')] \cdot x  \geq c(a) - c(a'), \forall a' \neq a \}$ denotes the set of all contracts under which the agent would respond with action $a$. Hence, the principal can determine the optimal action to induce, $a^* = \max_{a\in \cA}  \sum_{t=1}^{T} [ r_t(a) - \zeta(a) ] $ with the optimal contract $x^* = \argmin_{x \in \cX^{a^*}}   P(a^*) \cdot x$. 
With the benchmark of the optimal contract $x^*$ that induces $a^*$ with the least payment $\zeta(a^*)$, we can measure the learning performance in $T$ rounds with the expected regret as follows,
$ \Reg(T) 
 = \max_{a^*\in \cA}  \sum_{t=1}^{T} \big[ r(a^*) - \zeta(a^*) \big] - \sum_{t=1}^{T} \big[ r(a_t) - P(a_t) \cdot x_t \big].$
This problem is a strict generalization of standard online learning, as it degenerates to the standard notion of regret when $\zeta(a)=0, \forall a\in \cA$. 
However, with the additional $\zeta$ function, the no-regret learner must not only obtain good estimation of both $r$ and $\zeta$ towards the optimal action, but also implement the contracts that induce the optimal action and have expected payment approaching towards $\zeta$.

\noindent\textbf{A Simpler Case with Direct Incentives. } 
We remark that a special case of the contractual bandit learning problem assumes the principal is able to design her contract contingent on the agent's action. This enables the principal to implement any payment rule $x: \cA \to \RR_{+}$, and the agent responds with his optimal action $a^* = \argmax_{a\in \cA} x(a) - c(a)  $.
With this relaxation, $\zeta = c$, since the optimal $x$ to induce any action $a$ is to set a direct incentive with $x(a)=c(a), x(a')=0, \forall a'\neq a$. The expected regret reduces to
$ \Reg(T) = \max_{a^*\in \cA} \sum_{t=1}^{T} \big[ r(a^*) - c(a^*) \big] - \sum_{t=1}^{T} \big[ r(a_t) - x_t(a_t) \big].$
As we will see in this paper, the learning problem becomes more tractable in this setup, since the principal can directly learn the cost function $c$ to determine the least payment to induce each action. 
In Appendix~\ref{sec:mab-proof} and~\ref{sec:linear-bandit-proof}  we showcases the multi-armed bandits and linear bandits under direct incentives, both of which have been recently studied by \citet{scheid2024incentivized}.

\subsection{A Generic Approach to Contractual Bandit Learning}
\label{sec:generic}

We begin with a natural assumption that enable us to simply employ existing techniques in online learning to obtain tractable complexity results for a large class of contractual bandit learning problems.

\begin{assumption}[$\lambda$-Inducibility] \label{assum:inducibility} 
For any action $a \in \cA$, there exists an event $e \in \{0,1\}^{S}$ as a distribution of outcomes such that $ [ P(a) - P(a') ] \cdot e \geq \lambda, \forall a' \neq a$.
\end{assumption}

This assumption ensures the regularity of the problem instance in the  sense that each action is dominantly capable of inducing a set of outcomes over others such that for any cost function $c$ and any action $a$, there exists a contract $x$ to induce $a$. To see this, one can explicitly construct such contract as $x = e \max_{a'} \frac{c(a)-c(a')}{\lambda}$, where $e$ is the event such that $ [ P(a) - P(a') ] \cdot e \geq \lambda, \forall a' \neq a$.  Otherwise, if $\lambda \leq 0$, then there could be some action that is never the agent's best response under any contract.

We now propose a generic approach to design statistically efficient algorithm for contractual bandit learning problem. 
The key idea of our approach is to decouple the learning of the contract from the learning of the optimal action. In particular, let us first assume an oracle in Definition~\ref{def:contract-oracle} that is able to construct a robust contract set for each action $a \in \cA$, despite the uncertainty in parameter estimation. We use the robust contract set to determine the optimistic action and eventually learn the optimal action with no regret. This enables us to decouple the sample complexity result  into the estimation errors from optimal contract and the optimal action, according to Theorem \ref{thm:decoupling}.

\begin{definition}[$\varepsilon$-margin Contract Set]\label{def:contract-oracle}
    We define the $\varepsilon$-margin contract set for each action $a\in \cA$ as 
    $$ {\cX}^{a}(\varepsilon) = \{x\in \cX: [P(a)-P(a')]\cdot x \geq c(a)-c(a') + \varepsilon, \forall a \neq a'\}. $$
\end{definition}

\begin{theorem}
\label{thm:decoupling}
Under Assumption \ref{assum:inducibility}, with a $\varepsilon$-margin contract set for every action $a\in \cA$, there is a generic algorithm with regret $\tilde{O}(\eta\sqrt{T}+ T\varepsilon/\lambda)$ for the contractual bandit learning problems. 
\end{theorem}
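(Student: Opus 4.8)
The plan is to decouple the regret into two independent sources: the \emph{overpayment} incurred by insisting on a safety margin $\varepsilon$ in every contract, and the ordinary \emph{online-learning regret} of discovering which action is most profitable to induce. Write $\zeta^\varepsilon(a) := \min_{x\in\cX^a(\varepsilon)} P(a)\cdot x$ for the least expected payment achievable with an $\varepsilon$-margin contract. Since the margin only tightens the incentive constraints, $\cX^a(\varepsilon)\subseteq\cX^a$, so \emph{every} contract drawn from $\cX^a(\varepsilon)$ provably induces action $a$ strictly, making the agent's tie-breaking irrelevant.

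The central step, which I expect to be the main obstacle, is to show that buying this margin is cheap: $\zeta^\varepsilon(a)\le \zeta(a)+\varepsilon/\lambda$ for every $a$. I would prove this by an explicit perturbation along the inducibility direction. Let $x$ attain $\zeta(a)$ and let $e\in\{0,1\}^{S}$ be the event from Assumption~\ref{assum:inducibility}, so $[P(a)-P(a')]\cdot e\ge\lambda$ for all $a'\ne a$. Setting $x' = x + (\varepsilon/\lambda)\,e$, each incentive constraint improves by $(\varepsilon/\lambda)\,[P(a)-P(a')]\cdot e\ge\varepsilon$, hence $x'\in\cX^a(\varepsilon)$; meanwhile the expected payment grows by only $(\varepsilon/\lambda)\,P(a)\cdot e\le\varepsilon/\lambda$, since $e$ is a $0/1$ vector and $P(a)$ is a distribution. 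This is precisely where $\lambda$-inducibility earns its keep. The one technical wrinkle is checking that $x'$ stays within the admissible box $[0,\eta]^S$; this is guaranteed by the setup's choice of $\eta$ large enough to leave headroom beyond the optimal contract (indeed the canonical inducing contract $e\cdot\max_{a'}(c(a)-c(a')+\varepsilon)/\lambda$ already has entries at most $(1+\varepsilon)/\lambda$), and otherwise one perturbs only the unsaturated coordinates.

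With the lemma in hand the problem collapses to a standard stochastic bandit over $\cA$. Committing to the oracle's near-least-payment contract from $\cX^a(\varepsilon)$ makes the agent play $a$ and yields expected net utility $\tilde r(a) := r(a)-\zeta^\varepsilon(a)\in[-\eta,1]$, observed each round through the bounded signal $\iota(s_t)-x_t(s_t)$ whose conditional mean is $\tilde r(a_t)$. Let $\tilde V := \max_a \tilde r(a)$. Optimality of the surrogate maximizer together with the overpayment lemma yields the benchmark comparison $V^* = \max_a[r(a)-\zeta(a)]\le \max_a[r(a)-\zeta^\varepsilon(a)]+\varepsilon/\lambda = \tilde V+\varepsilon/\lambda$.

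Finally I would run any optimistic no-regret subroutine (e.g.\ UCB) on the $A$ surrogate arms. As each per-round signal lies in a range of width $O(1+\eta)$, the standard analysis gives $\sum_{t=1}^{T}(\tilde V-\tilde r(a_t)) = \tilde O((1+\eta)\sqrt{AT})=\tilde O(\eta\sqrt{T})$, absorbing $A$ and logarithmic factors into $\tilde O$. Combining with the benchmark gap then gives
\[
\Reg(T)=\sum_{t=1}^{T}\big(V^*-\tilde r(a_t)\big)\;\le\; \frac{T\varepsilon}{\lambda}+\sum_{t=1}^{T}\big(\tilde V-\tilde r(a_t)\big)\;=\;\tilde O\!\left(\eta\sqrt{T}+\frac{T\varepsilon}{\lambda}\right),
\]
as claimed. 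The unobserved cost $c$ enters only through the construction of $\cX^a(\varepsilon)$; treating that construction as the promised oracle, the margin $\varepsilon$ is exactly the slack that absorbs estimation error in $P$ and $c$ while still certifying the induced action, which is what keeps the two error sources cleanly separated.
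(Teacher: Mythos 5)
Your perturbation lemma ($x' = x+(\varepsilon/\lambda)e$ lies in $\cX^a(\varepsilon)$ and costs at most $\varepsilon/\lambda$ extra in expected payment) is exactly the paper's key step, Equation \eqref{eq:payment-upper-bound} inside Lemma \ref{lm:robust-contract-loss}, and your benchmark comparison $V^*\le \tilde V + \varepsilon/\lambda$ is sound. The gap is in the reduction that follows. You assert the learner can ``commit to the oracle's near-least-payment contract from $\cX^a(\varepsilon)$,'' collapsing the problem to a standard stochastic bandit with stationary arm means $\tilde r(a)=r(a)-\zeta^\varepsilon(a)$. But $\zeta^\varepsilon(a)=\min_{x\in\cX^a(\varepsilon)}P(a)\cdot x$ is defined through the unknown outcome distribution $P(a)$ --- the very object being learned --- and the theorem's hypothesis supplies only the \emph{set} $\cX^a(\varepsilon)$, not its $P$-minimizer. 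If instead you fix an arbitrary $x^a\in\cX^a(\varepsilon)$ per arm, the arm mean becomes $r(a)-P(a)\cdot x^a$, and the overpayment $P(a)\cdot x^a-\zeta(a)$ can be as large as $\eta$ (the margin set is a polytope that also contains wastefully generous contracts), so your surrogate benchmark can sit $\Theta(\eta)$ below $V^*$ and the reduction yields linear regret. The margin $\varepsilon$ absorbs estimation error in the \emph{constraints} (certifying which action is induced), not in the payment \emph{objective}; your closing remark conflates the two.

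The paper closes this hole by re-solving the payment LP every round under the empirical distribution, $x_t=\argmin_{x\in\cX^{a_t}(\varepsilon)}\hat P_t(a_t)\cdot x$, and proving in Lemma \ref{lm:robust-contract-loss} that both the executed payment $P(a)\cdot \hat x$ and its estimate $\hat\zeta_t(a)$ are within $\lambda^{-1}\varepsilon+\eta\epsilon_t(a)$ of $\zeta(a)$, where $\epsilon_t(a)$ is the $\ell_1$ confidence radius for $\hat P_t(a)$. This time-varying, $\eta$-scaled payment bias is then folded into the optimism bonus $(1+\eta)\epsilon_t(a)$ of Algorithm \ref{algo:black-box-ucb}, and the UCB analysis delivers $\tilde O(\eta\sqrt{T}+T\varepsilon/\lambda)$ --- note the $\eta\sqrt{T}$ term arises precisely from this payment-estimation error, not merely from the signal range as in your sketch. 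To repair your argument you would need essentially this machinery: either adaptive contract choices with the estimation error carried inside the surrogate confidence bounds (no longer a stationary bandit), or a separate phase guaranteeing $\norm{\hat P(a)-P(a)}_1=O(\varepsilon/\eta)$ for every $a$ before committing, neither of which appears in your write-up. (A minor secondary point: your claim that $x'$ stays in $[0,\eta]^S$ ``by headroom'' should be flagged as an assumption on $\eta$; the paper implicitly assumes the same.)
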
 

The key step of the proof is Lemma~\ref{lm:robust-contract-loss}, which shows the contracts solved from LP \eqref{eq:robust-contract-empirical} have bounded suboptimality from the least payment contract (both in estimation and in execution) depending on parameter estimation error $\epsilon$ and the robustness margin $\varepsilon$. This allows us to simply adopt an upper confidence bound argument to bound the regret. See Appendix~\ref{sec:generic-algo-proof} for the full proof and the construction of the generic algorithm. 
The rationale behind Theorem~\ref{thm:decoupling} is to separate the learning of the contract sets from the learning of the optimal action. In particular, the learning and construction procedure of such contract sets has been a well-established problem in variants of Stackelberg games~\cite{letchford2009learning, peng2019learning}. We abstract this problem into the design of a $\chi(\varepsilon)$-learning procedure defined below.

\begin{definition}[$\chi(\varepsilon)$-Learning Procedure] \label{def:chi-learning-procedure}
    For a $\chi(\varepsilon)$-learning procedure, after any $\chi(\varepsilon)$ number of rounds, it can construct a robust contract set $\hat{\cX}^{a}, \forall a \in \cA$ such that ${\cX}^{a}(\varepsilon) \subseteq \hat{{\cX}}^{a}  \subseteq {\cX}^{a}$.
\end{definition}

Based on the concept in Definition \ref{def:chi-learning-procedure}, an immediate implication of Theorem~\ref{thm:decoupling} is that if there is an ${O}(1/\varepsilon)$-learning procedure, a simple ``prepare-then-commit'' style algorithm can achieve $\tilde{O}(\sqrt{T})$ regret in the contractual bandit problem. %
That is, it first \emph{prepares} for a warm start by running the learning procedure for $T^{1/2}$ rounds to obtain the $O(T^{-1/2})$-margin contract sets, then \emph{commits} to follow Algorithm~\ref{algo:black-box-ucb} for the remaining $T-T^{1/2}$ rounds. 
Futhermore, using the standard doubling trick~\cite{besson2018doubling}, we can convert ``prepare-then-commit'' style algorithm into an anytime algorithm with the same $\tilde{O}(\sqrt{T})$ regret guaruntee that is agnostic to the time horizon $T$ during its construction.
Therefore, the difficulty of solving the contractual bandit learning problem hinges on the statistical efficiency of the learning procedure, which heavily depends on the problem structure.

\noindent\textbf{Solving Bandit Problems under Direct Incentives. }
As a direct application of Theorem~\ref{thm:decoupling}, we show that the \oraclearg[$O(1/\varepsilon)$] can be constructed for the two bandit problems under direct incentives and thus admits $O(\sqrt{T})$ regret online learning algorithm. 
The construction of the efficient search algorithm essentially relies on the binary search for the cost of each arm. 
In addition, the binary search algorithm can be generalized to cases with infinitely many arms. Such problem is known as the contextual search, and recent work~\cite{liu2021optimal} have established clean solutions with nearly optimal performance. We defer their detailed construction and proofs to Appendix~\ref{sec:mab-proof} and~\ref{sec:linear-bandit-proof}.

\begin{corollary}\label{coro:mab}
Multi-armed bandits and linear bandits under direct incentives have $\tilde{\Theta}(\sqrt{T})$ regret.
\end{corollary}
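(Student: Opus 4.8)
The plan is to prove the two bounds separately and let the reduction machinery do the heavy lifting. For the upper bound $\tilde{O}(\sqrt{T})$ I would invoke the ``prepare-then-commit'' consequence of Theorem~\ref{thm:decoupling}: it suffices to exhibit, for each of the two settings, a $\chi(\varepsilon)$-learning procedure (Definition~\ref{def:chi-learning-procedure}) with $\chi(\varepsilon)=O(1/\varepsilon)$ — and in fact I expect to obtain $\chi(\varepsilon)=O(\polylog(1/\varepsilon))$, which is far stronger than required. The key simplification under direct incentives is that the least payment to induce $a$ is the cost threshold $\zeta(a)=\max\{0,\,c(a)-\min_{a'\neq a}c(a')\}$ (so $\zeta=c$ once a zero-cost null action is present), and the corresponding least-payment contract simply pays $\zeta(a)$ on action $a$ and nothing otherwise. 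Hence building robust contract sets reduces to estimating the pairwise cost gaps $c(a)-c(a')=\zeta(a)-\zeta(a')$, for which the common baseline cancels.

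For multi-armed bandits, fix an arm $a$ and offer payment $p$ on $a$ and $0$ elsewhere; the best-response feedback ``agent plays $a$'' is monotone in $p$, switching exactly at $p=\zeta(a)$. A binary search over $p\in[0,\eta]$ therefore pins down each $\zeta(a)$ to accuracy $\rho$ in $O(\log(\eta/\rho))$ rounds, so running it over all arms with $\rho=\varepsilon/4$ produces estimates $\hat\zeta$ with $\norm{\hat\zeta-\zeta}_\infty\le \varepsilon/4$ in $O(A\log(1/\varepsilon))$ rounds. I would then define $\hat{\cX}^a$ from $\hat\zeta$ with an additive margin $\varepsilon/2$ on the cost-gap constraints and verify the sandwich $\cX^a(\varepsilon)\subseteq\hat{\cX}^a\subseteq\cX^a$ of Definition~\ref{def:chi-learning-procedure} by a short triangle-inequality argument (the gap estimates err by at most $\varepsilon/2$, and $\varepsilon/2+\varepsilon/2\le\varepsilon$). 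This is an $O(\log(1/\varepsilon))$-learning procedure; feeding $\varepsilon=\Theta(T^{-1/2})$ into Theorem~\ref{thm:decoupling} — where the prepare phase now costs only $O(\log T)$ rounds — gives regret $\tilde{O}(\eta\sqrt{T}+T\varepsilon/\lambda)=\tilde{O}(\sqrt{T})$.

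For linear bandits the arms carry features and $\zeta$ is linear in them, so per-arm search is infeasible and I would replace binary search by the contextual-search primitive of \citet{liu2021optimal}. The one-hot-payment feedback again reveals on which side of a moving linear threshold each query lies, and contextual search recovers the underlying parameter (hence all thresholds simultaneously) to accuracy $\varepsilon$ in $O(\polylog(1/\varepsilon))$ rounds; the same margin construction and sandwich verification then yield the required robust contract sets, and the identical reduction delivers $\tilde{O}(\sqrt{T})$.

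Finally, the matching $\tilde{\Omega}(\sqrt{T})$ follows by degeneration: taking all costs equal forces $\zeta\equiv 0$, so every action is inducible for free and the contractual problem collapses to ordinary stochastic multi-armed (resp.\ linear) bandit reward maximization, inheriting the classical $\Omega(\sqrt{AT})$ (resp.\ $\Omega(d\sqrt{T})$) lower bound; together with the upper bound this yields $\tilde{\Theta}(\sqrt{T})$. The main obstacle I anticipate is not the bandit bookkeeping but getting the learning procedure exactly right — ensuring the best-response signal is genuinely monotone so that binary/contextual search converges, despite the relevant comparison being against the \emph{best} alternative action rather than a fixed reference, and translating the accuracy guarantee of contextual search into the two-sided inclusion $\cX^a(\varepsilon)\subseteq\hat{\cX}^a\subseteq\cX^a$ with a correctly sized margin.
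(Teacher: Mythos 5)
Your proposal matches the paper's proof essentially step for step: Lemma~\ref{lm:binary-search} performs exactly your per-arm binary search on a one-hot payment contract (an $O(A\log(1/\varepsilon))$-learning procedure), Lemma~\ref{lm:contextual-search} invokes the contextual search of \citet{liu2021optimal} along $d$ linearly independent directions (an $O(d\log(1/\varepsilon))$-learning procedure), and both are plugged into the prepare-then-commit consequence of Theorem~\ref{thm:decoupling} with $\varepsilon=\Theta(T^{-1/2})$, with the matching lower bound inherited from classical bandits just as you argue. Your care about the switching point being $c(a)-\min_{a'\neq a}c(a')$ rather than $c(a)$ itself is if anything slightly more precise than the paper's write-up, which implicitly relies on the zero-cost idle action discussed in Appendix~\ref{sec:modeling-choice}.
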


\noindent\textbf{Solving Contractual Bandit Problems under Moral Hazard. }
The construction of efficient learning procedure is difficult in general contractual bandit learning.
We instead start with sufficient knowledge of $P$ to construct an \oraclearg[$O(1/\varepsilon)$] under the following assumption. This assumption is motivated by the practice, where the principal would ask the agent to provide a listing of desired conditions for him to perform different level of services. The search problem is otherwise known to have exponential sample complexity lower bound in Stackelberg games~\cite{peng2019learning}.

\begin{assumption}[Preliminary Contracts] \label{assum:prior-contract}
For any $a\in \cA$, the principal has the preliminary knowledge to construct an non-liable contract $x$ that induces the agent's action $a$ with constant payment.  
\end{assumption}

We defer the construction of this learning procedure and its proof to   Appendix~\ref{sec:generic-cost-search}.
As a result, we can construct an explore-then-commit style algorithm $O(T^{2/3})$ regret for general contractual bandit learning, as. Specifically, this algorithm induces the agent to take each action uniformly random for $T^{2/3}$ rounds under the Assumption \ref{assum:prior-contract}. Then, given that the outcome distribution is estimated with error up to $T^{-1/3}$, it can efficiently estimate the difference of cost up to error $T^{-1/3}$ and thus construct an $T^{-1/3}$-optimal contract to induce the optimal action $a^*$ in the remaining rounds.

\begin{corollary}\label{coro:general-contractual}
Under Assumption~\ref{assum:inducibility} and \ref{assum:prior-contract}, $\tilde{O}(T^{2/3})$ regret can be achieved for contractual bandit learning problems.    
\end{corollary}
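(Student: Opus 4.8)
\textbf{Proof Proposal for Corollary~\ref{coro:general-contractual}.}

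The plan is to invoke Theorem~\ref{thm:decoupling} and reduce the problem to exhibiting an $O(1/\varepsilon)$-learning procedure whose output satisfies Definition~\ref{def:chi-learning-procedure}. Under Assumption~\ref{assum:inducibility}, Theorem~\ref{thm:decoupling} gives regret $\tilde{O}(\eta\sqrt{T} + T\varepsilon/\lambda)$ once we have $\varepsilon$-margin contract sets. The $T^{2/3}$ rate will come from choosing the margin $\varepsilon$ as a function of the exploration budget so as to balance the estimation error of the outcome distributions $P$ and the cost differences against the per-round loss $T\varepsilon/\lambda$ incurred from committing to a slightly conservative contract.

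The concrete algorithm is explore-then-commit. In the exploration phase, using the preliminary contracts guaranteed by Assumption~\ref{assum:prior-contract}, I would induce each action $a\in\cA$ to be played roughly $T^{2/3}/A$ times (the constant-payment preliminary contract makes each action inducible with bounded per-round loss, so the total exploration cost is $\tilde{O}(T^{2/3})$). Each time action $a$ is played, we observe the realized outcome $s\sim P(a)$, so after $\Theta(T^{2/3}/A)$ pulls of each arm, standard concentration (e.g.\ an $\ell_\infty$ or simplex-distance empirical-distribution bound) yields $\|\hat P(a)-P(a)\|\le \tilde{O}(T^{-1/3})$ for every $a$ simultaneously. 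The key intermediate step is then recovering the \emph{cost differences}: the margin-set constraints $[P(a)-P(a')]\cdot x \ge c(a)-c(a')$ only ever involve $c$ through differences, and once each preliminary contract $x$ inducing $a$ is played we can read off which inequalities are (nearly) tight at the agent's best response, letting us estimate $c(a)-c(a')$ up to $\tilde{O}(T^{-1/3})$ from the estimated $\hat P$ and the observed best-response boundary. Plugging these estimates into the LP defining the margin set and inflating the right-hand side by a slack of order $\varepsilon = \tilde{O}(T^{-1/3})$ produces sets $\hat{\cX}^a$ that, with high probability, satisfy $\cX^a(\varepsilon)\subseteq\hat{\cX}^a\subseteq\cX^a$, which is exactly the guarantee required of an $O(1/\varepsilon)$-learning procedure. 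In the commit phase we follow the UCB procedure of Theorem~\ref{thm:decoupling} for the remaining $T-T^{2/3}$ rounds.

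For the regret accounting, the exploration phase contributes $\tilde{O}(T^{2/3})$ (every round costs at most $O(1)$ by the bounded-payment preliminary contracts), and the commit phase contributes $\tilde{O}(\eta\sqrt{T} + T\varepsilon/\lambda) = \tilde{O}(\eta\sqrt{T} + T^{2/3}/\lambda)$ by Theorem~\ref{thm:decoupling} with $\varepsilon=\tilde{O}(T^{-1/3})$. Summing, the dominant term is $\tilde{O}(T^{2/3})$, as claimed, treating $\eta$ and $\lambda$ as problem-specific constants absorbed by $\tilde{O}$.

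I expect the main obstacle to be the cost-difference recovery step: unlike the distribution $P(a)$, the cost $c(a)$ is never observed directly, only implicitly through which action the agent best-responds with under a given contract. The subtle point is that the error in $\hat P$ propagates into the estimated decision boundary, and one must argue that an $\tilde{O}(T^{-1/3})$ error in $P$ translates into only an $\tilde{O}(T^{-1/3})$ error in the recovered cost differences without blow-up---this is where $\lambda$-inducibility is essential, since it lower-bounds how sharply the best-response changes across the boundary and thus controls the conditioning of the inversion. Verifying that the inflated LP simultaneously satisfies both the inner inclusion $\cX^a(\varepsilon)\subseteq\hat{\cX}^a$ (feasibility is preserved, i.e.\ we have not over-constrained) and the outer inclusion $\hat{\cX}^a\subseteq\cX^a$ (every contract we certify genuinely induces $a$) under the joint high-probability event is the crux of the argument, and the details are deferred to Appendix~\ref{sec:generic-cost-search}.
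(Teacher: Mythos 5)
Your high-level skeleton is the paper's: explore-then-commit, with $T^{2/3}$ exploration rounds under the preliminary contracts of Assumption~\ref{assum:prior-contract} to get $\|\hat P(a)-P(a)\|_1 \le \tilde{O}(T^{-1/3})$ for all $a$, then Theorem~\ref{thm:decoupling} with margin $\varepsilon=\tilde{O}(T^{-1/3})$ to collect $\tilde{O}(\eta\sqrt{T}+T\varepsilon/\lambda)=\tilde{O}(T^{2/3})$ in the commit phase. However, there is a genuine gap at exactly the step you flag as the crux: cost-difference recovery. You claim that after playing the preliminary contracts one can ``read off which inequalities are (nearly) tight at the agent's best response.'' This is false as stated: a preliminary contract inducing $a$ may do so with arbitrarily large slack, so observing the agent's response under it yields only the one-sided bounds $c(a)-c(a')\le [P(a)-P(a')]\cdot x^a$ with unknown gap. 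No constraint need be near-tight, and nothing in your exploration phase produces contracts near the decision boundary, so the quantity $c(a)-c(a')$ is not identifiable to accuracy $T^{-1/3}$ from the data you collect. The paper closes this gap with an \emph{active} search (Algorithm~\ref{algo:bandit-search}): for each pair of inducing contracts $x^a, x^{a'}$, binary-search over $\alpha$ along the segment $\alpha x^a+(1-\alpha)x^{a'}$ to locate a switching point to precision $\epsilon$; at the exact boundary contract $x^0$ one has $[P(a)-P(a'')]\cdot x^0 = c(a)-c(a'')$, so evaluating the estimated difference at the near-boundary contract gives $|\hat d(a,a'')-d(a,a'')| \le 3\epsilon\eta$ (Lemma~\ref{lm:generic-cost-search}). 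This search costs only $O(|\cA|^2\log(|\cA|\eta/\varepsilon))$ extra rounds, so the $T^{2/3}$ rate survives, but it is an indispensable algorithmic ingredient your proposal omits.

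Two further points. First, the binary search between $x^a$ and $x^{a'}$ may hit a boundary with a \emph{third} action $a''$ rather than $a'$; the paper handles action pairs that share no boundary by chaining estimates along shortest paths in the boundary-adjacency graph (Dijkstra), which forces the search precision down to $\epsilon/|\cA|$ because errors accumulate along paths of length up to $|\cA|-2$. Your proposal does not address this, and without it the inner inclusion $\cX^a(\varepsilon)\subseteq\hat{\cX}^a$ can fail for non-adjacent pairs. Second, you attribute the conditioning of the cost inversion to $\lambda$-inducibility (Assumption~\ref{assum:inducibility}); in the paper's argument, $\lambda$ plays no role in the boundary-based estimator, whose error is controlled by $\eta$ and the search precision alone. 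Instead, $\lambda$ enters through Lemma~\ref{lm:robust-contract-loss}, converting the $\varepsilon$-margin into at most $\varepsilon/\lambda$ payment suboptimality, and through guaranteeing that every action is inducible at all. So the regret accounting in your final paragraph is right, but the mechanism supporting the learning procedure (and the place where each assumption is spent) needs to be repaired along the lines above.
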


This result reveals the core challenge of learning the optimal contract under  moral hazard. That is, constructing the contract to induce the optimal action, $[P(a^*) - P(a')] \cdot x  \geq c(a) - c(a'), \forall a' \neq a^*$ already requires a sufficiently good estimate of $P$ for all actions (including the suboptimal ones). 
This observation raises the question on whether it is possible to learn $P(a')$ without playing the costly sub-optimal action $a'$ --- the barrier to achieve $o(T^{2/3})$ regret. The answer turns out to be ``Yes'' but with some catches.
The solution is to implement a binary search procedure for contract $x$ near the hyperplane formed by the linear system $[P(a) - P(a')] \cdot x  = c(a) - c(a'), \forall a' \neq a$. We want to solve the parameters $c(a) - c(a')$ and $ P(a) - P(a'), \forall a'\neq a$ in the linear system with bounded errors using a number of contracts $x$ that almost satisfy the linear system.
This is however impossible unless knowing at least one set of parameters in the linear system to ensure it has full rank. 

\begin{corollary}\label{coro:general-contractual-known-cost}
Under Assumption~\ref{assum:inducibility} and with the knowledge of agent's cost, $\tilde{O}(T^{1/2})$ regret can be achieved for contractual bandit learning problems.    
\end{corollary}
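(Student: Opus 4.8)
The plan is to reduce the corollary to the construction of an efficient $\chi(\varepsilon)$-learning procedure in the sense of Definition~\ref{def:chi-learning-procedure} and then invoke Theorem~\ref{thm:decoupling}. By the ``prepare-then-commit'' reduction described right after Theorem~\ref{thm:decoupling}, it suffices to build a procedure whose round complexity satisfies $\chi(\varepsilon) = O(1/\varepsilon)$ (I in fact expect $\poly(S,A)\log(1/\varepsilon)$), since preparing with margin $\varepsilon = T^{-1/2}$ then yields $\tilde{O}(\sqrt{T})$ regret. The procedure must output robust sets $\hat{\cX}^a$ sandwiched as ${\cX}^{a}(\varepsilon) \subseteq \hat{\cX}^a \subseteq {\cX}^a$. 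Writing $\delta_{a,a'} := P(a) - P(a')$ and $\gamma_{a,a'} := c(a) - c(a')$, the defining constraints of these sets are $\delta_{a,a'} \cdot x \geq \gamma_{a,a'}$; because the cost is known, the offsets $\gamma_{a,a'}$ are known and the only unknowns are the difference vectors $\delta_{a,a'}$.

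The first substantive step recovers each $\delta_{a,a'}$ by probing the agent's decision boundary. For a fixed ordered pair $(a,a')$, the indifference locus is the hyperplane $\{x : \delta_{a,a'}\cdot x = \gamma_{a,a'}\}$, and since the principal observes the induced action (protocol step~4), querying contracts along a line and watching the response flip lets us binary-search to a point within $2^{-k}$ of this hyperplane in $k$ queries. Assumption~\ref{assum:inducibility} guarantees that each action's best-response region is full-dimensional, so these transitions genuinely occur and we can harvest $S$ linearly independent boundary points $x_1,\dots,x_S$ by choosing sufficiently spread-out probing directions inside $[0,\eta]^S$.

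The second step exploits knowledge of the cost to pin down the scale. Stacking the boundary points into a matrix $X$ with rows $x_i$, each point contributes $\delta_{a,a'}\cdot x_i = \gamma_{a,a'}$, i.e. $X\delta_{a,a'} = \gamma_{a,a'}\mathbf{1}$. Because $\gamma_{a,a'}$ is \emph{known}, this system is nonhomogeneous, so a full-rank $X$ determines $\delta_{a,a'}$ uniquely via $\hat\delta_{a,a'} = \gamma_{a,a'}X^{-1}\mathbf{1}$ — precisely the point where knowing one set of parameters removes the scale ambiguity that leaves the system rank-deficient when both $c$- and $P$-differences are unknown. The recovery error obeys $\|\hat\delta_{a,a'} - \delta_{a,a'}\| = O(\|X^{-1}\|\cdot 2^{-k}\sqrt{S})$, so taking $k = O(\log(1/\varepsilon) + \log\|X^{-1}\|)$ gives estimates accurate to $O(\varepsilon)$. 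Feeding these into $\hat{\cX}^a = \{x : \hat\delta_{a,a'}\cdot x \geq \gamma_{a,a'} + \varepsilon/2,\ \forall a'\neq a\}$ and using $\|x\|\le \eta\sqrt{S}$ to bound $|(\hat\delta_{a,a'}-\delta_{a,a'})\cdot x|$ yields the desired sandwich, while the total budget is $O(A^2 S \log(1/\varepsilon))$ queries, establishing an $O(1/\varepsilon)$-learning procedure.

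The main obstacle I anticipate is controlling the conditioning of $X$, i.e. lower-bounding $\sigma_{\min}(X)$ so the $O(2^{-k})$ per-point error does not blow up through the inversion. This calls for choosing probing directions that keep the $S$ boundary points well spread within $[0,\eta]^S$ and using $\lambda$-inducibility (together with the orthogonality $\mathbf{1}\cdot\delta_{a,a'} = 0$ coming from $P(a),P(a')\in\Delta(\cS)$) to guarantee a quantitative lower bound on $\sigma_{\min}(X)$ depending only on $\eta,\lambda,S$. A secondary subtlety is ensuring that during each binary search the $a \to a'$ transition is actually witnessed rather than pre-empted by a third action dominating the probed segment; restricting each search to a segment that begins inside the region where $a$ is optimal — located by an initial coarse search over $[0,\eta]^S$ whose cost is independent of $\varepsilon$ — resolves this. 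Once the learning procedure is in hand, Theorem~\ref{thm:decoupling} delivers the claimed $\tilde{O}(\sqrt{T})$ regret.
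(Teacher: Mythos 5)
Your high-level route is the same as the paper's (Appendix~\ref{sec:search-distribution-diff}): binary-search the agent's best-response boundaries, use the known cost differences $\gamma_{a,a'}=c(a)-c(a')$ to turn boundary constraints into a nonhomogeneous linear system for $\delta_{a,a'}=P(a)-P(a')$, and plug the resulting learning procedure into Theorem~\ref{thm:decoupling} via prepare-then-commit. However, the two steps you flag as anticipated obstacles are where the entire proof lives, and your proposed resolutions do not work. First, non-adjacent pairs: if the best-response regions $\cV_a$ and $\cV_{a'}$ do not share a facet, then no segment starting inside $\cV_a$ ever witnesses a direct $a\to a'$ flip --- the first flip is always to an intervening third action --- so the hyperplane $\{x:\delta_{a,a'}\cdot x=\gamma_{a,a'}\}$ cannot be probed behaviorally at all; your fix of ``beginning inside the region where $a$ is optimal'' does not touch this. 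The paper instead learns $\delta$ only for pairs whose regions are adjacent and recovers the rest by chaining differences along paths in a connectivity graph (the connected-component argument in the proof of Lemma~\ref{lem:hyperplane search}, with error inflating by a factor up to $N-2$), and guaranteeing that detectable adjacent boundaries of non-negligible $(d-1)$-volume exist requires the minimal-volume-ratio assumption (Assumption~\ref{assum:volume}, via Proposition~\ref{prop:surface} and Proposition~\ref{prop:effective surface}), not just $\lambda$-inducibility.

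Second, your conditioning claim is false as stated: $\sigma_{\min}(X)$ cannot be bounded in terms of $\eta,\lambda,S$ alone. Best responses are invariant under jointly rescaling $(\delta_{a,a'},\gamma_{a,a'})$, so when $\gamma_{a,a'}=0$ the boundary data determine $\delta_{a,a'}$ only up to scale; concretely, your system $X\delta_{a,a'}=\gamma_{a,a'}\mathbf{1}$ becomes homogeneous, all boundary points lie in an $(S-1)$-dimensional linear subspace, and $X$ is necessarily singular --- and in general the conditioning degrades like $1/\abs{\gamma_{a,a'}}$. This is precisely why the paper introduces the minimal-cost-difference assumption $\abs{c_i-c_j}\ge\theta$ (Assumption~\ref{assum:cost}): in Proposition~\ref{prop:learnability} the stacked matrix consists of $d$ boundary points plus the row enforcing $\mathbf{1}\cdot\delta=0$, and its determinant is bounded below by showing the all-ones direction is at distance at least $\theta/\sqrt{2}$ from the boundary hyperplane. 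Moreover, the well-spread-ness of the boundary points is engineered rather than assumed: switching points are found on randomly sampled lines, a small regular simplex is placed around each, its edges are searched, and Proposition~\ref{prop:intersection} shows the harvested points contain a $(d-1)$-ball of radius $\xi_d$, yielding the quantitative bound $\varphi_d$ depending on $\varsigma$ and $\theta$. So your proposal is directionally right, but with only Assumption~\ref{assum:inducibility}, the identifiability and conditioning steps you defer are not merely unresolved --- they are unprovable without the additional assumptions the paper's proof uses (which is why the main text states this corollary holds ``under mild assumptions'').
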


In Appendix~\ref{sec:search-distribution-diff}, we formally show that, knowing the agent's cost, there is an efficient learning procedure for the unknown parameters $ P(a) - P(a'), \forall a'\neq a$ with small errors under mild assumptions. This allows us to attain $\tilde{O}(\sqrt{T})$ for the general contractual bandit problem, and we showcase its application in designing contractual RL algorithms in the next section.
Since the design and analysis of the learning procedure is highly technical, we also demonstrate the high-level idea on a simplified instance in Example~\ref{ex:motivation for hyperplane search} of Appendix~\ref{sec:search-distribution-diff}. More generally, we expect similar learning procedure exists if we alternatively assume some predictive state $s$ in $P$ such that the principal knows $P(s_0 | a), \forall a\in\cA$, since it would also eliminate one extra degree of freedom in the linear system above.

\section{The Complexity of Contractual Reinforcement Learning}
\label{sec:mdp}

If we treat each stationary policy in contractual RL as an arm and its induced visitation measure (see its formal definition in Appendix~\ref{sec:basic-rl-proofs}) as an outcome in the contractual bandit problem, the generic algorithm from Section~\ref{sec:generic} already provides a $\tilde{O}(T^{2/3})$ regret bound. However, the computational and statistical complexity of both Algorithm~\ref{algo:black-box-ucb} and \ref{algo:bandit-search} has polynomial dependence on the size of action space, which has become exponential as $|\Pi|= (SA)^H$. Moreover, as pointed out above, it requires a uniformly good knowledge over the transition kernel $P$ to constructing the near-optimal contract policy under the moral hazard.
In this section, we provide an improved analysis for the complexity of contractual reinforcement learning, given that the agent's cost function $\{ c_h \}_{h=1}^{H}$ is known initially. This assumption allows us to leverage the learning procedure designed in the last section to efficiently learn the parameters $\mu_h(s, a, a') := P_h(s, a)- P_h(s, a')$ for all $h\in [H], s\in \cS, a,a'\in \cA$.

\begin{algorithm}[tbh]
    \caption{Contractual RL with Warm Start }
    \label{algo:ucb-mdp}
        \KwIn{State, action set $\cS, \cA$, number of steps $H$, episodes $T$, solver $\sA$. }
            Run \oracle{} in Algorithm \ref{algo:farsighted mdp learning oracle} for $T_1$ rounds and obtain estimates $\{ \hat{\mu}_h\}_{h\in [H]}$.\\
            Initialize empirical estimate of parameters $\{ \hat{P}^1_h, \hat{r}^1_h, b^1_h\}_{h\in [H]}$ \\
         \For{$t = 1 \dots T-T_1$}{    
                Solve $\bx^t, \bpi^t$ from the subroutine $\sA$ using the parameters $\{ \hat{P}^t_h, \hat{r}^t_h, b^t_h, \hat{\mu}_h\}_{h\in [H]}$. \\
                Execute the policy $\bx^t$ and observe the trajectory $\{ (s_h^t, a_h^t, r_h^t) \}_{h\in [H]} $. \\
                Update the empirical estimate of parameters $\{ \hat{P}^t_h, \hat{r}^t_h, b^t_h, \hat{\mu}_h\}_{h\in [H]}$
            }
\end{algorithm}

We sketch the no-regret learning algorithm in contractual RL in Algorithm \ref{algo:ucb-mdp}, which cuts the number of episodes $T$ into two phases and can be improved to be agnostic to $T$ with the doubling trick. It begins by running the \oracle{} to efficiently obtain the estimated parameter $\hat{\mu}$ for the construction of robust contract policy.
Then, the algorithm use a solver to determine the robust contract policy $\bx^t$ that induces an optimistic action policy $\bpi^t$ with almost optimal payment. In Theorem~\ref{thm:contractual-rl}, we state the complexity results under two different solvers that work under different technical assumption and provides different trade-offs in statistical and computational complexity. Here, $\kappa,\lambda_s$ in the regret bound are constants in the regularity assumptions, and omit $\log T$ terms from learning $\mu_h$, though the effect of these constants can be canceled out only for sufficiently large $T$; we defer the details to Appendix~\ref{sec:rl-proofs}. Below we zoom into the construction of each component.

\begin{theorem}\label{thm:contractual-rl} With high probability, Algorithm~\ref{algo:ucb-mdp} has $\tilde{O}\left((SA^{-1/2} +  \kappa^{-1/2} )H^2 \sqrt{T} \right)$ regret using the solver in Algorithm~\ref{algo:lp-solver} and $\tilde{O}\left((H^2SA^{-1/2} +  \eta \lambda^{1-H}_s \kappa^{-1/2} ) \sqrt{T} \right)$ regret using the solver in Algorithm~\ref{algo:value-iteration} in contractual RL under mild assumptions.
\end{theorem}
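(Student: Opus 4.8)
The plan is to decompose the regret of Algorithm~\ref{algo:ucb-mdp} across its two phases and control each separately, using the value decomposition $V_h^{\bx,\bpi}(s) = R_h^{\bpi}(s) - C_h^{\bpi}(s) - U_h^{\bx,\bpi}(s)$ to split the principal's loss into a \emph{reward-exploration} part (from not yet knowing $P_h, r_h$) and an \emph{excess-payment} part (from inducing the intended action policy with more than the minimal payment $\zeta$). Writing $\Reg(T) = \sum_t (V^* - V^{\bx^t})$, I would first absorb the $T_1$ warm-start episodes into an $O(T_1 \cdot H\eta)$ term, and then bound the remaining $T - T_1$ episodes by a sum of per-episode suboptimalities which, by the value decomposition, equals the reward/transition estimation gap plus the total overpayment relative to the least-payment benchmark of Theorem~\ref{prop:bellman-opt}.

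For the warm-start phase, I would invoke the known-cost distribution-difference learning procedure of Corollary~\ref{coro:general-contractual-known-cost} (Appendix~\ref{sec:search-distribution-diff}) at each step $h$, run in parallel across states, to produce estimates $\hat\mu_h(s,a,a') \approx P_h(s,a) - P_h(s,a')$ accurate to margin $\varepsilon$. By Definition~\ref{def:chi-learning-procedure} this requires $T_1 = \tilde O(\kappa^{-1}\varepsilon^{-1})$ episodes, where $\kappa$ (and $\lambda_s$ for the value-iteration solver) are the regularity constants that keep the search problem well conditioned. The key quantitative choice is to set $\varepsilon \asymp 1/\sqrt{T}$: this makes the per-episode overpayment in the online phase $O(\varepsilon/\lambda)$ (the analogue of the $T\varepsilon/\lambda$ term in Theorem~\ref{thm:decoupling}), so the total overpayment is $\tilde O(\sqrt T)$, while the warm-start cost $T_1 \cdot H\eta$ collapses to the $\kappa^{-1/2}$ contribution in the stated bound.

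For the online phase, I would run a UCB/optimism argument over the unknown $P_h, r_h$. The solver (Algorithm~\ref{algo:lp-solver} or~\ref{algo:value-iteration}) takes the confidence sets together with the learned $\hat\mu_h$ and returns a robust contract policy $\bx^t$ and an optimistic action policy $\bpi^t$; I must establish (i) \emph{robust optimism}, that the solver's estimated value upper-bounds $V^*$ with high probability, and (ii) a \emph{per-episode decomposition} in which the gap between the estimated value and the realized value $V^{\bx^t}$ is bounded by the sum of confidence widths of $P_h, r_h$ along the visited trajectory plus the controlled overpayment. Summing these widths over episodes via the standard visitation-count telescoping yields the $SA^{-1/2}H^2\sqrt T$ reward-exploration term. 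The two solvers differ only in how the contract margin propagates: the LP solver of Algorithm~\ref{algo:lp-solver} enforces the robust inducibility constraints over the whole horizon jointly, giving a polynomial $H^2$ factor, whereas the backward value iteration of Algorithm~\ref{algo:value-iteration}, following the bi-level recursion of Theorem~\ref{prop:bellman-opt}, compounds the margin through the continuation value at each step, producing the $\eta\,\lambda_s^{1-H}$ factor.

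The main obstacle is the far-sighted agency, which couples the contract design across steps: in the recursion $W_h^*(s,a;x) = P_h(s,a)\cdot[x + U_{h+1}^*] - c_h(s,a)$ the contract at step $h$ must dominate \emph{estimation error in the agent's continuation value} $U_{h+1}^*$, not merely in the one-step cost. Establishing that the robust $\varepsilon$-margin contracts still induce the intended $\bpi^t$ despite the accumulated error in $\{\hat\mu_\tau, \hat P_\tau\}_{\tau>h}$ — and bounding how this error amplifies as it is pushed backward through the bi-level Bellman equations — is the crux; it is precisely this backward amplification that forces the $\lambda_s^{1-H}$ inducibility blowup for the value-iteration solver and that the LP solver sidesteps. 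Reconciling robust optimism with this propagation, and verifying it holds uniformly over the confidence set, is where the bulk of the technical work lies.
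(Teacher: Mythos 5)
Your proposal reproduces the paper's overall architecture — warm-start search for $\mu_h = P_h(s,a)-P_h(s,a')$ under known costs, then an optimistic online phase with robust contract solvers, regret split via the value decomposition into a reward-exploration term (handled by Lemmas~\ref{lm:optimism} and~\ref{lm:simulation}) and a payment-estimation term, with the LP solver avoiding and the value-iteration solver incurring the backward $\lambda_s^{1-H}$ amplification — and you correctly identify the backward error propagation through the bi-level Bellman recursion as the crux. However, your quantitative accounting of the warm start, and with it your explanation of the $\kappa^{-1/2}$ term, is wrong. The paper's $\chi(\varepsilon)$-learning procedure in the MDP (Algorithm~\ref{algo:farsighted mdp learning oracle}, Lemma~\ref{lm:mdp-least-payment-oracle}) is a binary-search-based simplex search whose episode complexity is \emph{logarithmic} in $1/\varepsilon$, giving $T_1 = \tilde{O}(HAS^5\kappa_0^{-1}\log(\eta T \lambda^{-1})) = \tilde{O}(\log T)$, which is simply dominated. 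Your claimed $T_1 = \tilde{O}(\kappa^{-1}\varepsilon^{-1})$ with $\varepsilon \asymp 1/\sqrt{T}$ would cost $\tilde{O}(H\eta\kappa^{-1}\sqrt{T})$ — still sublinear, but yielding a $\kappa^{-1}$ dependence rather than $\kappa^{-1/2}$, and no lemma supplies such a procedure. The actual source of $\kappa^{-1/2}$ is the \emph{online} phase: once $\hat{\mu}_h$ is known, every visit to state $s$ under any action improves the estimate of $P_h(s,a)$ for all $a$ simultaneously (Lemma~\ref{lm:improved-distribution-est}), and the mixing condition (Assumption~\ref{assum:mixing-mdp}) gives $N_h^t(s) \geq 2\kappa t$, whence the per-episode payment error $O(H^2\kappa^{-1/2}\sqrt{\ln(\cdot)/t})$ in Lemmas~\ref{lm:lp-solution-cost-bound} and~\ref{lm:cost-bound}. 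Relatedly, your per-episode overpayment ``$O(\varepsilon/\lambda)$'' is the bandit-style bound from Theorem~\ref{thm:decoupling}; in the MDP the overpayment of Lemma~\ref{lm:lp-solution-guarantee} is $O(\lambda_w^{-1}H\eta\epsilon + H^2\epsilon')$, and it is the transition-estimation part $H^2\epsilon'$ — the piece your accounting omits — that dominates and requires the pooled estimation above. You also conflate the two distinct constants $\kappa_0$ (weak ergodicity, warm start) and $\kappa$ (mixing, online phase).

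A second genuine gap: you never establish that the warm-start search can be \emph{executed} against a far-sighted agent. The hyperplane search at a pair $(s,h)$ only makes progress when the realized trajectory reaches $s$ at step $h$, and the agent, not the principal, chooses the trajectory. The paper resolves this with two ingredients absent from your proposal: (i) the stepwise decoupling observation that fixing $x_{h+1},\dots,x_H$ and tuning only $x_h(s,\cdot)$ leaves the agent's continuation values $U_{h+1}^{\bx},\dots,U_H^{\bx}$ unchanged, so the search at one $(s,h)$ does not perturb downstream best responses and the effective response space stays $O(SA)$ per step rather than $(SA)^H$; and (ii) adding the dominating constant shift $H\kappa_0^{-1}\max_{s,a,h} c_h(s,a)$ to $x_{h^t}(s^t,\cdot)$, which under Assumption~\ref{assum:ergodic mdp} forces any best response of the far-sighted agent to visit $s^t$ at step $h^t$ with probability at least $\kappa_0$ (proved in Lemma~\ref{lm:mdp-least-payment-oracle} via the profit-comparison argument). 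Without (i) and (ii), the claim that the distribution-difference search of Corollary~\ref{coro:general-contractual-known-cost} can be ``run in parallel across states'' has no justification, since the queried contracts at step $h$ could reshape the agent's entire policy and starve the search of samples at the target state.
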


\noindent\textbf{\Oracle{} in Contractual RL. } 
One challenge in the construction is the need to separate the stepwise interference among $\{x_h\}_{h=1}^{H}$. Otherwise, the actual response space for the agent is $(SA)^H$, which is unacceptable even for doing binary search. 
Our solution is due to the observation that if we fix $x_{h+1}, \dots, x_{H}$ and tune $x_h$ only, the agent's expected profits $U_{h+1}^{\bx}, \dots, U_{H}^{\bx}$ remain unchanged. This allows us to set $x_h$ without influencing the agent's action policy for step $h+1, \dots, H$.
Another key challenge in constructing the oracle in the MDP setting is to guarantee visitation measure over each state at step $h$. To maximize the visitation measure of a particular state $s$ at step $h$, we let $x_h(s, \cdot)$ have nonzero values such that the agent has a strong incentive to maximize her visitation measure over state $s$ at step $h$. To simplify our analysis, we assume that the maximal visitation measure at each state $s\in\cS$ and at each step $h\in[H]$ is bounded below, though we expect  it to be relaxed via a more careful analysis since those states rarely visited contributes little to the estimation of the cumulative utility. 
Lastly, the task of setting $x_h(s, \cdot)$ is solved in the bandit learning setup under the techniques and assumptions specified in Appendix~\ref{sec:search-distribution-diff}. 
See \Cref{sec:learning oracle for farsighted MDP} for the formal proof and detailed construction of the learning procedure. 

\noindent\textbf{Solving for Optimistic and Robust Contract Policies. } 
We show two different solvers for the optimistic contract with bounded suboptimality using the estimated parameters. Their basic idea is the same, which is to include additional bonus for optimism and margin for robustness.
However, it turns out that they can either ensure statistical or computational efficiency, leaving an intriguing open question on the existence of the best-of-both-world solver.
For the solver in Algorithm~\ref{algo:lp-solver}, we directly solve for the optimal contract policy according to LP~\eqref{eq:principal-optimal-contract} with additional bonus and margin step for the entire policy. 
For the solver in Algorithm~\ref{algo:value-iteration}, we employ the value iteration from the Bellman equation~\eqref{eq:principal-optimal-bellman} with bonus and margin set at every step. Both solvers require the inducibility assumption similar to Assumption~\ref{assum:inducibility} in contractual bandit learning problem. However, the computationally efficient solver requires the inducibility assumption to hold at every step, whereas the statistically efficient solver only requires the inducibility assumption to hold at the trajectory level. We defer their detailed construction and their proofs to Appendix~\ref{sec:solver-LP} and~\ref{sec:solver-VI}.

\section{Conclusion}
In this paper, we propose the study of contractual reinforcement learning problems in which the principal learns to influence the agent's policy by adaptively designing contracts that are contingent on the state realization. The principal must not only balance the tradeoff between her  payments and rewards from the agent's policy, but also incentivize the agent's exploration for her learning in an unknown environment.
Our primary approach is to decouple this general problem into a standard online learning problem and a hyperplane search problem. This enables a clean analysis of the no-regret learning guarantee under several variants of technical assumptions. Meanwhile, several technical gaps remain for future work, including a tighter analysis under relaxed assumptions and the general setup where the agent adaptively improves his policy.
We believe this model forms a natural theoretical basis for the agency problem in today's large scale machine learning tasks where economic incentives of users, creators, service providers stand in conflict with the Internet platform's long-term objective. More generally, it sheds light on the emergent problems of AI alignment from the perspective of steering AI behaviors through reward-shaping in its training environment. We hope this work would motivate new avenues for developing robust, incentive-compatible frameworks that align diverse stakeholder interests in complex digital ecosystems.

\bibliographystyle{plainnat}
\bibliography{main} 

\newpage

\appendix

\renewcommand*\contentsname{Table of Contents}

\tableofcontents

\newpage 
\section{Further Discussion on Related Work}
\label{sec:all-related-work}
\paragraph{Contract Design.} The contract theory has been a crucial branch of economics \cite{grossman1992analysis, smith2004contract, laffont2009theory}. 
Driven by an accelerating trend of contract-based markets deployed to Internet-based applications, the contract design problem recently started to receive a surging interest especially from the computer science community \cite{dutting2019simple, guruganesh2021contracts, alon2021contracts, castiglioni2022designing}. 
The principal-agent model has been also applied for the delegation of online search problems~\cite{bechtel2022delegated, kleinberg2018delegated} and machine learning tasks~\cite{saig2024delegated}.
While these works focus on the computational aspects of contract design, our work is to adaptively design the optimal contract between learners and decision makers in an initially unknown environment.

\paragraph{Dynamic Pricing.}
Our model is related to the dynamic (contextual) pricing problems \cite{kleinberg2003value, mao2018contextual, shah2019semi, lobel2018multidimensional, leme2018contextual}, where a seller learns to post a price on a single item for a sequence of buyers with a fixed cost (possibly under different context). In particular, they can be viewed as special cases of contractual reinforcement learning, where the contract is contingent on the agent's binary action and the principal already knows her reward function. As we will see in Section~\ref{sec:bandits}, our algorithm is able to borrow some design insights from these pricing problems. Nonetheless, our learning algorithm deals with the more involved situations, where the agent has multiple actions (e.g., a list of items to buy) of which the principal's rewards are unknown, and the contract is not necessarily contingent on the agent's actions but their outcomes. 
As such, it is possible to achieve constant regret in these pricing problems, whereas the regret lower bound of contractual reinforcement learning is $\Omega(\sqrt{T})$.  

\paragraph{Online Contract Design.}
The problem begins as a variant of dynamic pricing in \citet{kleinberg2003value} where the agent's cost is stochastic (or adversarially) chosen, and regret bound is $\Theta(\sqrt{T})$ (or $\Theta(T^{2/3})$ in adversarial setup). 
\citet{ho2016adaptive, zhu2022sample} consider a generalized model where the agent has multiple (instead of binary) actions, both the cost and reward of his actions are determined by the agent's Bayesian type that are unknown to the learner. These problems can be viewed as a continuum-armed bandit problem~\cite{agrawal1995continuum}, except the principal's utility is not continuous. \citet{zhu2022sample} shows an almost tight linear regret bound of this problem $\tilde{\Theta}(T^{1-K/|\cS|})$ for some constant $K$ and the number of outcomes $|\cS|$. On top of this model, \citet{zhu2023online} considers the joint online optimization problem of contract and recommendation policy in the context of creator economy.
\citet{zuo2024new} assumes a smoothness condition and presents a direct reduction to the standard Lipschitz bandits problem.
In comparison, our learning problem is closer to the standard contract design model, in which the agent type is observable by the principal (captured by the initial state or context), as many platforms hold a good amount of data on their users and content creators. More importantly, this modeling choice allows us to focus on solving the key challenges of learning and planning the optimal contract under moral hazard, where we are able to achieve $\tilde{O}(\sqrt{T})$ regret for a large class of problems and $\tilde{O}(T^{2/3})$ in general under mild assumptions.  
Meanwhile, several recent works~\cite{dogan2023repeated, dogan2023estimating, scheid2024incentivized} consider the simple special case of our problem, where there is no Markov state transition and principal can directly incentivize the agent to take certain action without the barrier of moral hazard.

\paragraph{Online Learning with Incentive Constraints.}
The incentive design problems have been studied in online learning in several different ways. 
One line of works, known as the incentivized exploration \cite{frazier2014incentivizing, mansour2015bayesian}, consider the situations where the principal recommends the agents to pull different arms and the recommendation policy must be incentive compatible to the agents in a Bayesian sense w.r.t. each agent's prior of arm rewards. \citet{bahar2020fiduciary} consider the fiduciary bandits problem, where a slightly stronger constraint of individual rationality is introduced. Our model is different from these works in that the principal use monetary incentives (contracts) instead of information advantage to influence the agents' decisions. 
Another line of work, known as the budgeted bandits \cite{tran2010epsilon, tran2012knapsack, xia2015thompson, xia2016budgeted}, and more generally, bandits with knapsacks \cite{badanidiyuru2018bandits, agrawal2014bandits, agrawal2016linear,immorlica2022adversarial}, models the intrinsic cost of arm selection. The cost only affects the learner's choices due to the limited budget, whereas the learner (principal) in our multi-agent decision making process needs to properly reimburse the agent's (opportunity) cost in order to influence the agent's arm choices. 
\citet{ratliff2018incentives} consider the multi-armed bandit problem where the reward distribution (impacted by user types) shifts according to the history of arm selection. 
\citet{braverman2019multi} models each bandit arm as a self-interested agent that keeps part of the reward from the principal to strategically maximizes his long-term utility.
Besides the online contract design problem, there are also rich line of literature in the online learning problems under Stackelberg games, information design and auction design setups~\cite{blum2004online, balcan2015commitment, dudik2020oracle, wu2022sequential, zhao2023online, bernasconi2023optimal, cacciamani2023online}.

\newpage 
\section{Omitted Content in Section~\ref{sec:formal-model}}

\subsection{Notations and Illustrations}

We use the notation of $[n]$ for the set $\{1, 2, \dots, n\}$. We use $\Delta(\cS)$ to denote the simplex space on discrete set $\cS$.
For probability distribution $P\in \Delta(\cS)$, we will use $P(s)$ to denote the measure of $s\in \cS$ in $P$.
We use the notation of $\maxarg, \minarg$ as an operator on an optimization problem that returns the optimal objective value followed by its optimal solution, e.g., $0, a =\minarg \norm{x-a}^2 $.

We will interchangeably treat a function $f: \cX \to \cY$ as a vector from $\cY^{|\cX|}$. As such, we denote the inner product $f\cdot g := \sum_{x\in \cX} f(x)g(x)$ for $f,g: \cX \to \cY$ or  $\inp{f}{g}_{\cX\times \cY} := \sum_{x\in \cX, y\in\cY} f(x,y)g(x,y)$ for $f,g: \cX\times \cY \to \RR$. Denote their outer product as $f\otimes g$.
Denote $\norm{f}_{\ell, \infty} := \sup_{x\in \cX}\norm{f(x)}_{\ell} $ for $f: \cX \to \cY$. 
In addition, for function $f: \cX \times \cY \to \cZ$, we use $f(x) \in \cZ^{\cY}$. 
For conditional probability $P: \cX \to \Delta(\cY)$, we denote $P(y|x)$ as the measure of $y\in \cY$ given $x\in \cX$.

\label{sec:rl-sketches}
\begin{table}[tbh]
\caption{A table of notations in the contractual reinforcement learning problem}
\centering
\begin{tabular}{l|l}
\toprule  
Symbols & Interpretations \\
\midrule
  $\cS, \cA$ & state, action space  \\
  $P_h:  \cS \times \cA \to \Delta(\cS), P_0\in \Delta(\cS) $ & transition kernel, initial state distribution \\
  $\iota_h: \cS \times \cS \to \RR_+$ & noisy reward function at $h$-th step \\
  $r_h: \cS \times \cA \to [0,1]$ & expected reward function at $h$-th step \\
  $c_h: \cS \times \cA \to [0,1]$ & cost function at $h$-th step \\
$\bx = \{ x_h: \cS \times \cS \to \RR_+ \}_{h=1}^{H}$ & contract policy \\
$\bpi = \{ \pi_h: \cS \to \Delta(\cA)\}_{h=1}^{H}$ & action policy \\
  $\Pi$ & action policy space  \\
  $\cX$ & contract policy space \\
  $V_h^{\bx, \bpi}, V_h^{\bx}, V_h^{\bpi}, V_h^{*}: \cS \to \RR_+$ & principal's state value function at $h$-th step \\
  $U_h^{\bx, \bpi}, U_h^{\bx}, U_h^{\bpi}, V_h^{*}: \cS  \to \RR_+$ & agent's state value function at $h$-th step \\
  $\rho_h^{\bpi}: \Delta(\cS)$ & state visitation measure at $h$-th step \\
  $\zeta_h^{\bpi}: \cA \to \RR_+$ & least payment function at $h$-th step \\
  $Q_h^{\bpi}: \cS \times \cA \to \RR_+$ & principal's state-action value function at $h$-th step \\
  $W_h^{\bpi}: \cS \times \cA \to \RR_+$ & agent's state-action value function at $h$-th step \\
\bottomrule
\end{tabular}
\end{table}

\medskip

\begin{figure}[tbh]
\centering

\tikzset{every picture/.style={line width=0.75pt}} %

\begin{tikzpicture}[x=0.75pt,y=0.75pt,yscale=-1,xscale=1]

\draw    (237,200) -- (134.27,146.42) ;
\draw [shift={(132.5,145.5)}, rotate = 27.54] [color={rgb, 255:red, 0; green, 0; blue, 0 }  ][line width=0.75]    (10.93,-3.29) .. controls (6.95,-1.4) and (3.31,-0.3) .. (0,0) .. controls (3.31,0.3) and (6.95,1.4) .. (10.93,3.29)   ;
\draw    (456,143) -- (338.3,200.13) ;
\draw [shift={(336.5,201)}, rotate = 334.11] [color={rgb, 255:red, 0; green, 0; blue, 0 }  ][line width=0.75]    (10.93,-3.29) .. controls (6.95,-1.4) and (3.31,-0.3) .. (0,0) .. controls (3.31,0.3) and (6.95,1.4) .. (10.93,3.29)   ;
\draw    (170,115) -- (420,115) ;
\draw [shift={(422,115)}, rotate = 180] [color={rgb, 255:red, 0; green, 0; blue, 0 }  ][line width=0.75]    (10.93,-3.29) .. controls (6.95,-1.4) and (3.31,-0.3) .. (0,0) .. controls (3.31,0.3) and (6.95,1.4) .. (10.93,3.29)   ;
\draw   (100,104) -- (170,104) -- (170,144) -- (100,144) -- cycle ;
\draw   (422,102) -- (492,102) -- (492,142) -- (422,142) -- cycle ;
\draw   (238,178) -- (336,178) -- (336,218) -- (238,218) -- cycle ;
\draw    (169.5,135.5) -- (418,135) ;
\draw [shift={(420,135)}, rotate = 179.89] [color={rgb, 255:red, 0; green, 0; blue, 0 }  ][line width=0.75]    (10.93,-3.29) .. controls (6.95,-1.4) and (3.31,-0.3) .. (0,0) .. controls (3.31,0.3) and (6.95,1.4) .. (10.93,3.29)   ;

\draw (106,116) node [anchor=north west][inner sep=0.75pt]   [align=left] {Principal};
\draw (436,114) node [anchor=north west][inner sep=0.75pt]   [align=left] {Agent};
\draw (244,190) node [anchor=north west][inner sep=0.75pt]   [align=left] {Environment};
\draw (185.5,90) node [anchor=north west][inner sep=0.75pt]   [align=left] {Contract Policy $\displaystyle x_{h} :\mathcal{S} \times \mathcal{S}\rightarrow \mathbb{R}_{+}$};
\draw (375,186.5) node [anchor=north west][inner sep=0.75pt]   [align=left] {Action Policy $\displaystyle \pi _{h} :\mathcal{S}\rightarrow \mathcal{A}$};
\draw (79.5,173.5) node [anchor=north west][inner sep=0.75pt]   [align=left] {State Transition \\$\displaystyle s_{h+1} \sim P( s_{h} ,\pi _{h}( s_{h}))$};
\draw (218.5,139) node [anchor=north west][inner sep=0.75pt]   [align=left] {Payment $\displaystyle x_{h}( s_{h} ,s_{h+1})$};

\end{tikzpicture}
\caption{An illustration of the interaction procedure in the principal-agent Markov decision process.}

\end{figure}
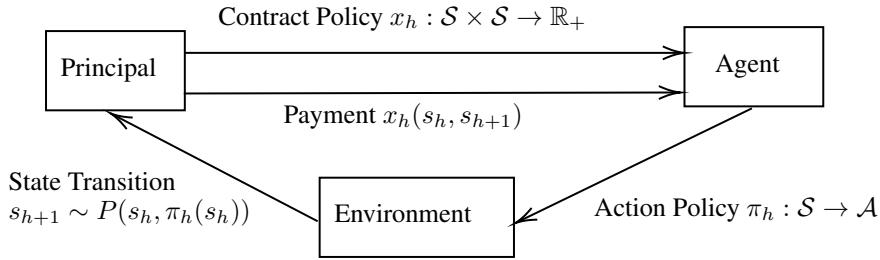

\subsection{Discussion on the Modeling Choices.}
\label{sec:modeling-choice}
We make a few remarks on the procedure of the PAMDP.
\begin{enumerate}[leftmargin=*]
\item It is without loss of generality for the principal to commit his contract policy $\bx$ at the very beginning of each episode, since this MDP setup can be viewed as an extensive-form game as long as the principal as the first mover can  predict the agent's response and plan his follow-up move accordingly. Once the principal commits its contract policy, the agent can also determine his optimal action policy in response. 

\item We assume the Markovian state after its realization is publicly observable by both the agent and principal, serving as the natural conditions and contingencies for the contract design. Hence, our model directly use the state transition kernel, $P_h: \cS\times \cA \to \Delta(\cS)$, as the outcome distribution in contract design problems. Otherwise, if either agent or principal only partially observes the state, the planning problem is known to be intractable~\cite{kaelbling1998planning}, and we leave this open question for future work. A more subtle caveat here is that, different from standard episodic MDP, the transition kernel in the last step $P_H$ matters, as it influences the principal's design of contract.

\item  The principal's noise reward $\iota_h(s_h, s_{h+1})$ is set to be conditionally independent of the agent's action $a_h$, given the next state $s_{h+1}$. This is necessary for a subtle modeling reason: as we will see in the next section, the contract design problem would become much easier, if the principal can condition its payment directly based on the agent' action (i.e., without the concern of moral hazard). 
Note that since the reward itself can be modeled as a part of the state, the existence of such $\iota_h$ is without loss of generality, and there is no need to assume additional zero-mean noise on top of $\iota_h$. 
    
\item We assume the principal is able to observe the agent's action once the payment is transferred. This is well-motivated in practice. For example, a content platform may ask the creators to fill a survey on the amount of time they spent to create their content; the creators have no incentive to misreport this information, as long as their payment is independent of the answers.
The more general setup is that the principal is only able to observe a probabilistic signal of agent taking some action $a$ (e.g., from the realization of the next state and knowledge of the transition kernel). For the convenience of analysis, we save the additional steps for the principal to infer the agent's decision up to a sufficient level of confidence by repeating the same contract policy, though this could introduce additional factor of $H$ into the sample complexity, depending on the mixing ratio. We leave the tight analysis to future work.
\item It is without loss of generality to assume that the rational agent always has the incentive to participate in the PAMDP. This is because enforcing the additional constraint that the agent's utility must be non-negative under the principal's optimal contract is equivalent to adding an ``idle'' action $a_0$ to the existing action set $\cA$ with $r_h(s, a_0)=c_h(s,a_0)=0, \forall s\in\cS, h\in [H]$, which allows our analysis to ignore the agent's non-negative utility (individual rationality) constraint. 
\end{enumerate}

\subsection{Least-Payment Bellman Equations in PAMDP}
\label{sec:value-iterations}

With the correspondence between $\bx$ and $\bpi^{\bx}$, a natural next step is to fix $\bpi$ and find the contract policy 
$$ \bx^{\bpi} =  \argmax_{\bx \in \cX} V^{\bx, \bpi} \quad\text{s.t.}\quad \bpi = \argmax_{\bpi \in \Pi} U^{\bx, \bpi},$$ 
with the maximal value among all policy that the agent would optimally respond with action policy $\bpi$. Since the total expected reward of the principal is fixed under $\bpi$, the objective of the optimization problem can be equivalently rewritten as minimizing the principal's total payment,
$$ \bx^{\bpi}, \zeta^{\bpi} = \minarg_{\bx \in \cX}  \Ex \big[\sum_{h=1}^{H} x_h(s_h, a_h ) \big| \{ \pi_h \}_{h=1}^{H} \big] \quad\text{s.t.}\quad \bpi = \argmax_{\bpi \in \Pi} U^{\bx, \bpi}.$$
Recall that $\zeta_h^{\bpi}(s)$ denotes the least amount of expected payment to induce an action policy $\bpi$ at the state $s$ from the $h$-th step. 
Meanwhile, since $U^{\bpi}_h(s) =  P_h(s, \pi_h(s)) \cdot [x + U^{\bpi}_{h+1} ] - c_h(s,\pi_h(s))$, 
the above constraint can be equivalently rewritten as a set of constraints in an iterative form,
$$\pi_h = \argmax_{a \in \cA} P_h(s, a) \cdot [x + U^{\bpi}_{h+1} ] - c_h(s,a), \quad \forall h\in [H].$$
Therefore, such a contract policy $\bx^{\bpi}$ can be computed iteratively with backward induction from $h=H$ to $1$ with $U^{\bx}_{H+1}(s)$, $\forall s\in \cS$,
\begin{equation} \label{eq:least-payment-policy}
 \begin{aligned} 
    W^{\bpi}_h(s, a; x) &= P_h(s, a) \cdot [x + U^{\bpi}_{h+1} ] - c_h(s,a),   \\
    x^{\bpi}_h(s) &= \argmin_{x: \cS \to \RR_{+} }  \{ P_h(s, \pi_h(s)) \cdot x \ |\  W^{\bpi}_h(s, \pi_h(s); x)  \geq W^{\bpi}_h(s, a'; x), \forall a'\in \cA \},   \\
    U^{\bpi}_h(s) &= W^{\bpi}_h(s, \pi_h(s); x_h^{\bpi}(s)  ),\\ \zeta^{\bpi}_h(s) &= P_h(s, \pi_h(s)) \cdot [x_h^{\bpi}(s) + \zeta_{h+1}^{\bpi} ], \\
    V_h^{\bpi}(s) &= r_h(s, \pi_h(s)) + P_h(s, \pi_h(s)) \cdot [ V^{\bpi}_{h+1} - x^{\bpi}_h(s)  ],
\end{aligned}
\end{equation}
where the function $\zeta^{\bpi}_h(s),  V_h^{\bpi}(s)$ are computed as by-products of the value-iteration. We refer to Equation~\eqref{eq:least-payment-policy} as the least-payment Bellman equation.

\subsection{Bellman Optimality Equations in PAMDP}
\label{sec:bellman-optimality}
\begin{proof}[Proofs of Theorem~\ref{prop:bellman-opt}]
We begin by giving an interpretation for each variable in the Bellman equation. With slight abuse of notation, $ x^*_h(s; a)$ denotes the contract with the least payment to induce the agent to take action $a$ in each step $h$. Given that $\pi^*_h(s)$ is the best agent action for the principal to induce, the optimal contract at state $s$ in step $h$ can be determined as $x_h(s) = x_h(s; \pi^*_h(s) ) $. $Q_h^*(s, a), W_h^*(s, a;x)$ are respectively the principal's and agent's total expected utility from $h$-th step under policy $\{ x^*_\tau \}_{\tau=h+1}^{H}$ and $\{ \pi^*_\tau \}_{\tau=h+1}^{H}$, which can be interpreted as their optimal state-action value function at $h$-th step, serving as the intermediate variable for the computation. 
We now prove the optimality of its solution $\bx^*, V^*$ via induction:

For the base case, observe that planning for any state $s$ at the last step $H$ is reduced to a standard contract design problem and the optimal contract can be determined by solving the following linear program, $\forall a\in \cA$,
\begin{align*}
    Q_H^*(s, a), x^*_H(s;a) &= \argmin_{x: \cS \to \RR_{+} }  r_H(s,a) -  P_H(s, a) \cdot x \\
    \textup{s.t.} &  \quad P_H(s, a) \cdot x - c_H(s,a) \geq  P_H(s, a') \cdot x - c_H(s,a'), \forall a'\neq a,
\end{align*}
where $x^*_H(s;a)$ is the least payment contract to induce the agent to take action $a$ in the last step and $Q_H^*(s, a)$ is the principal's expected utility under $x^*_H(s;a)$. In Equation~\ref{eq:principal-optimal-bellman}, we save the term $r_H(s,a)$, since it is a constant once the action is fixed.
Hence,
$V_H^*(s),  a* = \maxarg_{a \in \cA} Q^*_H(s, a)$ determines the best action $\pi^*_H(s)=a^*$ for the principal to induce and $V_H^*(s)$ is the optimal state value function at $H$-th step. 
The principal's optimal contract can be determined as $x^*_H(s) =  x^*_H(s; a^*)$.
The agent's value is $U_H^*(s) = P_H(s, a^*) \cdot x^*_H(s) - c_H(s, a^*)$ based on his best response $a^*$ under $x^*_H(s)$.

For the inductive case, given that $\{ x^*_\tau \}_{\tau=h+1}^{H}$ is optimal, with the agent's best responding action policy $\{ \pi^*_\tau \}_{\tau=h+1}^{H}$, we show that $x^*_h$ solved from 
Equation~\ref{eq:principal-optimal-bellman} is optimal. 
Let us observe that 
$ W^*_h(s, a; x) = P_h(s, a) \cdot [x 
+ U^*_{h+1} ] - c_h(s,a)$ captures the agent's total utility of taking action $a$ under the contract $x$ at step $h$ state $s$ and then optimally following the action policy $\{ \pi^*_\tau \}_{\tau=h+1}^{H}$ under $\{ x^*_\tau \}_{\tau=h+1}^{H}$ from step $h+1$. Here, we can use $U^*_{h+1}$ computed from previous iteration, because the agent's value $U^*_{h+1}(s)$ is conditionally independent to the action in the current step given the realization of next state $s$ --- this enables efficient computation through dynamic programming.
Similar to the base case, for every action $a\in \cA$, the principal is to compute the least payment contract for the agent to take action $a$,
\begin{align*}
    Q_h^*(s, a), x^*_h(s;a) &= \minarg_{x: \cS \to \RR_{+} }  r_h(s, a) + P_h(s, a) \cdot [ V^*_{h+1} - x^*_h(s; a)  ] \\
    \textup{s.t.} &  \quad P_h(s, a) \cdot [x 
+ U^*_{h+1} ] - c_h(s,a)  \geq P_h(s, a') \cdot [x 
+ U^*_{h+1} ] - c_h(s,a'), \forall a'\neq a,
\end{align*}
where the objective is set as the principal's total utility if the agent takes action $a$ at current step $h$ and follows the policy $\{ \pi^*_\tau \}_{\tau=h+1}^{H}$ onward; the constraint is to reflect that it is (weakly) optimal for the agent to take action $a$ at current step. In Equation~\ref{eq:principal-optimal-bellman}, we save the term $r_h(s, a) + P_h(s, a) \cdot V^*_{h+1}$, since they are constant once the action is fixed.
With the least payment contract $x^*(s;a)$ and state value $Q^*_h(s, a)$ for each action,
$V_h^*(s),  a^* = \maxarg_{a \in \cA} Q^*_h(s, a)$ determines the best action $\pi^*_h(s)=a^*$ for the principal to induce and $V_h^*(s)$ is the optimal state value function at $h$-th step. 
The principal's optimal contract can be determined as $x^*_h(s) =  x^*_h(s; a^*)$.
The agent's value is $U_h^*(s) = P_h(s, a^*) \cdot x^*_h(s) - c_h(s, a^*)$ based on his best response $a^*$ under $x^*_h(s)$. Therefore, $x^*_h(s)$ is the optimal contract following from the optimal contract policy in previous steps $\{ x^*_\tau \}_{\tau=h+1}^{H}$, which concludes the induction.

Lastly, we note that this Bellman equation can solved efficiently using backward induction from the state-function of the $(H+1)$-step, $V^*_{H+1}, U^*_{H+1} = 0$. In each step, it solves $S\times A$ many linear programs for the optimal contract $ x^*_h(s; a)$, while each linear programs have $O(A^2)$ many constraints. Hence, the total time complexity to solve for the optimal policy $\bx^*$ is polynomial w.r.t. $A, S, H$.

\end{proof}

\newpage 

\section{Proofs in Section~\ref{sec:bandits}}
\subsection{The Regret Analysis of the Generic Algorithm}
\label{sec:generic-algo-proof}
We first describe the design of the generic algorithm in Algorithm~\ref{algo:black-box-ucb} and the technical lemmas.

\begin{algorithm}[h]
    \caption{Contractual bandit learning with $\varepsilon$-margin contract sets}
    \label{algo:black-box-ucb}
        \KwIn{$\{\cX^{a}(\varepsilon)\}_{a\in \cA}$, the $\varepsilon$-margin contract sets. }
        \For{$t = 1\dots T$}{ 
Estimate the least payment for each action under the empirical outcome distribution,
\begin{equation}\label{eq:robust-contract-empirical}
\hat{\zeta}_t(a) \gets \min_{x\in \cX^{a}(\varepsilon)}  \hat{P}_t(a)\cdot x.
\end{equation} \\

Determine the best action based on the optimistic estimation of profit,
$$
a_{t} \gets \argmax_{a\in \cA} \hat{r}_t(a) - \hat{\zeta}_t(a) + (1+\eta)\epsilon_t(a).
$$ \\
Solve for a robust contract to induce $a_t$,
$$
x_t \gets \argmin_{x\in \cX^{a_t}(\varepsilon)}  \hat{P}_t(a_t)\cdot x.
$$\\
Commit to contract $x_t$ and observe the agent's action $a'_{t}$, outcome $s_t$ and its reward $\iota_t(s_t)$. \\
Update the empirical estimation of the outcome distribution and reward function, $\hat{P}_t, \hat{r}_t$.\\
Set the confidence interval $\epsilon_t$ such that $\norm{P(a) - \hat{P}_t(a)}_1 \leq \epsilon_t(a), \forall a\in\cA$, with prob. $1-\delta$. \\
}
\end{algorithm}

\begin{lemma}\label{lm:robust-contract-loss}
    Under Assumption \ref{assum:inducibility},
    for each action $a \in \cA$, given a robust contract set $\cX^a(\varepsilon)$ with margin $\varepsilon$, and an empirical estimation of $\hat{P}(a)$ with $\norm{\hat{P}(a) - P(a)}_1 \leq \epsilon$, let $\hat{x}, \hat{\zeta}(a)$ be the minimizer and minimum objective value of LP \eqref{eq:robust-contract-empirical}. The following conditions are satisfied,
    \begin{enumerate}[leftmargin=*]
        \item 
        The expected payment of $\hat{x}$ is bounded as,
        $ 0 \leq P(a)\cdot \hat{x} - {\zeta}(a) \leq \lambda^{-1} \varepsilon. $
        \item 
        The estimated payment of $\hat{x}$ is bounded as,
        $
        - \eta\epsilon \leq \hat{\zeta}(a) - \zeta(a) \leq \lambda^{-1} \varepsilon + \eta\epsilon.
        $
    \end{enumerate}
    
\end{lemma}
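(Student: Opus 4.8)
The plan is to isolate the two independent sources of error in the least-payment estimate and bound each separately. There is a \emph{restriction error} from optimizing over the tighter margin set $\cX^a(\varepsilon)$ instead of $\cX^a$, which I will control by $\lambda^{-1}\varepsilon$ using the inducibility event of Assumption~\ref{assum:inducibility}; and there is an \emph{estimation error} from using $\hat P(a)$ rather than $P(a)$ as the LP objective, which I will control by $\eta\epsilon$ via Hölder's inequality. All lower bounds will come for free from the set inclusion $\cX^a(\varepsilon)\subseteq\cX^a$, so only the upper bounds require real work.

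First I would bound the true optimum over the margin set. Let $x^*$ attain $\zeta(a)=\min_{x\in\cX^a}P(a)\cdot x$, and let $e\in\{0,1\}^S$ be the inducibility event, so $[P(a)-P(a')]\cdot e\ge\lambda$ for all $a'\ne a$. Consider the shifted contract $x':=x^*+\tfrac{\varepsilon}{\lambda}e$. For every $a'\ne a$,
\[
[P(a)-P(a')]\cdot x' = \underbrace{[P(a)-P(a')]\cdot x^*}_{\ge\, c(a)-c(a')} + \tfrac{\varepsilon}{\lambda}\underbrace{[P(a)-P(a')]\cdot e}_{\ge\,\lambda} \ge c(a)-c(a')+\varepsilon,
\]
so $x'\in\cX^a(\varepsilon)$; and since $P(a)\cdot e\le 1$ (as $e$ is $0/1$ and $P(a)$ a distribution), $P(a)\cdot x' = \zeta(a)+\tfrac{\varepsilon}{\lambda}P(a)\cdot e \le \zeta(a)+\lambda^{-1}\varepsilon$. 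Hence the margin set is nonempty and its true least payment exceeds $\zeta(a)$ by at most $\lambda^{-1}\varepsilon$.

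Next I would transfer between the estimated and true objectives through $\bigl|[P(a)-\hat P(a)]\cdot x\bigr|\le\norm{P(a)-\hat P(a)}_1\,\norm{x}_{\infty}\le\eta\epsilon$ for any $x\in\cX$. For Part~2, optimality of $\hat x$ and feasibility of $x'$ give $\hat\zeta(a)=\hat P(a)\cdot\hat x\le\hat P(a)\cdot x'\le P(a)\cdot x'+\eta\epsilon\le\zeta(a)+\lambda^{-1}\varepsilon+\eta\epsilon$, the claimed upper bound; and $\hat\zeta(a)=\hat P(a)\cdot\hat x\ge P(a)\cdot\hat x-\eta\epsilon\ge\zeta(a)-\eta\epsilon$, using $\hat x\in\cX^a(\varepsilon)\subseteq\cX^a$ so that $P(a)\cdot\hat x\ge\zeta(a)$. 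That same inclusion yields the lower bound of Part~1, namely $P(a)\cdot\hat x\ge\zeta(a)$. For its upper bound I would write $P(a)\cdot\hat x=\hat\zeta(a)+[P(a)-\hat P(a)]\cdot\hat x\le\hat\zeta(a)+\eta\epsilon$ and substitute the Part~2 upper bound, giving $P(a)\cdot\hat x\le\zeta(a)+\lambda^{-1}\varepsilon+2\eta\epsilon$.

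The one genuinely delicate point is the upper bound of Part~1. Because $\hat x$ minimizes the \emph{estimated} cost $\hat P(a)\cdot x$ rather than the true cost, its true expected payment cannot be pinned to $\zeta(a)+\lambda^{-1}\varepsilon$ without paying the estimation error; the decomposition above shows it is controlled only up to an additional $O(\eta\epsilon)$ term, which I would carry through (it is of the same order as the error already appearing in Part~2 and is harmless downstream, where $\varepsilon$ and $\epsilon$ are driven to zero together). Everything else is routine: the inducibility construction closes the margin gap, Hölder closes the estimation gap, and set inclusion supplies both lower bounds. A minor technical caveat is that the shift $x'=x^*+\tfrac{\varepsilon}{\lambda}e$ should respect the payment cap $\eta$; for the regret analysis $\varepsilon$ is small enough that this is a non-issue, or one simply reserves headroom below $\eta$ when defining $\cX$.
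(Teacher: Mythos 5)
Your proposal follows the same skeleton as the paper's proof: shift the true least-payment contract $x^*$ by $\tfrac{\varepsilon}{\lambda}e$ to obtain a feasible point of $\cX^a(\varepsilon)$ whose true payment exceeds $\zeta(a)$ by at most $\lambda^{-1}\varepsilon$, pass between the true and estimated objectives via H\"older ($\abs{[P(a)-\hat{P}(a)]\cdot x}\le \eta\epsilon$), and extract all lower bounds from the inclusion $\cX^a(\varepsilon)\subseteq\cX^a$. Your Part~2 matches the stated bound exactly, and your route to it (optimality of $\hat{x}$ tested against the feasible shifted contract $x'$) is actually cleaner than the paper's, which derives Part~2 from Part~1.

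The ``delicate point'' you flag in Part~1 is not merely delicate --- it exposes a genuine error in the paper's own proof. The paper writes the chain $\zeta(a)\le P(a)\cdot\hat{x}\le \min_{x\in\cX^a(\varepsilon)}P(a)\cdot x\le \zeta(a)+\lambda^{-1}\varepsilon$, but the middle inequality points the wrong way: $\hat{x}$ minimizes the \emph{estimated} objective $\hat{P}(a)\cdot x$ over $\cX^a(\varepsilon)$, so membership in that set only gives $P(a)\cdot\hat{x}\ge \min_{x\in\cX^a(\varepsilon)}P(a)\cdot x$. A two-point example --- two feasible contracts whose true payments differ by nearly $2\eta\epsilon$ while their estimated payments are ordered oppositely --- shows that $P(a)\cdot\hat{x}$ can genuinely exceed $\zeta(a)+\lambda^{-1}\varepsilon$, so Part~1 as stated is too strong; your corrected bound $P(a)\cdot\hat{x}\le \zeta(a)+\lambda^{-1}\varepsilon+2\eta\epsilon$ is the right statement. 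As you note, the weakening is harmless downstream: in the proof of Theorem~\ref{thm:decoupling}, the Part-1 bound is invoked only alongside terms of order $(2+2\eta)\epsilon_t(a_t)$, so the additional $2\eta\epsilon_t$ is absorbed and the $\tilde{O}(\eta\sqrt{T}+T\varepsilon/\lambda)$ regret survives unchanged. Your closing caveat about the shift $x^*+\lambda^{-1}\varepsilon e$ respecting the payment cap $\eta$ applies equally to the paper, which silently assumes the shifted contract remains in $\cX$; reserving headroom below $\eta$, as you suggest, is the standard fix.
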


\begin{lemma}[\citet{mcdiarmid1989method}]\label{lm:learning-distribution}
    With $t$ i.i.d. samples of an $m$-dimensional distribution $Q$, we can construct a confidence ball $ \cB = \{Q\in \Delta^m : \norm{\hat{Q}_t - Q}_1 \leq \sqrt{ \frac{m\log (1/\delta) }{ t } } \}$ such that $Q\in \cB$ with prob. at least $1-\delta$. 
\end{lemma}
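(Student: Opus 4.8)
The plan is to recognize this as a direct application of McDiarmid's bounded differences inequality (hence the attribution to \citet{mcdiarmid1989method}), combined with a standard moment bound on the expected $\ell_1$ deviation of an empirical distribution. I would treat $f := \norm{\hat{Q}_t - Q}_1$ as a function of the $t$ i.i.d.\ samples and control it in two parts: a concentration step around its mean, and a separate bound on the mean itself.

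First I would write $\hat{Q}_t = \frac{1}{t}\sum_{i=1}^{t} \be_{X_i}$, where $X_1, \dots, X_t$ are i.i.d.\ draws from $Q$ over the $m$-element outcome set and $\be_{X_i}$ is the corresponding standard basis vector. Modifying a single sample $X_i$ alters $\hat{Q}_t$ in at most two coordinates, each by exactly $1/t$, so by the triangle inequality $f$ changes by at most $2/t$. Thus $f$ obeys the bounded differences condition with constants $c_i = 2/t$, and McDiarmid's inequality gives $\Pr[\,f \geq \Ex[f] + u\,] \leq \exp(-2u^2/\sum_i c_i^2) = \exp(-t u^2/2)$. Setting the right-hand side equal to $\delta$ yields the deviation term $u = \sqrt{2\log(1/\delta)/t}$.

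Next I would bound $\Ex[f]$. By linearity and Jensen's inequality, using that each coordinate $\hat{Q}_t(j)$ is an unbiased average of Bernoulli indicators, $\Ex[f] = \sum_{j=1}^m \Ex\,|\hat{Q}_t(j) - Q(j)| \leq \sum_{j=1}^m \sqrt{\mathrm{Var}[\hat{Q}_t(j)]} = \sum_{j=1}^m \sqrt{Q(j)(1-Q(j))/t} \leq \sum_{j=1}^m \sqrt{Q(j)/t}$. Applying Cauchy--Schwarz across the $m$ coordinates and using $\sum_j Q(j) = 1$ then gives $\Ex[f] \leq \sqrt{m/t}$.

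Combining the two pieces, with probability at least $1-\delta$ we obtain $\norm{\hat{Q}_t - Q}_1 \leq \sqrt{m/t} + \sqrt{2\log(1/\delta)/t}$, which is of the claimed order $\sqrt{m\log(1/\delta)/t}$ after absorbing constants via $\sqrt{a}+\sqrt{b} \leq \sqrt{2(a+b)}$ in the regime $m, \log(1/\delta) \geq 1$. The concentration step is immediate once the $2/t$ bounded difference is observed; the only mildly delicate point is the expectation bound, where the Jensen-plus-Cauchy--Schwarz argument is what produces the correct $\sqrt{m/t}$ scaling rather than a looser $\sqrt{m}$-free estimate. Collapsing the sum of two square roots into the single stated form is the remaining bookkeeping, and is where the implicit constant in the lemma's statement is absorbed.
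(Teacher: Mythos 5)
Your proof is correct and is exactly the argument behind the paper's attribution: the paper states this lemma without proof, citing \citet{mcdiarmid1989method}, and the canonical derivation is precisely your two-step route --- bounded differences with constants $2/t$ giving the $\sqrt{2\log(1/\delta)/t}$ deviation term, plus the Jensen/Cauchy--Schwarz bound $\Ex\norm{\hat{Q}_t - Q}_1 \leq \sqrt{m/t}$. The only caveat is the one you already flag: your final bound $\sqrt{m/t} + \sqrt{2\log(1/\delta)/t}$ matches the stated radius $\sqrt{m\log(1/\delta)/t}$ only up to an absolute constant (e.g.\ it fails literally at $\delta = 1/e$), which is harmless given that the paper only invokes the lemma inside $O(\cdot)$ and $\tilde{O}(\cdot)$ regret bounds.
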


\begin{proof}[Proof of Theorem~\ref{thm:decoupling}]
At a high level, Algorithm \ref{algo:black-box-ucb} proceeds by following the upper confidence bound of the expected ``profit'' of each action $z(a) := r(a) - \zeta(a)$, which shrinks at the rate of $t^{-1/2}$, based on Lemma \ref{lm:robust-contract-loss}. This enables us to apply the upper confidence bound analysis from online learning problem to the contractual bandit learning problem. 

That is, we construct a variable $\tilde{z}_t(a) := \tilde{r}_t(a) - \hat{\zeta}_t(a) + (1+\eta)\epsilon_t(a) + \lambda^{-1}\varepsilon$ as an optimistic estimation of $z(a)$.
First, notice that Algorithm \ref{algo:black-box-ucb} is equivalently to follow the action $a_t = \argmax_{a\in \cA}  \tilde{z}_t(a) $ at each round $t$, as $\lambda^{-1}\varepsilon $ is constant and does not affect the optimization. 
Second, we show that under the difference between $\tilde{z}_t(a)$ and $z(a)$ satisfies the following inequality with probability at least $1-\delta$, 
\begin{equation}\label{eq:optimistic-z}
    0 \leq  \tilde{z}_t(a) - z(a) \leq (2+2\eta)\epsilon_t(a) + \lambda^{-1}\varepsilon,\quad \forall a\in \cA, t\in [T].   
\end{equation}
On the event that $\norm{\hat{P}_t(a) - P(a)}_1 \leq \epsilon_t(a)$, we can derive that $\abs{ \hat{r}_t(a) - r(a) } = [\hat{P}_t(a) - P(a)]\cdot \iota \leq  \epsilon_t(a)$ and $ - \eta\epsilon_t(a) \leq \hat{\zeta}(a) - \zeta(a) \leq \lambda^{-1} \varepsilon + \eta\epsilon_t(a)$ by Lemma~\ref{lm:robust-contract-loss}. This implies that
$ -(1+\eta)\epsilon_t(a) - \lambda^{-1}\varepsilon \leq \hat{r}_t(a) - \hat{\zeta}_t(a) - r(a) + \zeta(a) \leq (1+\eta)\epsilon_t(a)
$, which leads to the Equation \eqref{eq:optimistic-z}.

Under the event that $\norm{\hat{P}_t(a) - P(a)}_1 \leq \epsilon_t(a), \forall a\in \cA$, we have $a_t = a'_t, \forall t\in [T]$ and the expected regret of Algorithm \ref{algo:black-box-ucb} in the $T$ rounds is as follows,
$$
\Reg(T) = \max_{a^*\in \cA} \sum_{t=1}^{T} [r(a^*) - \zeta(a^*) ] - \sum_{t=1}^{T} [ r(a_t) - P(a_t)x_t ].
$$

We decompose the regret into two cases on whether the optimal arm $a^*$ is played at round $t$:

When $a_t = a^*$, we have $[r(a^*) - \zeta(a^*) ] - [ r(a_t) - P(a_t)x_t ] =  P(a_t)x_t - \zeta(a^*) \leq \lambda^{-1}\varepsilon $ by Lemma~\ref{lm:robust-contract-loss}.

When $a_t \neq a^*$, we have 
\begin{align*}
     [r(a^*) - \zeta(a^*) ] - [ r(a_t) - P(a_t)\cdot x_t ]
    & \leq  \tilde{z}_t(a^*) - r(a_t) + P(a_t) \cdot x_t \\
    & \leq \tilde{z}_t(a_t)- r(a_t) + \zeta(a_t) + \lambda^{-1}\varepsilon \\
    & = \tilde{z}_t(a_t)- z(a_t) + \lambda^{-1}\varepsilon \\
    & \leq  (2+2\eta)\epsilon_t(a_t) + 2\lambda^{-1}\varepsilon,
\end{align*}
where the first inequality follows from Equation~\eqref{eq:optimistic-z} that $ z(a^*) \leq \tilde{z}_t(a^*) $; the second inequality uses the fact that $\tilde{z}_t(a^*) \leq  \tilde{z}_t(a_t)$ and $P(a_t)x_t \leq  \zeta(a_t) + \lambda^{-1}\varepsilon$ from Lemma~\ref{lm:robust-contract-loss}; the third inequality follows Equation~\eqref{eq:optimistic-z} that $\tilde{z}_t(a^*)- z(a^*) \leq (2+2\eta)\epsilon_t(a_t) + \lambda^{-1}\varepsilon$.

It remains to bound the total regret based on the exact choice of $\epsilon_t$ in different setup.

\paragraph{The Case of Finite Action Space.}
In the case where the action space $\cA$ is finite.
For any action $a\in \cA$, we denote $N_t(a)$ as the number of times action $a$ has been taken. By Lemma \ref{lm:learning-distribution}, we can set $\epsilon_t(a) = \sqrt{ \frac{|\cS| \log ( T|\cA|/\delta ) }{N_t(a)}  }$ such that the empirical estimation of the outcome distribution $\hat{P}_t$ satisfies $\norm{ \hat{P}_t(a) - P(a)}_1 \leq \epsilon_t(a) $ with probability at least $1-\frac{\delta}{T|\cA|}$.  Thus, by union bound, with probability $1-\delta$, the expected regret can be bounded as follows,
\begin{align*}
\Reg(T) 
& \leq \sum_{t=1}^{T} (2+2\eta)\sqrt{ \frac{|\cS| \log (T|\cA| /\delta) }{N_t(a_t)}} + 2\lambda^{-1}\varepsilon \\
& \leq (4+4\eta)\sqrt{  |\cS| \log (T|\cA| /\delta)  } \sum_{a\in \cA} \sqrt{N_T(a)}  + 2\lambda^{-1}\varepsilon T  \\
& \leq (4+4\eta)\sqrt{  |\cS| \log (T|\cA| /\delta)  } \sqrt{|\cA|T}  + 2\lambda^{-1}\varepsilon T  \\
& = O(\eta\sqrt{T|\cA||\cS|\log(T|\cA|/\delta)} + \varepsilon T/\lambda),
\end{align*}
where the first inequality uses the fact that the loss incur when $a_t \neq a^*$ is at least as much as the loss when $a_t = a^*$; the second inequality follows from the Cauchy-Schwarz inequality; the third inequality again applies Cauchy-Schwarz inequality and use the fact that $\sum_{a\in \cA} N_T(a) = T$.

\paragraph{The Case of Infinite Action Space with Linear Context.} 
In the case when the action space $\cA\subset \RR^d$ is infinite, the outcome distribution $P(a) = a^\top \theta$ for some unknown parameter $\theta \in \RR^{d\times m}$. Let $\hat{\theta}_t(a) =\Sigma^{-1}\sum_{\tau=1}^{t} s_\tau a_{\tau}  $ with $\Sigma_t = \lambda I + \sum_{\tau=1}^{t} a_\tau a_\tau^\top $. By Lemma 11 of \citet{abbasi2011improved}, with probability at least $1-\delta$, we have $\norm{\hat{\theta}_t - \theta^*}_{\Sigma_t} \leq \sqrt{\beta_t}$ , where $\beta_t = \sigma^2 (2+4d\log(T+1) + 8 \log(4/\delta)) d \log(1+\frac{T}{d\sigma^2})$.
We can set $\epsilon_t = \sqrt{\beta_t \norm{a}_{\Sigma_t^{-1}}}$ such that empirical estimation of the outcome distribution $\hat{P}_t = a^\top \hat{\theta}_t $ satisfies 
$$\norm{ \hat{P}_t(a) - P(a)}_1 = \norm{ (\Sigma_t^{-1/2}a)^\top \Sigma_t^{1/2}(\hat{\theta}_t - \theta^*) }_1  \leq \norm{\Sigma_t^{-1/2}a} \norm{ \Sigma_t^{1/2}(\hat{\theta}_t - \theta^*) }  \leq \sqrt{\beta_t \norm{a}_{\Sigma_t^{-1}}} = \epsilon_t(a) .$$ 
Thus, by union bound, with probability $1-\delta$, the expected regret can be bounded as follows,
\begin{align*}
    \Reg(T) & \leq \sum_{t=1}^{T} (2+2\eta)\sqrt{\beta_t \norm{a}_{\Sigma_t^{-1}}}  + 2\lambda^{-1}\varepsilon \\
    & \leq (2+2\eta)\sqrt{ T \sum_{t=1}^{T} \beta_t^2 \norm{a}_{\Sigma_t^{-1}}^2  }  + 2\lambda^{-1}\varepsilon T  \\
    & = O\bigg( (1+\eta)\sqrt{T} \big(d \log(T) + \log(1/\delta) \big) + \varepsilon T/\lambda \bigg),
\end{align*}
where the first inequality uses the fact that the loss incur when $a_t \neq a^*$ is at least as much as the loss when $a_t = a^*$; the second inequality follows from the Cauchy-Schwarz inequality; the third inequality again applies Cauchy-Schwarz inequality and use the fact that $\sum_{a\in \cA} N_T(a) = T$.

\end{proof}

\begin{proof}[Proof of Lemma~\ref{lm:robust-contract-loss}]
    Pick an arbitrary $a\in \cA$.
    We have ${\cX}^{a}(\varepsilon) = \{x: [P(a)-P(a')]\cdot x \geq c(a)-c(a') + \varepsilon, \forall a \neq a'\}$ and an empirical estimation of $\hat{P}(a)$ with $\norm{\hat{P}(a) - P(a)}_1 \leq \epsilon$, LP \eqref{eq:robust-contract-empirical}.
    With ${\cX}^a(\varepsilon)$ and $\hat{P}(a)$, LP \eqref{eq:robust-contract-empirical} solves for a robust contract $\hat{x}$. Since $\hat{x} \in \cX^a(\varepsilon) \subseteq \cX^a$, the agent's best response to $\hat{x}$ is to take action $a$.

    First, we derive a bound for an intermediate value $\min_{x \in {\cX}^a(\varepsilon) } P(a) \cdot x$. Recall that Assumption~\ref{assum:inducibility} guarantees that, for any $x\in \cX^a$, there exists $\bar{x} = x + \lambda^{-1} \varepsilon e$ such that $\bar{x} \in {\cX}^a(\varepsilon)$. Let $x^* = \argmin_{x \in \cX^a} P(a) \cdot x$ and $\bar{x}^* = \argmin_{x \in {\cX}^a(\varepsilon)} P(a) \cdot x $.
    We have
    $$
    \min_{x \in {\cX}^a(\varepsilon) } P(a) \cdot x
    = \min_{x \in \cX^a} P(a) \cdot x +  P(a)\cdot (\bar{x}^* - x^*).
    $$

    Since $\norm{P(a)}_{1}=1, \norm{\bar{x}^* - x^*}_{\infty}\leq \lambda^{-1} \varepsilon$, we have $ \abs{ P(a)\cdot (\bar{x}^* - x^*) } \leq \norm{P(a)}_{1} \norm{\bar{x}^* - x^*}_{\infty}$. In addition, as $ {\cX}^a(\varepsilon) \subseteq {\cX}^a$, $\min_{x \in {\cX}^a(\varepsilon) } P(a) \cdot x \geq \min_{x \in {\cX}^a } P(a) \cdot x = \zeta(a)$. Hence, we have
    \begin{equation}\label{eq:payment-upper-bound}
        0 \leq  \min_{x \in {\cX}^a(\varepsilon) } P(a) \cdot x - \zeta(a)  \leq \lambda^{-1} \varepsilon.
    \end{equation}

    Notice that $\hat{x} \in {\cX}^a(\varepsilon)$ and is not necessarily the minimizer of $P(a) \cdot x$ over ${\cX}^a$. We get the first condition of this lemma, by Equation \eqref{eq:payment-upper-bound}
    \begin{equation*}
        \zeta(a) = \min_{x \in {\cX}^a} P(a) \cdot x \leq P(a)\cdot \hat{x} 
        \leq \min_{x \in {\cX}^a(\varepsilon)} P(a) \cdot x 
        \leq \zeta(a) + \lambda^{-1} \varepsilon.
    \end{equation*}

    We now bound the estimate payment $\hat{\zeta}(a) = \hat{P}(a)\cdot \hat{x} $.
    Since $\norm{P(a) - \tilde{P}}_{1} \leq \epsilon, \norm{\tilde{x}^a}_{\infty} \leq \eta$ by the bounded contract space assumption, we have
    \begin{equation}\label{eq:payment-difference-bound}
     \abs{ \hat{\zeta}(a) - P(a) \cdot \hat{x} }  = \abs{ [P(a)-\hat{P}(a)] \cdot \hat{x} } \leq \norm{P(a)-\hat{P}(a)}_{1} \norm{\hat{x}}_{\infty} \leq \eta\epsilon. 
    \end{equation}
    Therefore, combining Equation \eqref{eq:payment-difference-bound} and the first condition, we get the second condition of this lemma,
    $$ 
    - \eta\epsilon \leq \hat{\zeta}(a) - \zeta(a) \leq \lambda^{-1} \varepsilon + \eta\epsilon.
    $$

\end{proof}

\subsection{Solving Multi-armed Bandits under Direct Incentives}
\label{sec:mab-proof}
\paragraph{Multi-Armed Bandits under Direct Incentives}
This is perhaps the most simple yet natural class of contractual online learning problems. The principal is unable to directly pull arms but is able to receive the reward from arm pulled by the agent. 
In this problem, we have the action space $\cA = [N]$ and $r, c: [N] \to [0,1]$ specifying the principal's reward and agent's cost of pulling each arm $i\in [N]$. 
At the beginning of each round $t$, the principal sets a contract $x_t: [N] \to \RR_{+}$ and the agent accordingly decides its best response $i_t = \max_{i\in [n]} \big[ x_t(i) - c(i) \big]$.
At the end of each round $t$, the principal is able to observe the exact arm $i_t$ taken by the agent as well as the noisy bandit feedback on its corresponding reward $\tilde{r}_{t}(i) = r(i) + \epsilon_t$, where $\epsilon_t$ is zero-mean, i.i.d. $\sigma$-subGuassian noise. 
Finally, the learning goal of the principal is to minimize the regret, $\Reg(T) = T\cdot \max_{i\in [N]} \big[ r(i) - c(i) \big] - \sum_{t\in [T]} [r(i_t) - x_t(i_t)]  $.  

\begin{lemma}[Binary Search for Finite Arms] \label{lm:binary-search}
There exists an \oraclearg[$O(|\cA|\log(1/\varepsilon) )$] for multi-armed bandits under direct incentives.
\end{lemma}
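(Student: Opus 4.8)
The plan is to build the learning procedure by isolating and binary-searching the cost of each arm, then translating the resulting cost estimates into the sandwiched contract sets demanded by Definition~\ref{def:chi-learning-procedure}. The crucial simplification under direct incentives is that the contract is contingent on the action itself, so $P(a)=e_a$ is a point mass and $[P(a)-P(a')]\cdot x = x(a)-x(a')$. Consequently $\cX^a = \{x\in\cX : x(a)-x(a') \ge c(a)-c(a'),\ \forall a'\neq a\}$, and likewise $\cX^a(\varepsilon)$, depend on the cost function only through the pairwise differences $c(a)-c(a')$. Hence it suffices to estimate these differences to additive accuracy.

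First I would learn each arm's cost relative to the cheapest arm. Fix an arm $j$ and consider the one-parameter family of contracts paying $p$ for arm $j$ and $0$ for every other arm; these are feasible since $p\ge 0$. Under such a contract the agent's utility for arm $j$ is $p-c(j)$ and for any other arm $k$ is $-c(k)$, so the agent selects $j$ exactly when $p \ge \max_k\,[c(j)-c(k)] = c(j)-\min_k c(k)$. Since the principal observes the agent's realized action (Step~4), we can detect which side of this threshold $p$ lies on, so a binary search over $p\in[0,1]$ (the threshold lies in $[0,1]$ because $c\in[0,1]$) pins down $c(j)-\min_k c(k)$ to additive error $\varepsilon$ in $O(\log(1/\varepsilon))$ rounds. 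Repeating over all $j\in\cA$ costs $O(|\cA|\log(1/\varepsilon))$ rounds and yields estimates $\hat c(j)$ with $|\hat c(j) - (c(j)-\min_k c(k))|\le\varepsilon$; the unknown additive constant $\min_k c(k)$ cancels in every pairwise difference, so $\hat d(a,a') := \hat c(a)-\hat c(a')$ estimates $c(a)-c(a')$ to within $2\varepsilon$.

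Then I would assemble the robust contract sets. Define $\hat\cX^a := \{x\in\cX : x(a)-x(a') \ge \hat d(a,a') + 2\varepsilon,\ \forall a'\neq a\}$, where the $2\varepsilon$ slack compensates for the estimation error. Both inclusions follow by a direct check: if $x\in\hat\cX^a$ then $x(a)-x(a') \ge \hat d(a,a')+2\varepsilon \ge c(a)-c(a')$ using $\hat d(a,a')\ge (c(a)-c(a'))-2\varepsilon$, so $\hat\cX^a\subseteq\cX^a$; conversely if $x\in\cX^a(4\varepsilon)$ then $x(a)-x(a') \ge (c(a)-c(a'))+4\varepsilon \ge \hat d(a,a')+2\varepsilon$ using $\hat d(a,a')\le (c(a)-c(a'))+2\varepsilon$, so $\cX^a(4\varepsilon)\subseteq\hat\cX^a$. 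Running the binary search to precision $\varepsilon/4$ instead then yields $\cX^a(\varepsilon)\subseteq\hat\cX^a\subseteq\cX^a$ for all $a$, which is precisely an $O(|\cA|\log(1/\varepsilon))$-learning procedure, as claimed.

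The main obstacle is the non-liability constraint $x\ge 0$: unlike a setting where one could pay a large negative amount to every arm except $j$ and thereby compare $j$ against a single chosen reference, here competing arms cannot be suppressed, so a naive pairwise comparison is unavailable. The resolution, which I would emphasize as the key step, is that zeroing out all other payments automatically makes the binding competitor the global minimum-cost arm, producing the clean threshold $c(j)-\min_k c(k)$; since only cost differences enter $\cX^a$ and $\cX^a(\varepsilon)$, recovering every cost up to this common shift is enough, and no arm ever requires a negative payment.
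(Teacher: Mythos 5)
Your proposal is correct and follows essentially the same route as the paper's proof: per-arm binary search using contracts that pay only the target arm, followed by assembling the robust contract sets from the learned cost estimates. In fact your version is slightly more careful than the paper's, which asserts $|\hat c(a)-c(a)|\le\varepsilon/2$ directly (implicitly relying on a zero-cost idle action so the switching threshold is exactly $c(a)$), whereas you correctly identify the threshold as $c(a)-\min_k c(k)$ and observe that this common shift cancels in the pairwise differences defining $\cX^a$ and $\cX^a(\varepsilon)$, and you verify both inclusions ${\cX}^{a}(\varepsilon) \subseteq \hat{{\cX}}^{a} \subseteq {\cX}^{a}$ explicitly.
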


\begin{proof}[Proof of Lemma~\ref{lm:binary-search}]
We show an explicit construction of \oracle{} in the problem. Observe that, if we can learn an estimation of $ | \hat{c}(a) - c(a) | \leq \varepsilon/2, \forall a\in \cA$, we can set the least payment contract $x$ as follows, $x(a)= \hat{c}(a) + \varepsilon/2, x(a')= 0, \forall a'\neq a$.  $x(a) - c(a) \geq 0 \geq x(a') - c(a')$, it is optimal for the agent to respond with action $a$. Moreover, the payment is minimized as $x(a) - \varepsilon \leq c(a) + \varepsilon/2 + \varepsilon/2 - \varepsilon = c(a) =  \zeta(a)$. 

So it only remains to learn the estimation of $ | \hat{c}(a) - c(a) | \leq \varepsilon / 2, \forall a\in \cA$. This can be achieved through binary search. For any action $a$, we set a cost lower bound $c^-(a)$ and upper bound $c^+(a)$. At each round, the algorithm sets the contract $x$ with $x(a) = \frac{ c^-(a) + c^+(a)}{2}, x(a')= 0, \forall a'\neq a$. If the agent takes the action $a$, then the algorithm updates $c^-(a) \gets \frac{ c^-(a) + c^+(a)}{2}$. Otherwise, it updates $c^+(a) \gets \frac{ c^-(a) + c^+(a)}{2}$. In $\log (1/\varepsilon) + 1$ rounds, the algorithm is guaranteed to have $ c^+(a) - c^-(a) \leq \varepsilon / 2$ and thus an estimation $ | \hat{c}(a) - c(a) | \leq \varepsilon / 2$. To conduct the binary search for every action, the total sample complexity is $O(|\cA| \log (1/\varepsilon) )$. 

\end{proof}

\subsection{Solving  Linear Bandits under Direct Incentives}
\label{sec:linear-bandit-proof}
\paragraph{Linear Bandits under Direct Incentives}
In this problem, we have the action space $\cA \subset \RR^d$ (composed of the context vectors) and $r, c: \cA \to [0,1]$ specifying the principal's reward and agent's cost of choosing each context $a\in \cA$.
At the beginning of each round $t$, the principal observes a set of contexts $\cA_t \subset \cA $ and sets a contract $x_t: \cA_t \to \RR_{+}$. The agent accordingly decides its best response $a_t = \max_{a\in \cA_t} \big[ x_t(a) - c(a) \big]$, where $c(a) = a^\top\gamma$. 
At the end of each round $t$, the principal is able to observe the exact arm $a_t$ taken by the agent as well as the noisy bandit feedback on its corresponding reward $r(a_t) = a_t^\top \theta + \epsilon_t$, where $\epsilon_t$ is zero-mean, i.i.d. $\sigma$-subGuassian noise. $\theta, \gamma$ are fixed, unknown parameters to be learnt. Without loss of generality, we assume $\norm{\theta} \leq 1, \norm{\gamma} \leq 1, \norm{a_{t}} \leq \sqrt{d}$ by coordinate transformation.
Finally, the learning goal of the principal is to minimize the regret, 
$
\Reg(T) = \sum_{t=1}^{T} [ (a^*_t)^\top \theta^* -  (a^*_t)^\top \gamma^* - a_t^\top \theta^*  +  x_t(a_t)  ],
$
where $a^*_t = \argmax_{a_t \in \cA_t} \{  a_t^\top \theta^*  -  a_t^\top \gamma^* \} $ is the optimal arm at round $t$.

\begin{lemma}[Contextual Search for Infinite Arms] \label{lm:contextual-search}
There exists an \oraclearg[$O(d\log 1/\varepsilon)$] for linear bandits under direct incentives.
\end{lemma}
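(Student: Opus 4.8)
Lemma~\ref{lm:contextual-search} asserts the existence of an $O(d\log(1/\varepsilon))$-learning procedure for linear bandits under direct incentives — that is, a procedure that after $O(d\log(1/\varepsilon))$ rounds constructs robust contract sets $\hat{\cX}^a$ satisfying $\cX^a(\varepsilon)\subseteq\hat{\cX}^a\subseteq\cX^a$. Under direct incentives we have the simplification $\zeta=c$, so the entire task reduces to estimating the linear cost parameter $\gamma$ well enough that the least-payment contract $x(a)=\hat c(a)+\varepsilon/2$, $x(a')=0$ can be constructed within margin $\varepsilon$. Since $c(a)=a^\top\gamma$ with $\gamma\in\RR^d$, learning $\gamma$ to precision $\varepsilon$ in each relevant direction is a $d$-dimensional analogue of the binary search in Lemma~\ref{lm:binary-search}, which is exactly the \emph{contextual search} problem studied by \citet{liu2021optimal}.

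\textbf{Proof plan.} The plan is to reduce the construction to the contextual (pricing) search primitive and invoke its known $O(d\log(1/\varepsilon))$ query complexity. First I would observe that, as in the multi-armed case, it suffices to learn an estimate $\hat\gamma$ such that $|a^\top\hat\gamma - a^\top\gamma|\le\varepsilon/2$ for the relevant contexts; given this, setting $x(a)=\hat c(a)+\varepsilon/2$ and $x(a')=0$ for $a'\ne a$ induces $a$ (since $x(a)-c(a)\ge 0\ge x(a')-c(a')$) with payment within $\varepsilon$ of $\zeta(a)=c(a)$, yielding $\hat{\cX}^a$ sandwiched between $\cX^a(\varepsilon)$ and $\cX^a$. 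Second, I would describe the search mechanism: at each round the principal offers a single context $a_t$ a ``price'' $p_t$ by setting $x_t(a_t)=p_t$ and all other payments to zero; the agent accepts (chooses $a_t$) iff $p_t\ge c(a_t)=a_t^\top\gamma$, which is precisely a one-bit threshold query on the inner product $a_t^\top\gamma$. This is the standard feedback model of contextual search / pricing with binary signals. Third, I would invoke the algorithm of \citet{liu2021optimal}: their method maintains a knowledge set (a convex region) for $\gamma$ and, using carefully chosen query directions, shrinks the uncertainty so that after $O(d\log(1/\varepsilon))$ rounds every coordinate — hence $a^\top\gamma$ for any unit-norm direction — is pinned down to accuracy $\varepsilon$. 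Translating their $\varepsilon$-accuracy guarantee into the margin condition of Definition~\ref{def:chi-learning-procedure} completes the construction.

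\textbf{Main obstacle.} The delicate point is that Definition~\ref{def:chi-learning-procedure} requires a \emph{uniform} guarantee $\cX^a(\varepsilon)\subseteq\hat{\cX}^a\subseteq\cX^a$ simultaneously for all actions $a\in\cA$, whereas contextual search naturally controls estimation error only along directions actively queried. I would resolve this by noting that a bound on $\norm{\hat\gamma-\gamma}$ in the knowledge-set geometry yields control of $|a^\top(\hat\gamma-\gamma)|$ for every $a$ with $\norm{a}\le\sqrt d$ uniformly, so once the full parameter $\gamma$ is localized to a ball of radius $O(\varepsilon)$ the margin condition holds for all actions at once. The care needed is in matching the specific error metric of \citet{liu2021optimal} (which may be stated per-query or in a transformed norm) to the $\ell_\infty$-type guarantee $|a^\top\hat\gamma-a^\top\gamma|\le\varepsilon/2$ that Lemma~\ref{lm:robust-contract-loss} and Definition~\ref{def:contract-oracle} implicitly demand; the rescaling $\norm a\le\sqrt d$ absorbs dimension-dependent factors into the logarithmic term. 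Combining this uniform accuracy with the least-payment contract construction above, and feeding the resulting $O(1/\varepsilon)$-style procedure into the prepare-then-commit reduction of Theorem~\ref{thm:decoupling}, yields the claimed learning procedure and, consequently, the $\tilde\Theta(\sqrt T)$ regret of Corollary~\ref{coro:mab}.
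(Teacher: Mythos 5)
Your proposal is correct and follows essentially the same route as the paper's proof: reduce the feedback to one-bit threshold queries on $a^\top\gamma$, invoke the contextual search algorithm of \citet{liu2021optimal} to learn $\gamma$ to precision $\varepsilon$ in $O(d\log(1/\varepsilon))$ rounds, and pad the estimated cost by $\varepsilon/2$ to obtain the sandwiched contract sets. The only cosmetic difference is how uniformity over actions is argued --- you localize $\gamma$ to an $O(\varepsilon)$-ball and apply Cauchy--Schwarz with $\norm{a}\le\sqrt d$, whereas the paper runs the search along $d$ linearly independent unit directions and decomposes an arbitrary action over them --- but these are the same bookkeeping up to conditioning constants absorbed in the $O(\cdot)$.
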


\begin{proof}[Proof of Lemma~\ref{lm:contextual-search}]
We show an explicit construction of \oracle{} in the problem with agent's best response function $h^*(x) = \argmax_{a\in\cA} \{x(a) -  a^\top \gamma^* \}$ for some parameter $\gamma^*\in \RR^d,  \norm{\gamma^*} \leq 1 $ and action set $\cA \subset \RR^d$. Observe that, if we can learn an estimation of $\gamma$ such that  $ \norm{ \gamma - \gamma^* } \leq \frac{1}{2t} $, we can set the least payment rule $x$ such that $x(a)= \gamma^\top a + \frac{1}{2t}, x(a')= 0, \forall a'\neq a$. Since $x(a) - c(a) \geq  \frac{1}{2t} - \norm{ \gamma - \gamma^* } \cdot \norm{a} \geq 0 \geq x(a') - c(a')$, we have $h(x)=a$. Moreover, $x(a) - c(a) \leq \frac{1}{2t} + \norm{ \gamma - \gamma^* } \cdot \norm{a}  \leq \frac{1}{t} $. 

To learn an estimation of $\gamma$ such that $ \norm{ \gamma - \gamma^* } \leq \frac{1}{2t} $, we adopt the contextual search algorithm under symmetric loss~\cite{liu2021optimal}. At a high level, we use the constant regret guarantee of contextual search algorithm again adversarially chosen context at every round, and we present a simple argument assuming $\cA_t = \cA$ that allows us to pick arbitrary context for the contextual search algorithm. 
Specifically, consider a contextual search problem with the unknown vector $\gamma^* \in \RR^d$ and $\norm{\gamma^*} \leq 1$.  Fix any unit vector $e\in \cA$, in $O(t)$ rounds, the contextual search algorithm can determine a knowledge set $\Gamma_t$ of all feasible $\gamma$ such that $ \max_{\gamma \in \Gamma_t} | \gamma^\top e - (\gamma^*)^\top e | \leq 2^{-t}$. 
Repeating this search procedure for all $d$ linearly independent direction in $\cA$, we obtain a knowledge set of all feasible $\gamma$ such that $ \max_{\gamma \in \Gamma_t} | \gamma^\top a - (\gamma^*)^\top a | \leq 2^{-t}, \forall a\in \cA$, since any action $a$ can be decomposed as a convex combination of the $d$ linearly independent unit vectors. The total sample complexity is $O(d \log(1/\varepsilon))$.
\end{proof}

\subsection{Solving General Contractual Bandit Problems}
\label{sec:generic-cost-search}
\begin{lemma}\label{lm:generic-cost-search}
Under Assumption \ref{assum:prior-contract} and given $\hat{P}$ that satisfies 
$\norm{\hat{P} - P}_{1,\infty} \leq \varepsilon/\eta$, 
we can construct an \oraclearg[$O\big( |\cA|^2 \log (|\cA|\eta/\varepsilon) \big)$] for general contractual bandit learning problems.
\end{lemma}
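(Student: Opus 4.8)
The plan is to exploit the fact that the outcome distributions are already pinned down by the supplied estimate $\hat{P}$ (to accuracy $\varepsilon/\eta$ in $\norm{\cdot}_{1,\infty}$), so that the only remaining unknowns entering the definition of the contract sets are the pairwise cost gaps $\delta(a,a') := c(a)-c(a')$. Indeed, both $\cX^a$ and $\cX^a(\varepsilon)$ are cut out by the linear constraints $[P(a)-P(a')]\cdot x \ge \delta(a,a')\ (+\varepsilon)$, whose normals $P(a)-P(a')$ we know approximately and whose offsets $\delta(a,a')$ we must learn. I would therefore first reduce the lemma to estimating every $\delta(a,a')$ to additive accuracy $O(\varepsilon)$, and then define
$$ \hat{\cX}^a := \{ x\in \cX : [\hat{P}(a)-\hat{P}(a')]\cdot x \ge \hat{\delta}(a,a') + m,\ \forall a'\neq a \} $$
for a margin $m$ of order $\varepsilon$. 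Since every contract obeys $\norm{x}_\infty \le \eta$, the estimation error of $\hat{P}$ perturbs each constraint's left-hand side by at most $2(\varepsilon/\eta)\eta = 2\varepsilon$; combining this with the cost-gap accuracy and choosing $m$ inside the resulting two-sided window yields the sandwich $\cX^a(\varepsilon)\subseteq \hat{\cX}^a\subseteq \cX^a$ required by Definition~\ref{def:chi-learning-procedure}. Making this window nonempty forces the $\hat{P}$ accuracy and the cost accuracy to be suitably small multiples of $\varepsilon$, so the constant bookkeeping here is routine but must be tracked with care.

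To estimate a single gap $\delta(a,a')$ I would run a binary search over a one-parameter family of contracts. Anchor at the preliminary contract $x^0_{a}$ inducing $a$ (Assumption~\ref{assum:prior-contract}) and perturb along the inducibility direction $e_{a'}$ of Assumption~\ref{assum:inducibility}, i.e. consider $x_\theta = x^0_a + \theta\, e_{a'}$ clipped into $[0,\eta]^{\cS}$. The key structural fact is monotonicity: for every competitor $a''$,
$$ [P(a')-P(a'')]\cdot x_\theta = [P(a')-P(a'')]\cdot x^0_a + \theta\,[P(a')-P(a'')]\cdot e_{a'} \ge \textup{const} + \theta\lambda, $$
so the advantage of $a'$ over every other action is increasing in $\theta$. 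Hence ``$a'$ is the agent's best response'' is a threshold event in $\theta$, and a binary search of depth $O(\log(\eta|\cA|/\varepsilon))$ locates the threshold $\theta^*$ at which the agent is indifferent between $a'$ and the action it displaces. Evaluating $[\hat{P}(a')-\hat{P}(\cdot)]\cdot x_{\theta^*}$ at this indifference point returns the corresponding cost gap to the required accuracy. Running this over all $O(|\cA|^2)$ ordered pairs produces a system of cost-gap equations from which all $\delta(a,a')$ are recovered (the idle action of Appendix~\ref{sec:modeling-choice} fixes the free additive constant), at a total cost of $O(|\cA|^2\log(|\cA|\eta/\varepsilon))$ rounds, matching the claim.

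The main obstacle is the isolation problem inside each search: the threshold $\theta^*$ reached by perturbing toward $a'$ certifies indifference between $a'$ and whichever action was optimal just below $\theta^*$, which need not be the intended partner $a$. The delicate step is therefore to argue that the family of indifference equations harvested across all pairs, together with the idle-action baseline, is rich enough to reconstruct every pairwise gap $\delta(a,a')$, and that each harvested equation can be read off through $\hat{P}$ without amplifying error beyond $O(\varepsilon)$. Once this accounting is in place, the reduction of the first paragraph immediately converts the learned gaps into sets $\hat{\cX}^a$ obeying $\cX^a(\varepsilon)\subseteq \hat{\cX}^a\subseteq \cX^a$, i.e. a valid \oraclearg[$O(|\cA|^2\log(|\cA|\eta/\varepsilon))$].
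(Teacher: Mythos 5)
Your reduction (estimate pairwise cost gaps to accuracy $O(\varepsilon)$, then build $\hat{\cX}^a$ with estimated normals, estimated offsets, and an $O(\varepsilon)$ margin, and verify $\cX^a(\varepsilon)\subseteq\hat{\cX}^a\subseteq\cX^a$ by perturbation accounting with $\norm{x}_\infty\le\eta$) is exactly the paper's plan, and your one-dimensional binary search between contracts inducing different actions is also the paper's mechanism. But there is a genuine gap: the step you yourself flag as ``the delicate step'' --- that the harvested indifference equations suffice to reconstruct every $\delta(a,a')$ --- is left unproved, and it is precisely where the paper does the real work. The paper's resolution is concrete: when the binary search between a contract inducing $a$ and one inducing $a'$ terminates at a boundary with some third action $a''$, the algorithm simply records the gap $\hat{d}(a,a'')$ for the pair that \emph{actually} shares that boundary; after sweeping all ordered pairs, every pair sharing a decision boundary has a direct estimate, and the remaining gaps are recovered by telescoping $d(a,a')=\sum_{(a_i,a_j)\in\cP}d(a_i,a_j)$ along a shortest path $\cP$ in the neighbor graph (computed via Dijkstra). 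Crucially, since such a path can have length $|\cA|-2$, the per-search precision must be tightened to $\varepsilon/(10\eta|\cA|)$ so that the telescoped error stays $O(\varepsilon)$ --- this is what actually justifies the $\log(|\cA|\eta/\varepsilon)$ depth you wrote down; without the chaining analysis, your round count is asserted rather than derived. Your appeal to an ``idle-action baseline'' does not substitute for this: the idle action only pins down an additive normalization, it does not guarantee that the system of indifference equations you happen to collect connects every pair.

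A secondary but real problem is your parameterization $x_\theta = x^0_a + \theta\, e_{a'}$ clipped into $[0,\eta]^{\cS}$. Clipping destroys the linearity on which both your monotonicity claim and the gap read-off at the threshold rest (once coordinates saturate, $[P(a')-P(a'')]\cdot x_\theta$ need not keep increasing, and the indifference identity at $\theta^*$ no longer converts to a clean linear equation in the unknown gap); moreover, the threshold needed to induce $a'$, of order $\max_{a''}(c(a')-c(a''))/\lambda$ above the anchor, may simply not be reachable inside the bounded contract space. The paper sidesteps both issues by searching over \emph{convex combinations} $\alpha x^{a}+(1-\alpha)x^{a'}$ of the two preliminary contracts from Assumption~\ref{assum:prior-contract}: the segment stays in the contract space automatically, both endpoints' induced actions are known, and the boundary crossing is guaranteed to exist on the segment. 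With the paper's estimator $\hat{d}(a,a'')=[\hat{P}(a)-\hat{P}(a'')]\cdot x$ at the bracketing contracts, the error bound $|\hat{d}-d|\le(1-\alpha)\epsilon^2+2\epsilon\eta<3\epsilon\eta$ then follows by writing the true boundary point as a convex combination of the two bracketing contracts --- an accounting your sketch gestures at but does not carry out.
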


\begin{proof}[Proof of Lemma~\ref{lm:generic-cost-search}]
We denote $d(a,a'):=c(a)-c(a')$ and the learning procedure is to query certain contract in a binary search fashion in order to obtain estimation $\hat{d}$ with bounded error $|\hat{d}(a, a') - d(a, a')|, \forall a,a'$, from which we can construct $\varepsilon$-margin contract set $\hat{\cX}^{a}$ for any action $a \in \cA$ and thereby compute almost least payment contract according to Lemma \ref{lm:robust-contract-loss}. For precise analysis, let $\norm{\hat{P} - P}_{1,\infty} \leq \epsilon = \frac{\varepsilon}{10\eta}$.
We describe the full procedure in Algorithm~\ref{algo:bandit-search}.

\begin{algorithm}[tbh]
    \caption{\Oracle{} in Contractual Bandit Learning}
    \label{algo:bandit-search}
        \KwIn{Action set $\cA$, estimated parameters $\hat{P}$. }
        \Output{Robust contract sets $ \hat{\cX}^{a}, \forall a\in \cA$.}
        $\hat{d}(a, a') \gets \infty, \forall a, a' \in \cA $.\\
        Set binary search precision $\epsilon = \frac{\varepsilon}{10\eta|\cA|}$. \\
        \For{each $a \in \cA$}{ 
        Construct a contract $x^a$ that induces the action $a$. \\
        \For{each $a' \neq a \in \cA'$}{ 
        Construct a contract $x^{a'}$ that induces the action $a'$. \\
        Binary search for parameter $\alpha\in (0,1)$ such that $x = \alpha x^a + (1-\alpha)x^{a'}$ induces action $a$, while $x' = (\alpha + \epsilon) x^a + (1-\alpha - \epsilon)x^{a'}$ induces action $a''$. \\
        Use $x, x'$ to solve for $\hat{d}(a, a'') \gets \big[ \hat{P}(a)  - \hat{P}(a'') \big] \cdot x^{a} $.
            }
        }
        \For{each $a,a' \in \cA$}{ 
        \If{$\hat{d}(a,a') = \infty$}{
        $\hat{d}(a,a') \gets \min_{\cP} \sum_{(a_i, a_j)\in \cP} \hat{d}(a_i, a_j) $, where $\cP$ is a choice of path from $a$ to $a'$.
        }
        }
        \Return $ \hat{\cX}^{a} = \{ x\in \cX : \big[ \hat{P}(a)  - \hat{P}(a') \big]\cdot x \geq  \hat{d}(a, a') + \varepsilon/2, \forall a\neq a' \}$ for each $a\in \cA$
\end{algorithm}

To prove its correctness, we start from the observation that for any two action $a, a'$ with sufficiently small $\epsilon$, given two contracts $x^{a}, x^{a'}$ that respectively induces action $a, a'$ and $\norm{x^{a}-x^{a'}}_{\infty} \leq \epsilon$, we can obtain the estimation $\hat{d}(a, a')$ such that $|\hat{d}(a, a') - d(a, a')| < 3\epsilon\eta = 3\varepsilon/10$. 
To see this, we introduce a contract $x^0 = \alpha x^{a} + (1-\alpha)x^{a'}$ for some $\alpha\in(0,1)$ such that $\big[ P(a) - P(a') \big] x^{0} = d(a, a')$. Such $x^0$ must exist, since $\big[ P(a)  - P(a') \big] x^{a} > d(a, a')$ and $\big[ P(a) - P(a') \big] x^{a'} < d(a, a')$. 
Now let $\hat{d}(a, a') = \big[ \hat{P}(a)  - \hat{P}(a') \big] \cdot x^{a} $, we have 
\begin{align*}
  \quad \abs{ \hat{d}(a, a') -  d(a, a') }
 & = \abs{ \big[ P(a)  - P(a') \big] \cdot (x^{a} - x^{0}) + \big[ \hat{P}(a) - P(a)  - \hat{P}(a') + P(a')\big] \cdot x^{a} } \\
 & \leq (1-\alpha) \abs{ \big[ P(a)  - P(a') \big] \cdot (x^{a} - x^{a'}) }
 + 2 \epsilon \eta  \\
 & \leq (1-\alpha) \epsilon^2 + 2 \epsilon \eta  \\
 & < 3\epsilon \eta  = 3\varepsilon/10.
\end{align*}

To obtain that the contracts $x^{a}, x^{a'}$, it only requires to do a binary search based on two initial contracts $x^{a}, x^{a'}$ that induces action $a, a'$. Then, if the contract $x' = \frac{1}{2}x^{a} + \frac{1}{2}x^{a'} $ induces the action $a$, then we update $x^{a} \gets x'$. Otherwise, $x^{a'} \gets x'$. In $\log(1/\epsilon)$ rounds, the distance of $x^{a}$ and $x^{a'}$ is bounded by $\epsilon$. 
As is described in Algorithm \ref{algo:bandit-search}, we can do such binary search for every pair of actions $a,a'$.
While two actions may not share a decision boundary, we identify all action pairs that do share a decision boundary with each other. This means for pairs that do not share a decision boundary, we can find a path through their neighbours to determine their cost difference given by $d$ and the shortest path can find by the Dijkstra's algorithm in $O(|\cA|^2)$. In the worst case, such path can be as long as $|\cA|- 2$, this means we need to conduct binary search to the precision level of $\epsilon/|\cA| = \frac{\varepsilon}{10\eta |\cA|}$ for $O(\log(|\cA|\eta/\varepsilon)) $ rounds.

Finally, with the estimated parameters $\hat{P}$ and $\hat{d}(a, a')$, the algorithm construct the robust contract set 
$\hat{{\cX}}^{a} = \{ x\in \cX: \big[ \hat{P}(a)  - \hat{P}(a') \big]\cdot x \geq  \hat{d}(a, a') + \varepsilon/2 , \forall a\neq a' \}$, and
we claim that ${\cX}^{a}(\varepsilon) \subseteq \hat{{\cX}}^{a}  \subseteq {\cX}^{a}$. 
To verify that $\hat{{\cX}}^{a}  \subseteq {\cX}^{a} $, we can check that the following inequality must hold, $\forall \hat{x} \in \hat{{\cX}}^{a}, \forall a' \neq a$,
\begin{align*}
  [P(a) - P(a')] \cdot \hat{x} 
& \geq   [\hat{P}(a) - \hat{P}(a')] \cdot \hat{x}  +  [{P}(a) - \hat{P}(a) - {P}(a') + \hat{P}(a')] \cdot \hat{x}  \\
& \geq   \hat{d}(a, a') + \varepsilon/2  - 2\max_{a\in \cA}\norm{\hat{P}(a) - {P}(a)}_1 \norm{x}_{\infty} \\
& \geq  d(a,a') + \varepsilon/2  - \varepsilon/5 - 3\varepsilon/10 \geq d(a,a').   
\end{align*} 
Similarly, to verify ${\cX}^{a}(\varepsilon) \subseteq \hat{{\cX}}^{a} $, we can check that the following inequality must hold, $\forall x \in {\cX}^{a}(\varepsilon), \forall a' \neq a$,
\begin{align*}
  [\hat{P}(a) - \hat{P}(a')] \cdot x
& \geq   [{P}(a) - {P}(a')] \cdot {x}  +  [\hat{P}(a) - {P}(a) - \hat{P}(a') + {P}(a')] \cdot {x}  \\
& \geq   {d}(a, a') + \varepsilon  - 2\max_{a\in \cA}\norm{\hat{P}(a) - {P}(a)}_1 \norm{x}_{\infty}  \\
& \geq  \hat{d}(a,a') + \varepsilon   - \varepsilon/5 - 3\varepsilon/10 \geq \hat{d}(a,a') + \varepsilon/2.   
\end{align*}

\end{proof}

\newpage 

\section{Searching on Probability Simplex}\label{sec:search-distribution-diff}
In this section, we discuss the details related to specifying the information structure through hyperplane searching. A motivation for doing hyperplane searching is given in Example \ref{ex:motivation for hyperplane search}, where learning the outcome distribution difference with $\cO(\log T)$ rounds potentially avoid pulling the non-optimal arm too many times and paves the way for constructing $T^{-1}$-optimal contract. In addition, the need to plan with the transition kernel $P_h(\cdot\given s, a)$ in the MDP environment with far-sighted agent prompts us to learn the difference in $P_h(\cdot\given s, a)-P_h(\cdot\given s, a')$ in order to fully exploit the information structure and as well reduce the cost of redundant explorations.

\begin{example}[$o(T^{2/3})$ regret with known cost]\label{ex:motivation for hyperplane search}
Consider a class of contractual bandit problem instances parameterized on $\mu \in (0, 1]$. For each instance, there are two outcomes $s_1, s_2$ with mean reward $\iota(s_1)=1, \iota(s_2)=0$, and two agent actions $a_1, a_2$ with cost $c(a_1) = 1/2, c(a_2) = 0$ and outcome distribution $P(a_1) = [1, 0], P(a_2) = [1-\mu, \mu]$. 
One can verify that the optimal contract $x^*$ here is to set $x^*(s_1) = \frac{1}{2\mu}, x^*(s_2) = 0$ and the principal gets the expected utility $1 - \frac{1}{2\mu}$. 
The naive learning method is to play $a_2$ for $T^{2/3}$ rounds and learn its outcome distribution parameterized by $\mu$ up to the bounded error $O(T^{-1/3})$. This is costly as $a_2$ is the sub-optimal arm, resulting in $\tilde{O}(T^{2/3})$ regret in Theorem~\ref{thm:decoupling}. 
However, an alternative method is to conduct a binary search for $\mu$. 
This would achieve $O(\log T)$ regret, since the algorithm can get estimation error of $\mu$ bounded by $T^{-1}$ in $O(\log T)$ rounds, and construct an $T^{-1}$-optimal contract. 

\end{example}

Here, we consider searching for the agent's best response section in a $D$-dimensional probability simplex. Let $\cS$ with $|\cS|=d$ be the outcome space and  $x:\cS\rightarrow \RR_+$ denote the contract the principal announces to the agent. Here, we restrict the contract $x$ to a subspace $x\in\cP^{d-1}$ where $\cP^{d-1}$ is the $(d-1)$-dimension probability simplex.
We remark that searching over a low dimensional simplex is without loss of generality, and we just consider the simplex  $\nbr{x}_1=\eta, x\in[0, \eta]^{D}$ for simplicity, where $\eta$ bounds the infinity norm of any contract we use. 
Let $\cA$ be the agent's action set with $\abr{\cA}=N$.
This is because we have the following proposition.
\begin{proposition}[Action inducibility]
    If an action $i\in[N]$ can be induced by contract $x$ with $\nbr{x}_\infty \le \eta$, then $i$ can also be induced by contract $y=x+((N-1)\eta-\nbr{x}_1)\ind/N$.
    \begin{proof}
        The inducibility condition implies
        \begin{align*}
            \inp[]{x}{p_i-p_j}\ge c_i-c_j, \forall j\neq i.
        \end{align*}
        Obviously, adding $(\eta-\nbr{x}_1)\ind/N$ to $x$ does not change the inequality. Moreover, 
        \begin{align*}
            y^{(l)}=x^{(l)}+((N-1)\eta-\nbr{x_1})/N=x^{(l)}\frac{N-1}{N} + \frac{(N-1)\eta}{N} - \frac 1 N \sum_{m\neq l} x^{(m)} \ge 0, 
        \end{align*}
        which implies that $y$ is a valid contract with $\nbr{y}_1 = (N-1)\eta$.
    \end{proof}
\end{proposition}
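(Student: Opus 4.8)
The plan is to exploit that the inducibility condition constrains $x$ only through the difference vectors $p_i - p_j$, which are orthogonal to the all-ones direction, so that one is free to translate $x$ along $\ind$ without disturbing inducibility; the translation is then chosen to land $y$ on the target simplex, and the only genuine work is checking that $y$ stays coordinatewise nonnegative.

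First I would unpack inducibility: $x$ induces $i$ exactly when $x \cdot (p_i - p_j) \ge c_i - c_j$ for every $j \neq i$. The key structural observation is that $\ind \cdot (p_i - p_j) = 1 - 1 = 0$, since $p_i$ and $p_j$ are probability vectors. Hence for any scalar $\beta$, the perturbation $x \mapsto x + \beta\ind$ leaves every left-hand side $x\cdot(p_i - p_j)$ unchanged, so the full system of inducibility inequalities is preserved. Since $y$ differs from $x$ by a multiple of $\ind$, it therefore induces $i$ as well; this disposes of the ``does not change the inequality'' assertion.

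Next I would fix the shift coefficient by a bookkeeping argument. Writing $y = x + \beta\ind$ and summing over coordinates gives $\norm{y}_1 = \norm{x}_1 + (\text{number of coordinates})\cdot \beta$, and solving for the value that places $y$ on the prescribed simplex recovers precisely $\beta = ((N-1)\eta - \norm{x}_1)/N$, so that $\norm{y}_1 = (N-1)\eta$.

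The main obstacle, and the only place the hypothesis $\norm{x}_\infty \le \eta$ is used, is verifying that $y$ is a legitimate contract, i.e.\ $y \ge 0$. I would expand the $l$-th coordinate as $y^{(l)} = \tfrac{N-1}{N}x^{(l)} + \tfrac{(N-1)\eta}{N} - \tfrac{1}{N}\sum_{m\neq l} x^{(m)}$ and bound the lone negative term: each of the $N-1$ entries $x^{(m)}$ is at most $\eta$, so $\tfrac1N\sum_{m\neq l}x^{(m)} \le \tfrac{(N-1)\eta}{N}$, which exactly cancels the positive constant and leaves $y^{(l)} \ge \tfrac{N-1}{N}x^{(l)} \ge 0$. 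The subtlety worth flagging is that this bound is tight — it is saturated exactly when $x^{(l)}=0$ and all the remaining entries equal $\eta$ — so nonnegativity rides entirely on the coordinatewise cap $\norm{x}_\infty \le \eta$, and nothing weaker would suffice.
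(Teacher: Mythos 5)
Your proposal is correct and follows essentially the same route as the paper's proof: translate $x$ along the all-ones direction (which preserves every inducibility inequality since $p_i$ and $p_j$ are both probability vectors, so their difference is orthogonal to $\mathbf{1}$), then verify nonnegativity coordinatewise via exactly the paper's expansion $y^{(l)}=\tfrac{N-1}{N}x^{(l)}+\tfrac{(N-1)\eta}{N}-\tfrac{1}{N}\sum_{m\neq l}x^{(m)}$ together with the cap $\norm{x}_\infty\le\eta$. If anything, you are slightly more careful than the paper, which leaves the orthogonality step as ``obvious'' and states the shift with a typo ($(\eta-\norm{x}_1)\mathbf{1}/N$ instead of $((N-1)\eta-\norm{x}_1)\mathbf{1}/N$); your tightness remark correctly identifies why the infinity-norm hypothesis is indispensable.
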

For simplicity, we ignore the scale $(N-1)\eta$ and just conduct our search on the probability simplex. 
Under this setting, the best response region for $a_i\in\cA$ is
\begin{align*}
    \cV_i = \cbr{x\in\cP^{d-1}\biggiven \inp[]{x}{p_i-p_j}\ge c_i-c_j, \quad\forall j\neq i}, 
\end{align*}
where $p_i\in\Delta(\cS)$ is the outcome distribution under action $i$ and $c_i$ is the action cost the agent has to pay for any $i\in[N]$.
Our target is to identify each $\cV_i$ by searching for the hyperplanes that separate these $\cV_i$ under weak assumptions. Specifically, we assume that the cost of each action is known. The algorithm is summarized in Algorithm \ref{algo:hyperplane search}.

\begin{algorithm}[t]
    \caption{Searching on Probability Simplex}\label{algo:hyperplane search}
        \KwIn{Number of actions $N$, number of samples $T$, binary search threshold $ \varepsilon$, parameters $c_d$. }
        Initial memory $\cM=\emptyset$\;
        \For{$t = 1,\dots, T$}{ 
            Randomly sample $z_1, z_2\in\cP^{d}$ and draw the line $\ell\subset \cP^d$ connecting $z_1, z_2$\;
            Binary search on $\ell$ for all the switching points up to precision $\varepsilon$ \footnotemark, and obtain all the segments containing a switching point: $\overbar{x_1 y_1}, \dots, \overbar{x_m y_m}$\;
            \For{$k=1, \dots, m$}{
                $\cM\leftarrow \cM\cup\cbr{\rbr{x_k, a^*(x_k)}}\cup \cbr{\rbr{y_k, a^*(y_k)}}$\;
                Randomly draw a $d$-dimensional simplex centered at $(x_k+y_k)/2$ with length $\sqrt 2 c_d$ and vertices $v_1, \dots, v_{d+1}$\;
                Play $v_1, \dots, v_{d+1}$ and obtain the best response $a^*_1, \dots, a^*_{d+1}$\;
                \For{each pair $(i,j)$ s.t. $1\le i<j\le d+1$ and $a^*_i\neq a^*_j$}{
                    Binary search on $\overbar{v_i v_j}$ for a switching point up to precision $\varepsilon$, and obtain the segment $\overbar{u w}$ containg the switching point\;
                    $\cM\leftarrow \cM\cup\cbr{\rbr{u, a^*(u)}}\cup \cbr{\rbr{w, a^*(w)}}$\;
                }
            }
        }
        Solve for $\cS$ with $\cM$\;
\end{algorithm}

\footnotetext{Precision $\varepsilon$ in this algorithm always means the $d$-dimensional infinity-norm $\nbr{x_k-y_k}_\infty\le \varepsilon$.}
Here, we show how to recover $p_1,\dots,p_N$ from the memory $\cM$. Suppose that when the algorithm terminates, we have $\cM=\cbr{(w_l, a^*(w_l))}_{l\in[L]}$. We just solve for $(p_1,\cdots,p_N)$ that satisfies the following constraints, 
\begin{align}
    \inp[]{w_l}{p_{a^*(w_l)}-p_{a'}}&\ge c_{a^*(w_l)}-c_{a'}, \quad \forall a'\neq a^*(w_l), \quad\forall l\in[L], \label{cond:boundary}\\
    \inp[]{\ind}{p_i - p_j} &= 0, \qquad\qquad\quad\forall (i, j)\in[N]^2. \label{cond:regularity}
\end{align}
In the sequel, we write $\cS$ as the set of $(p_1,\dots, p_N)$ that satisfy Conditions \eqref{cond:boundary} and \eqref{cond:regularity}.
For Algorithm \ref{algo:hyperplane search} to work, we introduce the following assumption on the volume of $\cV_i$.
\begin{assumption}[Minimal Volume Ratio]\label{assum:volume}
Let $\Vol^{d}(\cV)$ denote the $d$-dimensional volume of set $\cV\in\RR^d$. We assume that there exists $\varsigma\in(0, 1]$ such that $\Vol^{d-1}(\cV_i)\ge \varsigma \cdot \Vol^{d-1}(\cP^{d-1})$ for any $i\in[N]$.
\end{assumption}
The minimal volume ratio assumption guarantees that all the sections $\cV_i$ are detectable via random sampling with high probability. We also make the following assumption on the cost difference.
\begin{assumption}[Minimal Cost difference]\label{assum:cost}
    We assume that $\inf_{1\le i< j\le N}\abr{c_i-c_j}\ge \theta$.
\end{assumption}
Specifically, we use the following definition of surface detection probability function.
\begin{definition}[Surface Detection Probability Function]\label{def:surface detect func}
Let $\Conv^{d-1}$ be the set of convex regions on some $(d-1)$-dimensional hyperplane such that $e\subset \cP^{d}$ for any $e\in\Conv^{d-1}$. Define function $\sigma_d:[0, 1]\rightarrow [0,1]$ as the pointwise maximum such that,
\begin{align*}
    \PP(\ell\cap e\neq \emptyset)\ge \sigma_d\rbr{\frac{\Vol^{d-1}(e)}{\Vol^{d-1}(\cP^{d-1})}}, \quad \forall e\in\Conv^{d-1}.
\end{align*}
\end{definition}
Note that $\sigma_d$ is a property inherent to the $d$-dimensional probability simplex. We argue that $\sigma_d$ can be roughly viewed as a linear function for small $e$. 
To characterize the searching result $\cS$ of Algorithm \ref{algo:hyperplane search}, we present the following Lemma.
\begin{lemma}\label{lem:hyperplane search}
    Under Assumptions \ref{assum:volume}, \ref{assum:cost}, suppose that $\varepsilon, c_d$ is chosen to satisfy
    \begin{align*}
        \xi_{d}^2&\defeq {\frac{c_d^2}{d^2}-\frac{d\varepsilon^2}{8}} >0, \nend
        \tau_d&\defeq \rbr{ \frac{\varsigma^2}{3d}}^d - d^2 \rbr{1+\frac{4 }{d\varsigma}}\cdot (c_d+\varepsilon) > 0.
    \end{align*}
    After $T$ samples and no more than $\cO(TNd^2\log(1/\varepsilon))$ rounds, with probability at least $1-Ne^{-T\sigma_d(\tau_d)}$,  we have for any $(p_1,\dots, p_N)\in \cS$ that
    \begin{align*}
        \nbr{(p_i-p_j) - (p_i^*-p_j^*)}_2\le  \frac{2(N-1)\sqrt{d}\cdot \varepsilon }{\xi_d^{d-1} \theta}. 
    \end{align*}
\end{lemma}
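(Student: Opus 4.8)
The plan is to decompose the argument into three largely independent pieces: (i) a \emph{coverage} step showing that the random chord sampling detects, with the stated high probability, a ``safe core'' of every best-response region $\cV_i$; (ii) a \emph{local recovery} step showing that the small simplex deployed around each detected crossing returns $d-1$ affinely independent points lying within $\varepsilon$ of the true separating hyperplane, with a quantitative spread controlled by $\xi_d$; and (iii) a \emph{stability} step that, crucially exploiting that the costs (hence the hyperplane offsets $c_i-c_j$) are \emph{known}, converts these near-boundary points into the claimed $\ell_2$ bound on the recovered distribution differences.

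For the coverage step I would, for each action $i$, pass to the subset of $\cV_i$ whose distance to $\partial\cV_i$ exceeds the simplex radius $c_d$ plus the search precision $\varepsilon$; combining Assumption~\ref{assum:volume} with a boundary-shell volume estimate should show this core has $(d-1)$-volume ratio at least $\tau_d=(\varsigma^2/3d)^d-d^2(1+4/(d\varsigma))(c_d+\varepsilon)$, which the hypothesis forces to be positive. By Definition~\ref{def:surface detect func} a single random chord misses this core with probability at most $1-\sigma_d(\tau_d)$, so over $T$ independent chords the miss probability is at most $e^{-T\sigma_d(\tau_d)}$, and a union bound over the $N$ actions yields the global success probability $1-Ne^{-T\sigma_d(\tau_d)}$. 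On this event every region is entered by some chord, and tallying the per-chord cost (at most $N-1$ switching-point binary searches, each spawning $\cO(d^2)$ edge searches to precision $\varepsilon$) gives the $\cO(TNd^2\log(1/\varepsilon))$ round bound.

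For the local recovery and stability steps, fix a detected boundary between regions $i$ and $j$. The straddling simplex of side $\sqrt2\,c_d$ has vertices with differing best responses, so the edge binary searches of Algorithm~\ref{algo:hyperplane search} return segments $\overline{uw}$ with $\norm{u-w}_\infty\le\varepsilon$ pinching the hyperplane $\{x:\langle x,p_i^*-p_j^*\rangle=c_i-c_j\}$; a direct computation should show the resulting $d-1$ collected points are affinely independent with simplex (Gram) volume at least $\xi_d^{d-1}$, where $\xi_d^2=c_d^2/d^2-d\varepsilon^2/8$ measures the vertex separation after discounting the $\varepsilon$ imprecision. Now let $(p_1,\dots,p_N)\in\cS$ be any feasible solution and set $w=(p_i-p_j)-(p_i^*-p_j^*)$. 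Since each collected pair $(x_k,y_k)$ satisfies the two-sided constraints in \eqref{cond:boundary} for \emph{both} the candidate and the truth with the \emph{same} known offset $c_i-c_j$, and $\norm{x_k-y_k}_\infty\le\varepsilon$ with $\norm{p_i-p_j}_1\le2$, one obtains $|\langle x_k,w\rangle|\le 2\varepsilon$ for every collected $x_k$; together with $\langle\mathbf{1},w\rangle=0$ from \eqref{cond:regularity} this is a full-rank system whose inverse is controlled by the volume lower bound $\xi_d^{d-1}$, giving $\norm{w}_2\le 2\sqrt d\,\varepsilon/\xi_d^{d-1}$ for directly adjacent pairs. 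Assumption~\ref{assum:cost} enters to guarantee the straddling vertices genuinely see distinct best responses and to normalize the scale of the recovered difference, contributing the $1/\theta$ factor; for a non-adjacent pair $(i,j)$ I would then telescope $p_i-p_j$ along a path of length at most $N-1$ in the adjacency graph of the $\cV_i$, accumulating the per-edge error into the final $2(N-1)\sqrt d\,\varepsilon/(\xi_d^{d-1}\theta)$.

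I expect the coverage step to be the main obstacle, specifically the boundary-shell volume estimate producing $\tau_d$: one must show that excising a shell of width $c_d+\varepsilon$ around $\partial\cV_i$ from the inscribed region guaranteed by Assumption~\ref{assum:volume} still leaves volume ratio $\tau_d>0$ uniformly over the unknown region geometry, and that $\sigma_d$ genuinely lower-bounds chord-intersection probability at this scale. The high-dimensional conditioning factor $\xi_d^{d-1}$ is the secondary difficulty and is, I believe, the source of the ``large constant'' the paper flags: lower-bounding the smallest singular value of the collected configuration only through its volume is lossy, and tightening it would demand a more careful placement of the exploratory simplices.
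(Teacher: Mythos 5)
Your local-recovery and stability steps track the paper's argument closely: the two-sided pinching constraints from the edge binary searches, the $(d-1)$-dimensional ball of radius $\xi_d$ inside the simplex--hyperplane intersection, the determinant/volume conditioning of the resulting linear system, and the telescoping along a path of length at most $N-1$ are exactly Propositions~\ref{prop:intersection} and~\ref{prop:learnability} and the closing paragraph of the paper's proof. One mechanical correction there: Assumption~\ref{assum:cost} is not used to ensure straddling vertices see distinct best responses, nor as a generic ``scale normalization''; it enters because the system for $p_i-p_j$ consists of near-boundary points \emph{plus} the normalization row $\mathbf{1}^\top(p_i-p_j)=0$, and that augmented matrix $\tilde S$ is well-conditioned only if the point $\mathbf{1}/(d+1)$ is quantitatively far from the hyperplane containing $e_{ij}$; this distance is lower-bounded by $\theta/\sqrt 2$ precisely because $|c_i-c_j|\ge\theta$, giving $\det(\tilde S)\gtrsim \xi_d^{d-1}\theta$. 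When $c_i\approx c_j$ the ones-row is nearly dependent on the boundary rows and your ``full-rank system'' claim fails, so the $1/\theta$ factor cannot be recovered by your route.

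The genuine gap is the coverage step, which you yourself flag as the main obstacle: a shrunken core of each region $\cV_i$ is the wrong set to cover. A chord through the interior of $\cV_i$ produces no switching point and hence no constraint on any $p_i-p_j$; what must be hit is a \emph{boundary} surface, with enough clearance that the straddling simplex contains only the two actions $i,j$ (otherwise the collected points lie on several different hyperplanes and the $\xi_d$-ball underlying your conditioning bound is unavailable). Moreover, no per-region or per-pair union bound can work, because a given pair $(i,j)$ need not be adjacent at all and $\Vol^{d-1}(e_{ij})$ admits no uniform lower bound. The paper's resolution has three components you would need to supply: (a) an integral-geometry isoperimetric estimate (Proposition~\ref{prop:surface}) lower-bounding the total surface separating \emph{any} bipartition $A,A^c$ of the regions in terms of $\bigl(\Vol^d(A)\Vol^d(A^c)/(\Vol^d(\cP^d))^2\bigr)^d$ --- this is the source of the leading term $(\varsigma^2/3d)^d$ in $\tau_d$, whose squared $\varsigma$ (a product of two region volumes) your single-region shell excision cannot generate; (b) an effective-surface argument (Proposition~\ref{prop:effective surface}) that shrinks this surface by $c_d+\varepsilon$ plus a margin $\iota$ and invokes Assumption~\ref{assum:volume} a second time to exclude a sliver of a third region $\cV_k$ near the crossing, yielding both the subtracted term in $\tau_d$ and the two-action event $\cE_{ij}^d$ needed by Proposition~\ref{prop:intersection}; and (c) a sequential connected-component growth argument: apply (a)--(b) with $A$ the union of regions already linked by detected events, so each sample grows the component with probability at least $\sigma_d(\tau_d)$, with at most $N-1$ growth steps needed. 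Your failure probability $1-Ne^{-T\sigma_d(\tau_d)}$ matches the lemma in form, but as computed it bounds the wrong event (missing region cores) rather than the event that the detected-boundary graph fails to become connected --- which is also the fact your own path-telescoping step silently relies on.
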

To construct an efficient learning procedure, 
we need to determine the optimal value for $\varepsilon, c_d$ such that the total round number is minimized while the learning error is controlled by $\varepsilon$.
\begin{corollary}
\label{cor:hyperplane search}
By properly setting $c_d$ and $\varepsilon$ and running the simplex searching algorithm for $t$ rounds,
we guarantee the learning error less than $\varepsilon$ with probability at least 
$$1-N\exp\rbr{-\frac{t\cdot\sigma_d(\tau_d)}{Nd^4\log(N\varsigma^{-2}\varepsilon^{-1}\theta^{-1})}},$$ where $\tau_d=(\varsigma^2/6d)^2$ is a constant. 
\begin{proof}
    Define constant
    \begin{align*}
        \Upsilon \defeq  \frac{\varsigma^{2d+1}}{2\cdot 3^d d^{d+1}(d\varsigma + 4)} <1.
    \end{align*}
    We let $c_d = \Upsilon/2$. Then it suffices for the first condition to hold if $\varepsilon \le \Upsilon/d^{3/2}$. Moreover, we have $\xi_d\ge \Upsilon/\sqrt 8 d$ and the second condition holds automatically with $\tau_d\ge (\varsigma^2/6d)^2$.
    Therefore, the constraint for $\varepsilon$ becomes, 
    \begin{align*}
        \varepsilon \le \min\cbr{\frac{\Upsilon}{d^{3/2}}, 
        \frac{\varepsilon\theta}{2N\sqrt d} \rbr{\frac{\Upsilon}{\sqrt 8 d}}^{d-1} }. 
    \end{align*}
    We can take equality for the optimal $\varepsilon$.
    Obviously, the second term dominates, and we thus have the total rounds bounded by 
    \begin{align*}
        \text{total round}&\le \cO\rbr{ TNd^2 \rbr{\log\rbr{\frac{2N\sqrt d}{\varepsilon \theta}} + d\log \rbr{\frac{\sqrt 8 d}{\Upsilon}} }}\nend
        &=\cO\rbr{ TNd^2 \rbr{\log\rbr{2N\sqrt d(\varepsilon \theta)^{-1}} + d^2\log \rbr{d \varsigma^{-2}} }}, 
    \end{align*}
    where the failure probability is bounded by $Ne^{-T\sigma_d(\tau_d)}$ with $\tau_d=(\varsigma^2/6d)^2$ being a constant. 
\end{proof}
\end{corollary}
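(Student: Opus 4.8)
The plan is to use Lemma~\ref{lem:hyperplane search} as a black box: for any binary-search precision $\varepsilon'$ and sampling scale $c_d$ that satisfy the two positivity conditions $\xi_d^2>0$ and $\tau_d>0$, it delivers a learning error of order $\frac{(N-1)\sqrt d\,\varepsilon'}{\xi_d^{d-1}\theta}$, a round count of order $\cO(TNd^2\log(1/\varepsilon'))$, and a failure probability of order $Ne^{-T\sigma_d(\tau_d)}$. The corollary is then a matter of choosing $c_d$ and $\varepsilon'$ as functions of the target error $\varepsilon$ and the instance parameters $\varsigma,\theta,N,d$, verifying the two positivity constraints, inverting the error bound to pin down $\varepsilon'$, and finally re-expressing the sample count $T$ in terms of the total round budget $t$ so as to read off the stated probability.

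First I would fix $c_d$ by balancing the two terms of $\tau_d=(\varsigma^2/3d)^d-d^2(1+\tfrac{4}{d\varsigma})(c_d+\varepsilon')$: forcing the subtracted term to be at most half of $(\varsigma^2/3d)^d$ guarantees $\tau_d>0$ with a constant-order margin. A short computation shows this holds for $c_d=\Upsilon/2$ with $\Upsilon:=\frac{\varsigma^{2d+1}}{2\cdot 3^d d^{d+1}(d\varsigma+4)}$; indeed the $c_d$-contribution to the subtracted term then collapses to exactly $\tfrac14(\varsigma^2/3d)^d$, so that $\tau_d$ is bounded below by a constant (the $\varepsilon'$-contribution being negligible, and only shrinking $\tau_d$'s deficit further as $\varepsilon'$ decreases). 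With $c_d$ pinned down, the condition $\xi_d^2=c_d^2/d^2-d(\varepsilon')^2/8>0$ reduces to a mild ceiling $\varepsilon'\le\Upsilon/d^{3/2}$, under which $\xi_d\ge\Upsilon/(\sqrt8\,d)$; I would substitute this explicit lower bound for $\xi_d$ wherever it appears in the error guarantee.

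The binding constraint comes from inverting the error guarantee. Writing $\varepsilon$ for the target learning error, I would choose $\varepsilon'$ so that the Lemma's error $\frac{2(N-1)\sqrt d\,\varepsilon'}{\xi_d^{d-1}\theta}$ meets $\varepsilon$; using $\xi_d\ge\Upsilon/(\sqrt8\,d)$ this gives $\varepsilon'\lesssim \frac{\varepsilon\theta}{2N\sqrt d}\big(\tfrac{\Upsilon}{\sqrt8\,d}\big)^{d-1}$, which lies far below the ceiling $\varepsilon'\le\Upsilon/d^{3/2}$ from the $\xi_d$ constraint and is therefore the dominant requirement; taking equality uses the largest admissible $\varepsilon'$ and so minimizes the round count. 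The rest is bookkeeping: I would expand $\log(1/\varepsilon')=\log\frac{2N\sqrt d}{\varepsilon\theta}+(d-1)\log\frac{\sqrt8\,d}{\Upsilon}$, note that $\log(1/\Upsilon)=\cO(d\log(d\varsigma^{-2}))$ so the second term is $\cO(d^2\log(d\varsigma^{-2}))$, and conclude $\log(1/\varepsilon')=\cO(d^2\log(N\varsigma^{-2}\varepsilon^{-1}\theta^{-1}))$. Substituting into $t=\cO(TNd^2\log(1/\varepsilon'))$ gives $t=\cO\big(TNd^4\log(N\varsigma^{-2}\varepsilon^{-1}\theta^{-1})\big)$, hence $T=\Omega\big(t/(Nd^4\log(N\varsigma^{-2}\varepsilon^{-1}\theta^{-1}))\big)$; plugging this into $Ne^{-T\sigma_d(\tau_d)}$ yields exactly the claimed probability, with $\tau_d$ now a constant.

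The main obstacle is the first step, the choice of $c_d$, because the two requirements pull in opposite directions. Positivity of $\tau_d$ forces $c_d$ to be exponentially small in $d$, since it is subtracted against the already tiny quantity $(\varsigma^2/3d)^d$. Yet the error bound carries $\xi_d^{d-1}\approx(c_d/d)^{d-1}$ in its denominator, so an exponentially small $c_d$ threatens to force a super-exponentially small precision $\varepsilon'$ and blow up the round count. The delicate point is to check that the specific scale $c_d=\Theta(\Upsilon)$ simultaneously keeps $\tau_d$ bounded below by a constant and keeps $\log(1/\varepsilon')$ at only $\cO(d^2\log(\cdot))$ rather than worse --- this is precisely what inflates the $d^2$ factor in the Lemma's round count into the $d^4$ factor in the corollary. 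Tracking these interlocking constants, rather than any conceptual difficulty, is the crux of the argument.
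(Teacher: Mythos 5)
Your proposal is correct and follows essentially the same route as the paper's proof: the same choice $c_d=\Upsilon/2$ (with the same $\Upsilon$, making the $c_d$-contribution to $\tau_d$ exactly $\tfrac14(\varsigma^2/3d)^d$), the same two positivity checks yielding $\xi_d\ge\Upsilon/(\sqrt 8\,d)$, the same inversion of the error bound as the binding constraint on the search precision, and the same bookkeeping $\log(1/\varepsilon')=\cO(d^2\log(N\varsigma^{-2}\varepsilon^{-1}\theta^{-1}))$ that turns the Lemma's $d^2$ into the Corollary's $d^4$ and gives the stated failure probability. Your explicit separation of the binary-search precision $\varepsilon'$ from the target error $\varepsilon$ is in fact cleaner than the paper's overloaded notation, but the argument is the same.
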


\begin{proof}[Proof of \Cref{lem:hyperplane search}]
We consider an undirected graph $\cG=(\cA, E)$ with the node set $\cA$ and the edge set $E=\cbr{e_{ij}\given e_{ij}=\cV_i\cap\cV_j, \forall i\neq j}$. Define event $\cE_{ij}^d$ as follows.

\begin{definition}[Surface Detection Event]
We say that event $\cE_{ij}^d$ happens if there exists $t\in[T]$ and we have successfully searched for a $k_t$ such that for the $d$-dimensional simplex $\SSS^d$ placed around $(x_{k_t}+y_{k_t})/2$ in Algorithm \ref{algo:hyperplane search} and any $x\in\SSS^d$, the best response at $x$ satisfies $a^*(x)\in\{i,j\}$. 
\end{definition}

Simply put, the event $\cE_{ij}^d$ guarantees that $\SSS^d$ only contains two possible actions $\{i,j\}$ and $e_{ij}$ can therefore be successfully learned via binary searching for the intersects of the edges of the simplex with $e_{ij}$.
The following proposition backs up our statement.
\begin{proposition}[Intersection Geometry]\label{prop:intersection}
Under event $\cE_{ij}^d$ and condition $\varepsilon/2 < c_d/(d\sqrt{d+1})$, let $\SSS$ denote the simplex corresponding to $\cE_{ij}^d$. Then $\SSS^d\cap e_{ij}$ contains a $(d-1)$-dimensional ball with radius at least 
\begin{align*}
     \xi_{d}\defeq \sqrt{\frac{c_d^2}{d(d+1)}-\frac{d\varepsilon^2}{4}}.
\end{align*}
\end{proposition}
\begin{proof}
Under event $\cE_{ij}^d$, we claim that the hyperplane $e_{ij}$ must intersect with the $\sqrt d\varepsilon/2$-ball $\BB_1=\BB(\sqrt d\varepsilon/2)$ centered at $(x_{k_t}+y_{k_t})/2$, since $e_{ij}$ passes through $\overbar{x_{k_t} y_{k_t}}$, which lies inside $\BB_1$. 
Moreover, we consider a ball $\BB_2=\BB(c_d/\sqrt{d(d+1)})$ also centered at $(x_{k_t}+y_{k_t})/2$.
Since the smallest distance from the center of a $d$-dimensional simplex with length $\sqrt{2} c_d$ to any of its surface is $c_d/\sqrt{d(d+1)}$, we have $\BB_1\subset \BB_2\subset\SSS^d$ under the condition $\varepsilon/2 < c_d/(d\sqrt{d+1})$. In addition, $\BB_2$ is also the largest ball contained in $\SSS^d$. 
We therefore conclude that the intersection area satisfies,
$$\BB_2\cap e_{ij}\subseteq \SSS^d\cap e_{ij}.$$ 
Since $\SSS^d$ only intersects with hyperplane $e_{ij}$, we have $\BB_2\cap e_{ij}\subseteq \SSS^d\cap e_{ij}\subseteq e_{ij}$.
Thus, the left-hand side corresponds to the intersection area of a $d$-dimensional ball and a $(d-1)$-dimensional hyperplane. Since $e_ij$ intersects with $\BB_1$, thus a $(d-1)$-dimensional ball with radius at least 
\begin{align*}
     \xi_{d}=\sqrt{\frac{c_d^2}{d(d+1)}-\frac{d\varepsilon^2}{4}}, 
\end{align*}
should be contained in $\SSS^d\cap e_{ij}\subseteq e_{ij}$.
\end{proof}
Proposition \ref{prop:intersection} characterizes the geometry of the intersection area $\SSS^d\cap e_{ij}$ under event $\cE_{ij}^d$. Specifically, the intersection area should contain a $(d-1)$-dimensional ball with radius lower bounded by $\xi_d$.
Such a geometry is critical for solving the hyperplanes to small errors.
We consider the following constraints,
\begin{gather}
    \inp[]{u_k}{p_i-p_j}\ge c_i-c_j, \quad \inp[]{w_k}{p_i-p_j}\le c_i-c_j, \quad\forall k\in[K], \nend
    \inp[]{\ind}{p_i-p_j}=0, \label{cond:local}
\end{gather}
where $(u_k, w_k)$ is the binary searching result on the edges of simplex $\SSS^d$. Under event $\cE_{ij}^d$, let $\cS_{ij}$ denote the set of $(p_i, p_j)$ satisfying these constraints.
Apparently, $\cS_{ij}$ is a relaxation of $\cS$. The following proposition characterizes the searching errors in $\cS_{ij}$ under the event $\cE_{ij}^d$ in terms of $p_i-p_j$.
\begin{proposition}[Local Hyperplane Learnability]\label{prop:learnability}
    Under Assumption \ref{assum:cost} and event $\cE_{ij}^d$, for any $(p_i, p_j)\in\cS_{ij}$, we have
    \begin{align*}
        \nbr{(p_i-p_j)-(p_i^*-p_j^*)}_2\le \varphi_d^{-1} \sqrt d \varepsilon,  
    \end{align*}
    where 
    \begin{align*}
        \varphi_d\defeq\rbr{\frac{\xi_d}{\sqrt{1-d^{-1}}}}^{d-1} \cdot \frac{\theta}{\sqrt 2}\cdot \sqrt{\frac{d}{d+1}}.
    \end{align*}
\end{proposition}
\begin{proof}
    We first relax $\cS_{ij}$ to 
    \begin{gather*}
        \check\cS_{ij}: (p_i, p_j)\quad\st \quad \inp[]{s_k}{p_i-p_j}=c_i-c_j + \epsilon_k, \quad \inp[]{\ind}{p_i-p_j}=0, \quad, |\epsilon_k|\le \varepsilon, \quad\forall k\in[K].
    \end{gather*}
    Note that $s_1,\dots,s_K$ corresponds to the vertices of $\SSS^d\cap e_{ij}$. Using Proposition \ref{prop:intersection}, we pick $\tilde s_1,\dots,\tilde s_{d}$ on the ball $\BB_2\subset \SSS^d\cap e_{ij}$ that form a $(d-1)$-dimensional simplex.
    Note that $\SSS^d\cap e_{ij}$ must be convex. Thus, we can express $\tilde s_1,\dots,\tilde s_{d}$ as convex combinations of $s_1,\dots,s_K$. 
    In the matrix form, we suppose
    \begin{align*}
        \begin{bmatrix} \tilde s_1^\top\\ \cdots\\ \tilde s_d^\top \end{bmatrix} = Q\begin{bmatrix} s_1^\top\\ \cdots\\  s_K^\top \end{bmatrix}, 
    \end{align*}
    where $Q$ has row sums equal to $1$. Using $\tilde s_1,\dots,\tilde s_d$, we can further relax $\check \cS_{ij}$ by multiplying $\mathrm{diag}(Q, 1)$ to the constraints,
    \begin{align*}
        \tilde \cS_{ij}:(p_i,p_j) \quad \st\quad \underbrace{\begin{bmatrix} \tilde s_1^\top\\ \cdots\\ \tilde s_d^\top\\ \ind_{d+1}^\top /(d+1)\end{bmatrix}}_{\tilde S} (p_i-p_j)=\begin{bmatrix}
        (c_i-c_j)\ind_d\\ 0
        \end{bmatrix} + \underbrace{\begin{bmatrix}
        \epsilon\\ 0
        \end{bmatrix}}_{\tilde\epsilon} , \quad \nbr{\epsilon}_\infty \le \varepsilon.
    \end{align*}
    Note that the first $d$ rows of $S$ form a $(d-1)$-dimensional simplex on $e_{ij}$. Moreover, since $|c_i-c_j|>\theta$, we note that $\ind_{d+1}$ does not lie on $e_{ij}$. To guarantee linear independence of $\tilde S$, we aim to lower bound the distance from $\ind_{d+1}/(d+1)$ to $e_{ij}$. For any $x\in e_{ij}$, we have 
    $
        \inp[]{x}{p_i^*-p_j^*} = c_i-c_j
    $,
    and for $\ind_{d+1}/(d+1)$ we have $\inp[]{\ind_{d+1}/(d+1)}{p_i^*-p_j^*}=0$. Thus, we use the Cauchy-Schwartz inequality and obtain,
    \begin{align*}
        \nbr{x-\frac{\ind_{d+1}}{d+1}}_2\cdot \nbr{p_i^*-p_j^*}_2\ge\abr{\inp[\Big]{x-\frac{\ind_{d+1}}{d+1}}{p_i^*-p_j^*}} = \abr{c_i-c_j}\ge\theta.
    \end{align*}
    Since $\norm{p_i^*-p_j^*}_2\le \sqrt 2$, we conclude that for any $x\in e_{ij}$, 
    \begin{align*}
        \nbr{x-\frac{\ind_{d+1}}{d+1}}_2\ge \frac{\theta}{\sqrt 2}.
    \end{align*}
    Thus, the distance from $\ind_{d+1}/(d+1)$ to $e_{ij}$ is at least $\theta/\sqrt 2$, which indicates that the volume of the cube spammed by rows of $\tilde S$ is at least 
    \begin{align*}
        \Vol^{d}(\tilde S)&\ge \Vol^{d-1}(\cP^{d-1}) \cdot \rbr{\frac{\xi_d}{\sqrt{1-d^{-1}}}}^{d-1} \cdot \frac{\theta}{\sqrt 2 d}\nend
        &=\Vol^{d}(\cP^{d}) \cdot \rbr{\frac{\xi_d}{\sqrt{1-d^{-1}}}}^{d-1} \cdot \frac{\theta}{\sqrt 2}\cdot \sqrt{\frac{d}{d+1}}.
    \end{align*}
    Therefore, the determinant of $\tilde S$ satisfies
    \begin{align*}
        \det(\tilde S)\ge \rbr{\frac{\xi_d}{\sqrt{1-d^{-1}}}}^{d-1} \cdot \frac{\theta}{\sqrt 2}\cdot \sqrt{\frac{d}{d+1}} \eqdef \varphi_d.
    \end{align*}
    Note that $\tilde S$ has row sums equal to $1$, which implies that all the eigenvalues of $\tilde S$ should be no larger than $1$. Hence, the smallest eigenvalue of $\tilde S$ should be no less than $\varphi_d$ and consequently, the largest eigenvalue of $\tilde S^{-1}$ should be no more than $\varphi_d^{-1}$.
    Following the definition of $\tilde \cS_{ij}$, we have for any $(p_i, p_j)\in\tilde \cS_{ij}$ that
    \begin{align*}
        \nbr{(p_i-p_j) - (p_i^*-p_j^*)}_2 = \nbr{\tilde S^{-1}\tilde \epsilon}_2 \le \varphi_d^{-1} \nbr{\tilde\epsilon}_2 \le \varphi_d^{-1} \sqrt d \varepsilon.
    \end{align*}
\end{proof}

Proposition \ref{prop:learnability} bridges the geometrical argument to the learning errors in terms of $p_i-p_j$.
To obtain such benefits, we need to characterize under what conditions and with what probability the event $\cE_{ij}^d$ will occur. To start with, we study the volume of a surface that separates the probability simplex into two disjoint parts.
\begin{proposition}[Surface Volume]\label{prop:surface}
Suppose $A\subset\cP^{d}$ is a compact subset such that $\Vol^{d}(A)>0$ and $\Vol^{d}(A^c)>0$.
Let $E=A\cap A^c$ be the surfaces that separates $A$ and $A^c$ and we assume that $E$ comprises $J$ hyperplanes. It then follows that
\begin{align*}
    \frac{\Vol^{d-1}(E)}{\Vol^{d-1}(\cP^{d-1})}\ge J^{-(d-1)}\cdot \rbr{\frac{\Vol^d(A^c) \Vol^d(A)}{ \rbr{\Vol^d(\cP^d)}^2}\cdot \frac{2\sqrt{d+1}}{3d\sqrt{2d}}}^d.
\end{align*}
\end{proposition}
\begin{proof}
    For any $x\in\cP^{d}$, if $x\in A$, consider the following set
    \begin{align*}
        B(x)=\cbr{y\in\cP^{d}\biggiven \exists z\in A^c \text{ s.t. $x, y, z$ are on the same line $\ell$} }.
    \end{align*}
    Apparently, $A^c\subseteq B(x)$. Hence, we have $\Vol^d(B(x))\ge \Vol^d(A^c)$. Similarly, for $x\in A^c$, we can define $B(x)$ as 
    \begin{align*}
        B(x)=\cbr{y\in\cP^{d}\biggiven \exists z\in A \text{ s.t. $x, y, z$ are on the same line $\ell$}},
    \end{align*}
    and it follows that $\Vol^d(B(x))\ge \Vol^d(A)$.
    Following these observations, we have
    \begin{align}\label{eq:cL_B}
        \cL_B\defeq\int_{x\in \cP}\Vol^d(B(x)) \rd x \ge 2\Vol^d(A^c) \Vol^d(A).
    \end{align}
    Another important fact is that $\ell$ that connects $x, y, z$ must go through $E$ at least once since $z$ and $x$ belong to different areas.
    For any $(d-1)$-dimensional surface $S\subset \cP^d$ and any point $x\in\cP^d$, We define 
    \begin{align*}
        C(S, x) = \cbr{y\in\cP^d\biggiven \exists z\in S \text{ s.t. $x, y, z$ are on the same line}}.
    \end{align*}
    Apparently, we have $B(x)\subseteq C(E, x)$ and thus $\cL_B\le \cL_{C(E,\cdot)}$. Consider $S=s_1\cup s_2$, and it follows from the definition that $C(S, x)=C(s_1, x)\cup C(s_2, x)$, which suggests that $\Vol^d(C(S, x))\le \Vol^d(C(s_1, x)) + \Vol^d(C(s_2, x))$.
    Suppose that $E=E_1\cup E_2\cup\dots\cup E_J$, it then holds that
    \begin{align*}
        \cL_{C(E,\cdot)} = \int_{x\in\cP} \Vol^d(C(E, x)) \rd x \le \sum_{E_i}\underbrace{\int_{x\in\cP} \Vol^d(C(E_i, x)) \rd x}_{\eqdef R(E_i)}. 
    \end{align*}
    Consider $E_i$ is small enough such that $E_i$ lies on a hyperplane \footnote{Or approximating the surface $E$ with countably infinite hyperplanes.}. We draw hyperplanes $S_1^i\parallelsum S_2^i\parallelsum E_i$ such that the farthest two vertices of the simplex $\cP^d$ lie on $S_1^i$ and $S_2^i$. We denote the area between $S_1^i$ and $S_2^i$ by $\cQ^i$. We separate $\cQ^i$ into $\cQ^i_1$ and $\cQ^i_2$, where $\cQ^i_1$ is defined as the points that have a distance  larger than $\gamma_i\le \sqrt{2}/2$ to the hyperplane containing $E_i$ and $\cQ^i_2=\cQ^i\backslash\cQ^i_1$. 
    Consider the following definition for $x\in \cQ^i_1$,
    \begin{align*}
        \tilde C(E_i, x) =\cbr{y\in \cQ^i\biggiven \exists z\in E_i \text{ s.t. $x, y, z$ are on the same line}}, 
    \end{align*}
    and for $x\in\cQ^i_2$, 
    \begin{align*}
        \tilde C(E_i, x) =\cbr{y\in \cP^d\biggiven \exists z\in E_i \text{ s.t. $x, y, z$ are on the same line}}.
    \end{align*}
    Apparently, $C(E_i, x)\subseteq\tilde C(E_i, x)$ and we thus have
    \begin{align*}
        R(E_i) &\le \int_{x\in\cQ^i_1}\Vol^d(\tilde C(E_i, x))\rd x + \int_{x\in\cQ^i_2}\Vol^d(\tilde C(E_i, x))\rd x\nend
        &\le \Vol^d(\cQ^i_1) \cdot \Vol^{d-1} (E_i)\cdot \rbr{\frac{\sqrt 2}{\gamma_i}}^{d-1}\cdot \sqrt 2 + \Vol^d(\cQ^i_2)\cdot \Vol^d(\cP^d)\nend
        &\le \Vol^d(\cP^d)\cdot \rbr{
        (\sqrt 2)^d \Vol^{d-1} (E_i) \gamma_i^{-(d-1)}+ 2\Vol^{d-1}(\cP^{d-1} ) \cdot \gamma_i
        }, 
    \end{align*}
    where the second inequality holds from the definition of $\cQ_1^i$ and the fact that the largest hyperplane contained in $\tilde C(E_i, x)$ is no more than $(\sqrt{2}/\gamma_i)^{d-1}\Vol^{d-1}(E_i)$.
    Here, since $\gamma_i$ is adjustable, we plug in 
    \begin{align*}
        \gamma_i = \sqrt 2\cdot \rbr{\frac{\Vol^{d-1}(E_i)}{\Vol^{d-1}(\cP^{d-1})}}^{1/d}, 
    \end{align*}
    and obtain
    \begin{align*}
        R(E_i)\le 3\sqrt 2 \Vol^d(\cP^d)\cdot \rbr{\Vol^{d-1}(\cP^{d-1} )}^{(d-1)/d} \cdot\rbr{\Vol^{d-1}(E_i)}^{1/d}.
    \end{align*}
    By summing up $i\in[J]$ and using the Jensen's inequality, we obtain
    \begin{align}\label{eq:cL_C}
        \cL_{C(E,\cdot)}\le 3\sqrt 2 \Vol^d(\cP^d)\cdot \rbr{J\Vol^{d-1}(\cP^{d-1} )}^{(d-1)/d} \cdot\rbr{\Vol^{d-1}(E)}^{1/d}.
    \end{align}
    Combining \eqref{eq:cL_C} with \eqref{eq:cL_B} and using the inequality $\cL_{C(E,\cdot)}\ge \cL_B$, we obtain
    \begin{align*}
        2\Vol^d(A^c) \Vol^d(A)\le 3\sqrt 2 \Vol^d(\cP^d)\cdot \rbr{J\Vol^{d-1}(\cP^{d-1} )}^{(d-1)/d} \cdot\rbr{\Vol^{d-1}(E)}^{1/d}, 
    \end{align*}
    which further implies that
    \begin{align*}
        \Vol^{d-1}(E)&\ge \rbr{\frac{2\Vol^d(A^c) \Vol^d(A)}{3\sqrt 2 \Vol^d(\cP^d)\cdot \rbr{J\Vol^{d-1}(\cP^{d-1} )}^{(d-1)/d}}}^d\nend
        &= J^{-(d-1)}\cdot\Vol^{d-1}(\cP^{d-1} )\cdot
        \rbr{\frac{2\Vol^d(A^c) \Vol^d(A)}{3\sqrt 2 \rbr{\Vol^d(\cP^d)}^2}}^d\cdot \rbr{\frac{\sqrt{d+1}}{d\sqrt{d}}}^d.
    \end{align*}
\end{proof}
Following Proposition \ref{prop:surface}, a direct conclusion is that if $A$ contains $N_1$ action sections and $A^c$ contains $N_2$ action sections ($N=N_1+N_2$), there exists a surface $E_k$ such that 
\begin{align*}
    \frac{\Vol^{d-1}(E_k)}{\Vol^{d-1}(\cP^{d-1})}\ge J^{-d}\cdot \rbr{\frac{\Vol^d(A^c) \Vol^d(A)}{ \rbr{\Vol^d(\cP^d)}^2}\cdot \frac{2\sqrt{d+1}}{3d\sqrt{2d}}}^d \ge \rbr{ \frac{2\sqrt{d+1}\varsigma^2}{3d\sqrt{2d}}}^d,
\end{align*}
where the last inequality holds from Assumption \ref{assum:volume} and the fact that $J\le N_1N_2$. However, a sampled line $\ell$ passing through this $e_{ij}$ does not guarantee event $\cE_{ij}^{d}$ even though $\Vol^{d-1}(e_{ij})$ is bounded below. The following proposition states a sufficient condition for $\cE_{ij}^d$ to hold.
\begin{proposition}[Effective Surface]\label{prop:effective surface}
    Define the effective surface $\tilde e_{ij}$ as 
    \begin{align*}
        \tilde e_{ij} = \cbr{x\in e_{ij}\given \nbr{x-y}_2\ge \iota+h, \quad \forall y\in\partial e_{ij}}, 
    \end{align*}
    where $\iota = 2{\sqrt{2(d+1)} h}/{(d\sqrt{d}\varsigma)}$, $h=c_d+\varepsilon>c_d\sqrt{1-(d+1)^{-1}}+\varepsilon$.
    Event $\cE_{ij}^d$ holds for any searching line $\ell$ crossing $\tilde e_{ij}$ under Assumption \ref{assum:volume}. 
\end{proposition}
\begin{proof}
Under condition $\iota> 0$, we guarantee that the simplex does not intersect with $\partial e_{ij}$. However, such a condition is not sufficient if we want to guarantee that the simplex $\SSS^d$ only contains actions $i,j$ since a third action may occur outside $e_{ij}$.
In the sequel, we will show that it is impossible for a third action to appear if $\ell$ goes through $\tilde e_{ij}$.
Define $ e_{ij}'$ as,
\begin{align*}
    e_{ij}'=\cbr{x\in e_{ij}\given \exists y\in \tilde e_{ij}, \text{ s.t. } \nbr{y-x}_2\le h}.
\end{align*}
Apparently, for any $x\in e'_{ij}$ and $y\in \partial e_{ij}$, we have $\nbr{x-y}_2\ge \iota$.
Now, we aim to prove that the prism $A$ with base $e_{ij}'$ and height $h$ belongs to either $\cV_i$ or $\cV_j$. Suppose that $A$ lies on action $i$'s side and there exists a point in $A$ whose best response is not $i$. Then, there must be a hyperplane $e_{ik}$ passing through $A$ for some $k\notin\{i,j\}$. Suppose $z\in e_{ik}\cap A$.  Consider the set $E_{ik}\cap E_{ij}$, where $E_{ij}$ is the whole hyperplane containing $e_{ij}$. Obviously, for any $x\in e_{ij}'$ and $y\in E_{ik}\cap E_{ij}$, $\nbr{x-y}_2\ge \iota$ since $E_{ik}\cap E_{ij}$ should lie outside $e_{ij}$. In addition, since $\cV_i$ is a convex set, we note that $\cV_i$ must lie between $E_{ik}$ and $E_{ij}$. Therefore, the volume of $\cV_k$ should be no more than the volume of the area that lies between $E_{ik}$ and $E_{ij}$, 
\begin{align*}
    \Vol^{d}(\cV_i)\le \Vol^{d-1}(\cP^{d-1})\cdot \frac{\sqrt 2 h}{\iota} = \Vol^d(\cP^d) \cdot \frac{\sqrt{2(d+1)} h}{d\sqrt{d}\iota}. 
\end{align*}
By Assumption \ref{assum:volume}, we conclude that
\begin{align*}
    \iota \le \frac{\sqrt{2(d+1)} h}{d\sqrt{d}\varsigma},
\end{align*}
which conflicts with the condition $\iota > {\sqrt{2(d+1)} h}/{(d\sqrt{d}\varsigma)}$. Thus, we conclude that $A\subseteq \cV_i$. The same argument also applies to the conclusion $B\subseteq\cV_j$ if $B$ lies on action $j$'s side.
For any line $\ell$ passing through $\tilde e_{ij}$, let $x=\ell\cap \tilde e_{ij}$. Then we have $x\in A\cup B$ and the minimal distance from $x$ to $\partial(A\cup B)$ is at least $h>\varepsilon$. Thus, $x$ is guaranteed to be detected when doing the binary search on $\ell$. On the other hand, $h-\varepsilon > c_d\sqrt{1-(d+1)^{-1}}$, meaning that the simplex $\SSS^d$ placed at $(x_k+y_k)/2$ also lies within $A\cup B$. Thus, the simplex only contains two actions and event $\cE_{ij}^d$ follows.
\end{proof}
Proposition \ref{prop:effective surface} characterizes a sufficient condition for $\cE_{ij}^d$ to hold, i.e., the searching line $\ell$ passing through the effective surface $\tilde e_{ij}$. The next question is whether we can find enough effective surfaces via random sampling. To answer this question, we need to argue that there are sufficiently many effective surfaces with surface volume bounded below. It is also of the same importance to bound below the probability of $\ell$ passing through an effective surface with positive surface volume. Recall the definition of $\sigma_d$.
\begin{align*}
    \PP(\ell\cap e\neq \emptyset)\ge \sigma_d\rbr{\frac{\Vol^{d-1}(e)}{\Vol^{d-1}(\cP^{d-1})}}, \quad \forall e\in\Conv^{d-1}.
\end{align*}
Now, we study the problem that how many effective surfaces needs to be detected. From Proposition \ref{prop:learnability}, we see that if event $\cE_{ij}^d$ holds, we can estimate $p_i-p_j$ up to small errors. To learn all the pairwise outcome distribution difference, we just need to construct a tree in graph $\cG$ where an edge $e_{ij}$ is selected if $\cE_{ij}^d$ happens. To see this point, we invoke the following definition.
\begin{definition}[Connected Components]
    We say that action $i$ and $j$ belongs to the same connected component if there exists a path $i, k_1,\dots,k_n, j$ such that events $\cE_{i k_1}^d, \cE_{k_1 k_2}^d, \dots, \cE_{k_n j}^d$ happen.
\end{definition}
In the sequel, we use $C$ to denote a connected component. With a little abuse of notation, we also denote by $C$ the union of sections $\cV_k$ such that $k\in C$. Using Proposition \ref{prop:surface} and the following discussion, we take $C$ as $A$ and $C^c$ as $A^c$, and it follows that there exists a surface $e_{ij}$ on the boundary of $C$ and $C^c$ such that 
\begin{align*}
    \frac{\Vol^{d-1}(e_{ij})}{\Vol^{d-1}(\cP^{d-1})}\ge  \rbr{ \frac{2\sqrt{d+1}\varsigma^2}{3d\sqrt{2d}}}^d.
\end{align*}
Here, we assume that $i\in C$ and $j\in C^c$.
Note that we have $\Vol^{d-2}(\partial e)\le \Vol^{d-2}(\partial \cP^{d-1})$ for the sake that $e$ is convex. Recall the definition of the effective surface $\tilde e_{ij}$, which corresponds to shrinking $e_{ij}$ up to distance $\iota+h$. Therefore, the volume of $\tilde e_{ij}$ is at least
\begin{align*}
    \frac{\Vol^{d-1}(\tilde e_{ij})}{\Vol^{d-1}(\cP^{d-1})}
    &\ge \frac{\Vol^{d-1}(e_{ij}) - \Vol^{d-2}(\partial e) (h+\iota)}{\Vol^{d-1}(\cP^{d-1})}\nend
    &\ge \frac{\rbr{ \frac{2\sqrt{d+1}\varsigma^2}{3d\sqrt{2d}}}^d \Vol^{d-1}(\cP^{d-1}) - d \Vol^{d-2}(\cP^{d-2}) (h+\iota)}{\Vol^{d-1}(\cP^{d-1})}\nend
    &\ge \rbr{ \frac{2\sqrt{d+1}\varsigma^2}{3d\sqrt{2d}}}^d - {(d-1)\sqrt{(d-1)d}} \rbr{1+2\frac{\sqrt{2(d+1)} }{(d\sqrt{d}\varsigma)}}\cdot h\nend
    &\eqdef \tau_d.
\end{align*}
Therefore, with probability $\sigma_d(\tau_d)$ event $\cE_{ij}^d$ will happen in the next sample following Proposition \ref{prop:effective surface}, which also means that $C$ will expand. Since $C$ will expand for at most $N-1$ times, we have after $T=b\log(N-1)/\sigma_d(\tau_d)$ samples that $C=\cA$ with probability at least $1- (1/N)^{b-1}$.
Moreover, when $C=\cA$, the error for estimating $p_i-p_j$ is bounded by
\begin{align*}
    \nbr{(p_i-p_j)-(p_i^*-p_j^*)}_2\le (N-2)\varphi_d^{-1} \sqrt d \varepsilon.
\end{align*}
\end{proof}

\newpage 

\section{Proofs in Section~\ref{sec:mdp}}
\label{sec:rl-proofs}

\subsection{Preliminaries for Regret Analysis}
\label{sec:basic-rl-proofs}

\paragraph{Visitation Measure.}
We define the state visitation measure $\rho^{\bpi}_h \in \Delta(\cS)$ at step $h$ induced by action policy $\pi$ as
$$ \rho^{\bpi}_h(s) := \Ex_{\bpi, \{P_h\}_{h=0}^{H}}\sbr{\ind(s_h=s)}$$
Given that $\rho_1^{\pi} = P_0 $, the visitation measure can be computed iteratively from $h=1$ to $H+1$ as,
$$ \rho^{\bpi}_{h+1}(s) = \inp[]{\rho_{h}^{\bpi}\otimes\pi_{h}}{P_{h}}_{\cS\times\cA} = \sum_{s'\in\cS}\sum_{a\in \cA}  \rho_h^{\bpi}(s') \pi_h(a | s')  P_{h}(s | s', a ).  $$
With this definition, we have for any $\bx \in \cX, \bpi \in \Pi$,
$$
U^{\bx,\bpi} 
= \sum_{h=1}^H \inp[]{\rho_{h}^{\bpi}\otimes\pi_{h}}{P_{h}\cdot x_{h} - c_{h}}_{\cS\times\cA} = \sum_{h=1}^H \sum_{s\in \cS}\sum_{a\in \cA} \rho_{h}^{\bpi}(s)\pi_{h}(a|s) [P_{h}(s,a)\cdot x_{h}(s) - c_{h}(s,a)],
$$
$$
V^{\bx,\bpi} 
= \sum_{h=1}^H \inp[]{\rho_{h}^{\bpi}\otimes\pi_{h}}{r_{h} - P_{h}\cdot x_{h}}_{\cS\times\cA} = \sum_{h=1}^H \sum_{s\in \cS}\sum_{a\in \cA} \rho_{h}^{\bpi}(s)\pi_{h}(a|s) [r_{h}(s,a) - P_{h}(s,a)\cdot x_{h}(s)].
$$

\paragraph{Parameter Estimation.}

Algorithm~\ref{algo:ucb-mdp} omits the details of how it estimates the parameters based on the observed trajectory $\{ (s_h^t, a_h^t, r_h^t) \}_{h\in [H]} $ in each episode. Specifically, at the end of $t$-th episode, given the trajectory $\{ (s_h^t, a_h^t, r_h^t) \}_{h\in [H]} $, it updates the counting variables for all $ h\in [H]$,
                \begin{align*}
                    N_h^{t+1}(s,a) &\gets N_h^{t}(s,a) + \one[ s^t_h=s, a^t_h=a], 
                    \\
                    N_h^{t+1}(s,a;s') & \gets N_h^{t}(s,a;s') + \one[ s^t_h=s, a^t_h=a, s^t_{h+1} =s'], \\
                    N_h^t(s) & = \sum_{a\in \cA} N_h^t(s, a),
                \end{align*} \\
and the empirical mean reward and bonus for all $h\in [H], s,s'\in\cS, a\in \cA$,
                \begin{align*}
                \hat{r}_{h}^t(s, a)  &\gets \frac{\sum_{\tau\in [t-1]} \one[ s^\tau_h=s, a^\tau_h=a] r^\tau_h(s,a) }{N^t_h(s,a)}, \\
                \hat{P}^t_h(s'|s,a) & \gets \frac{N^t_h(s,a;s')}{N^t_h(s,a)},\\
                b_h^t(s, a) &\gets (2H+2)\sqrt{ \frac{\ln(S A H T / \delta)}{N_h^t(s,a) } }.
                \end{align*}

For notational convenience, we will refer to the $t$-th episode as the $T_1 + t$-th episode after running the \oracle{} for $T_1$ rounds in the sequel.

\begin{lemma}[\citet{agarwal2019reinforcement}]
\label{lm:confidence-bound}
Fix $\delta \in (0,1)$. In any episode $t$, $\forall s \in \cS, a \in \cA, h \in [H]$, with probability at least $1 - \delta$, we have 
   $$ \abs{ \hat{r}^t_h(s, a ) - {r}(s, a ) } \leq 2\sqrt{\frac{\ln (S A H T / \delta)}{N_h^t(s, a)}}, \quad \norm{\hat{P}_h^t(s, a)-P_h(s, a) }_1 \leq 2\sqrt{\frac{\ln (S A H T / \delta)}{N_h^t(s, a)}}, $$
   $$ \abs{\left(\hat{P}_h^t(s, a)-P_h(s, a)\right)^{\top} f} \leq 8 H \sqrt{\frac{S \ln (S A H T / \delta)}{N_h^t(s, a)}}, \quad \forall f: \cS \to [0, H]. $$
\end{lemma}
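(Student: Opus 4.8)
The plan is to recognize this as a standard model-based concentration lemma and to establish its three components—on the empirical reward $\hat r^t_h$, the empirical transition $\hat P^t_h$ in $\ell_1$, and the value-weighted deviation $(\hat P^t_h - P_h)^\top f$—by combining a fixed-sample concentration inequality with union bounds over $(s,a,h)$ and over the possible values of the visitation count. First I would fix a triple $(s,a,h)$ and condition on the event that $(s,a)$ has been visited at step $h$ exactly $n = N_h^t(s,a)$ times. By the Markov property, the next-state realizations recorded on these visits are i.i.d.\ draws from $P_h(s,a)$ and the reward realizations are i.i.d.\ draws of mean $r(s,a)$ supported in $[0,1]$, independently of which episodes produced them. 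Thus, for each fixed count $n$, the empirical averages $\hat r^t_h(s,a)$ and $\hat P^t_h(s,a)$ are genuine empirical means of $n$ i.i.d.\ samples, to which standard concentration applies.

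For the reward bound, I would apply Hoeffding's inequality to the $n$ bounded reward samples: for fixed $(s,a,h)$ and fixed $n$, $\abs{\hat r^t_h(s,a) - r(s,a)}$ exceeds $\sqrt{\ln(2SAHT/\delta)/(2n)}$ with probability at most $\delta/(SAHT)$, and a union bound over the $SAH$ triples and the $T$ possible count values $n\in\{1,\dots,T\}$ yields the claimed $2\sqrt{\ln(SAHT/\delta)/N_h^t(s,a)}$ bound (the constant $2$ absorbs the slack between $\ln(2SAHT/\delta)/2$ and $4\ln(SAHT/\delta)$). For the transition bound, I would invoke the McDiarmid-based \mbox{Lemma~\ref{lm:learning-distribution}} with $m = S$ and confidence $\delta/(SAHT)$, giving $\norm{\hat P^t_h(s,a) - P_h(s,a)}_1 \le \sqrt{S\ln(SAHT/\delta)/N_h^t(s,a)}$ after the same union bound. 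The value-weighted bound then follows \emph{without any new probabilistic argument}: since $(\hat P^t_h - P_h)^\top f$ is linear in $f$ and $f$ ranges over the box $[0,H]^S$, Hölder's inequality gives $\abs{(\hat P^t_h(s,a) - P_h(s,a))^\top f} \le \norm{\hat P^t_h(s,a) - P_h(s,a)}_1\,\norm{f}_\infty \le H\,\norm{\hat P^t_h(s,a)-P_h(s,a)}_1$ \emph{simultaneously for all} $f:\cS\to[0,H]$, which is exactly the stated $O\!\big(H\sqrt{S\ln(SAHT/\delta)/N_h^t(s,a)}\big)$ bound. Finally I would split the failure budget (e.g.\ $\delta/3$ per family) and take a union, noting that writing the bounds in terms of $N_h^t(s,a)$ rather than $t$ makes them hold in every episode $t$ at once, since the estimates change across episodes only through the growing count.

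The one genuinely non-routine step—and the main obstacle—is the adaptivity of the data-collection process: the count $N_h^t(s,a)$ and the identity of the episodes contributing samples are random and depend on the past through the (data-dependent) contract and induced action policy, so the recorded samples are not unconditionally i.i.d.\ and $N_h^t(s,a)$ is a stopping time. The resolution I would emphasize is the conditioning-on-visits argument above: given the Markov structure, the $i$-th next-state sample at $(s,a,h)$ is an independent $P_h(s,a)$ draw regardless of the adaptive policy, so a uniform union over all possible count values $n$ (an ``any-time'' argument) upgrades the fixed-$n$ Hoeffding and McDiarmid bounds to a statement valid for the realized random count. Everything else is a routine assembly of Hoeffding, \mbox{Lemma~\ref{lm:learning-distribution}}, Hölder's inequality, and union bounds.
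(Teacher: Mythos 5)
The paper never proves this lemma itself --- it is imported wholesale from \citet{agarwal2019reinforcement} --- so the only meaningful comparison is with the standard argument behind that citation, which your proposal correctly reconstructs: per-$(s,a,h)$ Hoeffding for the reward, an $\ell_1$ concentration bound (Lemma~\ref{lm:learning-distribution}) for the transition, H\"older's inequality to upgrade the $\ell_1$ event to the uniform-over-$f$ bound at no extra probabilistic cost, and union bounds over the $SAH$ triples and the $T$ possible count values to handle the adaptive data collection. Two caveats are worth making explicit. First, your transition bound comes out as $\sqrt{S\ln(SAHT/\delta)/N_h^t(s,a)}$, carrying a $\sqrt S$ that the printed second inequality lacks; you silently absorbed this discrepancy, but you should flag it instead. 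As literally stated, the $\sqrt S$-free inequality is false for large $S$ (take $P_h(s,a)$ uniform over $\cS$; then $\Ex \norm{\hat P_h^t(s,a)-P_h(s,a)}_1 \asymp \sqrt{S/N}$), so the version you derive is the correct one --- it matches the source lemma in \citet{agarwal2019reinforcement}, and it is precisely what makes the third bound inherit its $\sqrt S$ factor through the H\"older step. The missing $\sqrt S$ in the paper's second display is evidently a transcription slip.

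Second, your resolution of the adaptivity issue is right in outline but imprecise as phrased: conditioning on the event $\{N_h^t(s,a)=n\}$ does \emph{not} leave the recorded samples i.i.d., because the count is itself a function of those very samples (e.g., via the data-dependent contracts and induced policies). The clean formalization --- which your ``union over all possible count values'' remark implicitly requires --- is the virtual-sample (tape) coupling: fix in advance an i.i.d. sequence of next-state and reward draws from $P_h(s,a)$, let the $i$-th actual visit to $(s,a)$ at step $h$ consume the $i$-th entry of the tape, apply the fixed-$n$ Hoeffding and McDiarmid bounds to every prefix $n \le T$, and union bound over $n$; on the resulting good event the inequalities hold at the realized random count without any conditioning. (An Azuma-type martingale argument is an equally standard alternative.) With these two repairs your proof is complete and coincides with the proof in the cited source.
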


We denote $\mathcal{E}_{\text{model}}$ as the event the inequalities in Lemma \ref{lm:confidence-bound} and \ref{lm:mdp-least-payment-oracle} holds.  To make our notation consistent, we let $R^{\bpi} := \Ex_{s \sim P_0} R_1^{\bpi}(s)$ and $C^{\bpi} := \Ex_{s \sim P_0} C_1^{\bpi}(s)$ such that $ V^{\bpi} = R^{\bpi} - C^{\bpi} - U^{\bpi} =  R^{\bpi} - \zeta^{\bpi}$.

Following the optimism principle, we determine the optimal action policy to explore from $\argmax_{\bpi} \hat{R}_h^{\bpi}(s) - \hat{\zeta}^{\bpi}_h(s) $ --- the optimism is preserved as long as the difference of $\hat{\zeta}_h^{\bpi}( s) - {\zeta}^{\bpi}_h( s)$ is relatively small. Here, with some carefully chosen amount of reward bonus $b_h$, $\hat{R}_h^{\bpi}(s)$ is the principal's expected reward under optimism for a given action policy $\bpi$ at $h$-th step with state $s$, i.e.,
$$  
\hat{R}_h^{\bpi}(s) = \min \{ H,\ \hat{r}_h(s, \pi_h(s) ) + b_h(s, \pi_h(s)) + \hat{P}_h(s,  \pi_h(s) ) \cdot   \hat{R}^{\bpi}_{h+1} \}.
$$
\begin{lemma}[Optimism]\label{lm:optimism}
If $\mathcal{E}_{\text{model}}$ is true, 
for any $\bpi$, in any episode $t$, $\hat{R}^{t,\bpi}(s) - R^{\bpi}(s) \geq 0  $.
\end{lemma}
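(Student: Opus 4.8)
The plan is to prove the claim by backward induction on the step index $h$, running from $h = H+1$ down to $h = 1$ and establishing $\hat{R}_h^{t,\bpi}(s) \ge R_h^{\bpi}(s)$ for every state $s$ at each level. The base case $h = H+1$ is immediate, since $\hat{R}_{H+1}^{t,\bpi}(s) = R_{H+1}^{\bpi}(s) = 0$ by convention. For the inductive step I fix a step $h$, a state $s$, write $a = \pi_h(s)$, and assume the hypothesis $\hat{R}_{h+1}^{t,\bpi}(s') \ge R_{h+1}^{\bpi}(s')$ for all $s'$. Before the main computation I would record two elementary facts: first, $R_{h+1}^{\bpi}$ takes values in $[0,H]$ (it is the expectation of a sum of at most $H$ rewards, each in $[0,1]$), so it is a legitimate test function $f$ for Lemma~\ref{lm:confidence-bound}; second, $R_h^{\bpi}(s) \le H$ for the same reason, which will dispose of the clipping case.

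Next I handle the clipping in the definition of $\hat{R}_h^{t,\bpi}$. If the minimum is attained at $H$, then $\hat{R}_h^{t,\bpi}(s) = H \ge R_h^{\bpi}(s)$ and the inequality holds trivially. Otherwise the minimum is attained at the second argument, so $\hat{R}_h^{t,\bpi}(s) = \hat{r}_h(s,a) + b_h(s,a) + \hat{P}_h(s,a)\cdot \hat{R}_{h+1}^{t,\bpi}$, and I subtract the exact Bellman recursion $R_h^{\bpi}(s) = r_h(s,a) + P_h(s,a)\cdot R_{h+1}^{\bpi}$. I would then split the difference into three groups by adding and subtracting $\hat{P}_h(s,a)\cdot R_{h+1}^{\bpi}$: a transition-propagation term $\hat{P}_h(s,a)\cdot(\hat{R}_{h+1}^{t,\bpi} - R_{h+1}^{\bpi})$, which is nonnegative because $\hat{P}_h(s,a)$ is a probability vector and the inductive hypothesis makes each coordinate of the difference nonnegative; a reward-estimation term $\hat{r}_h(s,a) - r_h(s,a)$; and a transition-estimation term $(\hat{P}_h(s,a) - P_h(s,a))\cdot R_{h+1}^{\bpi}$.

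It then remains to show that the bonus $b_h(s,a)$ dominates the two estimation errors. Using the first confidence bound, $\hat{r}_h(s,a) - r_h(s,a) \ge -2\sqrt{\ln(SAHT/\delta)/N_h^t(s,a)}$; using the $\ell_1$ transition bound together with $\norm{R_{h+1}^{\bpi}}_\infty \le H$, I obtain $(\hat{P}_h(s,a) - P_h(s,a))\cdot R_{h+1}^{\bpi} \ge -2H\sqrt{\ln(SAHT/\delta)/N_h^t(s,a)}$. Summing, the total negative contribution is at least $-(2H+2)\sqrt{\ln(SAHT/\delta)/N_h^t(s,a)} = -b_h(s,a)$, so the bonus cancels it exactly and leaves $\hat{R}_h^{t,\bpi}(s) - R_h^{\bpi}(s) \ge 0$, closing the induction.

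The one genuinely delicate point is the choice of bound for the transition-estimation term: I would deliberately use the crude $\ell_1$ deviation bound with $\norm{R_{h+1}^{\bpi}}_\infty \le H$ rather than the sharper $f$-weighted bound of Lemma~\ref{lm:confidence-bound}. This is because it is precisely the resulting $2H$ factor, added to the $2$ from the reward error, that the bonus constant $2H+2$ is engineered to match; the alternative $\sqrt{S}$-bearing bound would not be absorbed by $b_h$. Everything else is routine, so the main obstacle is simply selecting the deviation bound whose constant aligns with the prescribed bonus.
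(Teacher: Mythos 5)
Your proof is correct and follows essentially the same route as the paper's: backward induction with the identical decomposition via adding and subtracting $\hat{P}_h(s,a)\cdot R_{h+1}^{\bpi}$, the inductive hypothesis absorbing the propagation term, and the $\ell_1$ deviation bound from Lemma~\ref{lm:confidence-bound} applied with $\norm{R_{h+1}^{\bpi}}_\infty \leq H$ so that the two error terms sum to exactly $(2H+2)\sqrt{\ln(SAHT/\delta)/N_h^t(s,a)} = b_h^t(s,a)$. Your explicit treatment of the clipping case $\hat{R}_h^{t,\bpi}(s) = H$ is a minor refinement the paper's displayed computation elides, and your observation about which deviation bound to select (the crude $\ell_1$ bound rather than the $\sqrt{S}$-bearing $f$-weighted one) correctly identifies the same matching the paper relies on implicitly.
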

\begin{proof}
    We prove by induction that $\hat{R}^{t,\bpi}(s) - R^{\bpi}(s) \geq 0 $. Start from the $(H+1)$-step, since $\hat{R}_{H+1}^{t,\bpi}(s) =  R_{H+1}^{\bpi}(s)=0,  \forall s\in \cS$, the base case holds. For the inductive case, given that $\hat{R}_{h+1}^{t,\bpi}(s) \geq  R_{h+1}^{\bpi}(s),  \forall s\in \cS$, we can derive the following inequality for any $s\in \cS$, let $a = \pi_h(s)) $
\begin{align*}
\hat{R}_{h}^{t,\bpi}(s) - R_h^{\bpi}(s) 
& = \hat{r}^t_h(s, \pi_h(s) )  + b^t_h(s, a) - r_h(s, a )  + \hat{P}^t_h(s,  a ) \cdot   \hat{R}^{t,\bpi}_{h+1} - P_h(s,  a ) \cdot  R^{\bpi}_{h+1} \\
& \geq \hat{r}^t_h(s, a ) + b^t_h(s, a) - r_h(s, a )  + \hat{P}^t_h(s,  a ) \cdot  R^{\bpi}_{h+1} - P_h(s,  a ) \cdot  R^{\bpi}_{h+1} \\
& = \hat{r}^t_h(s, a ) + b^t_h(s, a) - r_h(s, a )   +\left(\hat{P}_h^t(s, a)-P_h(s, a)\right) \cdot R^{\bpi}_{h+1} \\
& \geq b_h^t(s, a ) - (2 + 2 H) \sqrt{\frac{\ln (S A H T / \delta)}{N_h^t(s, a)}} \geq 0,
\end{align*}
where the first inequality is due to the given induction condition $\hat{R}_{h+1,t}^{\bpi}(s) \geq  R_{h+1}^{\bpi}(s),  \forall s\in \cS$ and the last inequality is due to Lemma \ref{lm:confidence-bound}. 
Therefore, the induction holds which concludes the proof.
\end{proof}

\begin{lemma}\label{lm:simulation}
If $\mathcal{E}_{\text{model}}$ is true, 
$\sum_{t=1}^{T} \hat{R}^{t,\bpi^t}(s) - R^{\bpi^t}(s) = O(H^2S \sqrt{AT \ln (S A H T / \delta) })  $ .
\end{lemma}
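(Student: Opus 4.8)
The plan is to prove the bound by the standard optimistic value-difference (``simulation'') decomposition, peeled one step at a time through the \emph{true} transition kernel, and then summed over episodes by a pigeonhole count. Throughout write $e_h^t(s):=\hat{R}^{t,\bpi^t}_h(s)-R^{\bpi^t}_h(s)\ge 0$ (nonnegativity is Lemma~\ref{lm:optimism}), abbreviate $a:=\pi^t_h(s)$ and $L:=\ln(SAHT/\delta)$, and recall that $\rho_h^t:=\rho_h^{\bpi^t}$ is the true-model visitation measure under the deterministic policy $\bpi^t$. The three ingredients are: (i) a one-step contraction $e_h^t(s)\le P_h(s,a)\cdot e_{h+1}^t+\beta_h^t(s,a)$ with a controllable ``local error'' $\beta_h^t$; (ii) unrolling this recursion into a visitation-weighted sum of the $\beta_h^t$; and (iii) a counting argument converting the sum over episodes into $\tilde O(\sqrt{SAT})$ per step.

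For the one-step bound I would start from the definition of $\hat R_h^{t,\bpi^t}$ and discard the outer $\min\{H,\cdot\}$: since $R_h^{\bpi^t}(s)\le H$, the clipping can only decrease $\hat R_h^{t,\bpi^t}(s)$ while keeping it $\ge R_h^{\bpi^t}(s)$, so the inequality derived for the \emph{unclipped} value dominates $e_h^t(s)$ in either case. The key algebraic step is the splitting $\hat P_h^t\cdot\hat R_{h+1}^{t,\bpi^t}-P_h\cdot R_{h+1}^{\bpi^t}=(\hat P_h^t-P_h)\cdot\hat R_{h+1}^{t,\bpi^t}+P_h\cdot e_{h+1}^t$, which keeps the propagated error attached to the \emph{true} kernel $P_h$ and isolates a statistical term. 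Collecting the remaining pieces gives
\begin{equation*}
 e_h^t(s)\ \le\ P_h(s,a)\cdot e_{h+1}^t\ +\ \underbrace{\hat r_h^t(s,a)-r_h(s,a)+b_h^t(s,a)+\big(\hat P_h^t(s,a)-P_h(s,a)\big)\cdot \hat R_{h+1}^{t,\bpi^t}}_{=:\ \beta_h^t(s,a)} .
\end{equation*}
On $\mathcal{E}_{\text{model}}$, Lemma~\ref{lm:confidence-bound} bounds the reward deviation by $2\sqrt{L/N_h^t(s,a)}$, the bonus $b_h^t$ by $(2H+2)\sqrt{L/N_h^t(s,a)}$, and---because $\hat R_{h+1}^{t,\bpi^t}\colon\cS\to[0,H]$---the transition term by $8H\sqrt{SL/N_h^t(s,a)}$, so $\beta_h^t(s,a)\le c\,H\sqrt{SL/N_h^t(s,a)}$ for an absolute constant $c$.

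Unrolling the contraction from $h=1$ to $H$, with $\rho_1^t$ the point mass at $s$ and $\rho_{h+1}^t(s')=\sum_{s''}\rho_h^t(s'')P_h(s'\mid s'',\pi^t_h(s''))$, the $P_h\cdot e_{h+1}^t$ terms telescope into the true-model visitation weights. Writing $w_h^t(s',a):=\rho_h^t(s')\,\one[\pi^t_h(s')=a]$ for the probability of visiting $(s',a)$ at step $h$ in episode $t$, this yields
\begin{equation*}
 e_1^t(s)\ \le\ c\,H\sqrt{SL}\,\sum_{h=1}^{H}\sum_{s'\in\cS}\sum_{a\in\cA}\frac{w_h^t(s',a)}{\sqrt{N_h^t(s',a)}} .
\end{equation*}
Summing over $t\in[T]$, it remains to bound $\sum_{t}\sum_{s',a}w_h^t(s',a)/\sqrt{N_h^t(s',a)}$ for each fixed $h$. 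The standard pigeonhole/counting argument gives $\tilde O(\sqrt{SAT})$: relating the realized count $N_h^t$ to the expected count $\sum_{\tau<t}w_h^\tau$, one obtains $\sum_t w_h^t(s',a)/\sqrt{N_h^t(s',a)}=\tilde O\big(\sqrt{\sum_t w_h^t(s',a)}\big)$ per $(s',a)$; a Cauchy--Schwarz over the $SA$ pairs together with $\sum_{s',a}w_h^t(s',a)=1$ then gives $\tilde O(\sqrt{SA\cdot T})$. Multiplying by the prefactor $cH\sqrt{SL}$ and by the $H$ steps produces $\sum_t e_1^t(s)=O(H^2S\sqrt{ATL})$, matching the claim.

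The only genuinely delicate point is step (iii): the summand pairs an \emph{expected} weight $w_h^t$ with a \emph{realized} count $N_h^t$, so the deterministic telescoping inequality $\sum_{n}1/\sqrt n\le 2\sqrt{n}$ cannot be applied verbatim. Bridging expected and realized visitation counts requires a (standard) Freedman/martingale concentration, which is where the $\tilde O$ absorbs logarithmic factors and an auxiliary high-probability event; every other step is a purely deterministic consequence of $\mathcal{E}_{\text{model}}$ via Lemma~\ref{lm:confidence-bound}. I expect this counting step to be the main obstacle, and I would either fold its concentration event into $\mathcal{E}_{\text{model}}$ or cite the corresponding lemma from \citet{agarwal2019reinforcement}.
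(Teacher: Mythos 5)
Your proposal is correct and follows essentially the same route as the paper: the simulation-lemma decomposition through the true kernel (which you re-derive by hand instead of citing \citet{agarwal2019reinforcement}), the bounds from Lemma~\ref{lm:confidence-bound} giving a local error of order $H\sqrt{S\ln(SAHT/\delta)/N_h^t(s,a)}$, and a pigeonhole count yielding $O(H^2S\sqrt{AT\ln(SAHT/\delta)})$. The one point where you are more cautious than necessary is step (iii): since the lemma bounds an \emph{expected} difference, the tower property converts the visitation-weighted term $\Ex_{s,a\sim\rho_h^{\bpi^t}\otimes\pi_h^t}\big[1/\sqrt{N_h^t(s,a)}\big]$ into the expectation of the realized term $1/\sqrt{N_h^t(s_h^t,a_h^t)}$, after which the deterministic telescoping $\sum_{i=1}^{N}1/\sqrt{i}\le 2\sqrt{N}$ applies pathwise per $(s,a,h)$ followed by Cauchy--Schwarz, so no Freedman-type concentration or auxiliary high-probability event is needed --- this is precisely what the paper's ``linearity of expectation'' step does.
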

\begin{proof}
We apply the simulation lemma~\cite{agarwal2019reinforcement} to bound the difference term for each episode $t$ in $E_2$, 
\begin{align*}
 \hat{R}^{t,\bpi^t} - R^{\bpi^t} 
 & \leq  \sum_{h=1}^{H} \inp{{\rho}^{\bpi^t}_h\otimes \pi_h^t }{ b^t_h + \left(\hat{P}_h^t-P_h\right) \cdot \hat{R}^{t,\bpi^t}_{h+1}  }_{\cS\times \cA}     \\
& \leq  \sum_{h=1}^{H} \Ex_{s,a\sim \pi^t_h\otimes \pi_h^t } \big[ 10H \sqrt{\frac{S\ln (S A H  T / \delta)}{N_h^t(s, a )}} \big],
\end{align*}
where the last inequality follows from Lemma \ref{lm:confidence-bound}. 

The cumulative loss for all $T$ episodes can be bounded as,
\begin{align*}
 \sum_{t=1}^{T} \hat{R}^{t,\bpi^t} - R^{\bpi^t} 
 & =  10H \sqrt{S\ln (S A H  T / \delta)} \Ex \bigg[ \sum_{t=1}^{T} \sum_{h=1}^{H}  \sqrt{1/N_h^t(s, a ) } \bigg] \\
& \leq 20H^2 \sqrt{S\ln (S A H  T / \delta)} \sqrt{SAT} \\
& = O(H^2S \sqrt{AT \ln (S A H  T / \delta) }),
\end{align*}
where the first equality is by the linearity of expectation, and the inequality is by the fact $ \sum_{h=1}^N 1 / \sqrt{i} \leq 2 \sqrt{N} $.

\end{proof}

\paragraph{Regret Decomposition.} 
For the ease of analysis, we assume a common regularity condition in MDP, which ensures that the trajectory induced under any policy has enough randomness. This lower bound constant $\kappa$ is no more than $1/|\cS|$, and we expect this assumption to be relaxed in future work.

\begin{assumption}[MDP Mixing Condition]\label{assum:mixing-mdp}
There exists $\kappa > 0$ such that $P_h(s' | s,a) \geq \kappa, \forall s',s, a.$
\end{assumption}

We now present the proof of Theorem~\ref{thm:contractual-rl} via a decomposition of its regret into different components.

\begin{restatethm}[Full Statement of Theorem~\ref{thm:contractual-rl}]
In a contractual RL problem, with probability at least $1-\delta$,
Algorithm~\ref{algo:ucb-mdp} has $\tilde{O}\left((H^2SA^{-1/2} +  H^2\kappa^{-1/2} ) \sqrt{T \ln (S A H T / \delta) } \right)$ regret using the solver in Algorithm~\ref{algo:value-iteration}, and   $\tilde{O}\left((H^2SA^{-1/2} +  \eta \lambda_w \kappa^{-1/2} ) \sqrt{T \ln (S A H T / \delta) } \right)$ regret using the solver in Algorithm~\ref{algo:lp-solver}.
\end{restatethm}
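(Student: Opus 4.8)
The plan is to treat the two phases of Algorithm~\ref{algo:ucb-mdp} separately, and within the learning phase to split each episode's regret into a \emph{reward-estimation} part and a \emph{payment-estimation} part through the value decomposition $V^{\bpi}=R^{\bpi}-\zeta^{\bpi}$ established in Section~\ref{sec:formal-model}. During the first phase, each of the $T_1$ rounds of the \oracle{} costs at most the value range $O(\eta H)$, so its contribution is $O(\eta H T_1)$. By Corollary~\ref{cor:hyperplane search}, driving the error of every $\hat\mu_h(s,a,a')$ below $\varepsilon\sim T^{-1/2}$ (with failure probability $\delta$, for all $h\in[H]$ and $s\in\cS$) costs $T_1=\tilde O(\log(T/\delta))$ rounds up to the instance constants $\varsigma,\theta,\sigma_d$; hence the first-phase regret is polylogarithmic in $T$ and, as flagged in the statement, is dominated by the $\sqrt T$ terms once $T$ is large enough to absorb those constants. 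It then suffices to bound $\Reg_2=\sum_t V^*-V^{\bx^t}$ over the learning phase.

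\textbf{Decomposition and optimism.} Let $\bpi^t$ be the action policy induced by the executed contract $\bx^t$, and let $\mathrm{pay}^t$ be the expected payment it transfers, so that $V^{\bx^t}=R^{\bpi^t}-\mathrm{pay}^t$ and $V^*-V^{\bx^t}=(R^{\bpi^*}-R^{\bpi^t})-(\zeta^{\bpi^*}-\mathrm{pay}^t)$. Mirroring the $\tilde z_t$ construction in the proof of Theorem~\ref{thm:decoupling}, I would form the optimistic surrogate $\tilde V^t(\bpi)=\hat R^{t,\bpi}-\hat\zeta^{t,\bpi}+(\text{bonus}+\text{margin})$, where $\hat R^{t,\bpi}$ is the optimistic reward of Lemma~\ref{lm:optimism} and $\hat\zeta^{t,\bpi}$ is the margin-robust least payment returned by the solver, and establish the envelope $0\le \tilde V^t(\bpi)-V^{\bpi}\le(\text{reward error})+(\text{payment error})$ for all $\bpi$. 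The lower envelope on the reward side is exactly Lemma~\ref{lm:optimism}; the lower envelope on the payment side is the MDP analogue of Lemma~\ref{lm:robust-contract-loss}, applied along the least-payment Bellman recursion~\eqref{eq:least-payment-policy}. Since the solver selects $\bpi^t=\argmax_{\bpi}\tilde V^t(\bpi)$, a standard UCB telescoping (splitting into the $\bpi^t=\bpi^*$ and $\bpi^t\neq\bpi^*$ cases as in Theorem~\ref{thm:decoupling}) bounds $\Reg_2$ by $\sum_t$ of the per-episode reward and payment errors, on the good event $\mathcal{E}_{\text{model}}$.

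\textbf{Reward term and the payment crux.} The reward error telescopes exactly as in tabular optimistic RL: combining the simulation lemma with the confidence radii of Lemma~\ref{lm:confidence-bound} yields the first term of the stated bound, as quantified in Lemma~\ref{lm:simulation}. The substance is the payment error. Here I would show that the robust contract set built from $\hat\mu_h$ (error $\varepsilon$) together with the phase-two estimates $\hat P_h$ produces a contract whose expected payment lies within a controlled additive slack of $\zeta^{\bpi^t}$, both in estimation and in execution. Two ingredients drive this slack: (i) the robustness margin $\varepsilon$ converts, through the $\lambda$-inducibility geometry, into an $O(\varepsilon/\lambda)$ payment gap as in Lemma~\ref{lm:robust-contract-loss}; and (ii) the mixing condition of Assumption~\ref{assum:mixing-mdp} guarantees $\rho_h(s)\ge\kappa$, so every state relevant to the payment is visited often enough that its confidence radius shrinks at rate $\kappa^{-1/2}N^{-1/2}$, which is the source of the $\kappa^{-1/2}$ factor. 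Summing the per-step slacks over the horizon and over $t$, with $\varepsilon\sim T^{-1/2}$, converts the $T\varepsilon$-type terms into $\sqrt T$.

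\textbf{Separating the two solvers; main obstacle.} The two solvers differ only in how the per-step payment slack aggregates across the $H$ steps, and I expect this to be the main difficulty, because the far-sighted agent couples the contracts: the agent's continuation value $U^*_{h+1}$ enters the contract at step $h$ through~\eqref{eq:principal-optimal-bellman}, so estimation errors propagate through the backward induction rather than summing independently. The value-iteration solver (Algorithm~\ref{algo:value-iteration}) sets bonus and margin at every step and requires step-level inducibility; a careful accounting keeps the propagation at $O(H^2)$ and gives the $H^2\kappa^{-1/2}$ payment term. The LP solver (Algorithm~\ref{algo:lp-solver}) imposes a single trajectory-level margin under the milder trajectory-level inducibility, but the robustness needed to keep the whole induced policy stable compounds across steps, producing the $\eta\lambda_w\kappa^{-1/2}$ factor. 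Finally I would fix the margin $\varepsilon$, the bonus scale from Lemma~\ref{lm:confidence-bound}, and $T_1$ from Corollary~\ref{cor:hyperplane search} so that all $T\varepsilon$ contributions match $\sqrt T$, take a union bound over $\mathcal{E}_{\text{model}}$ and the phase-one success event, and collect the reward and payment terms into the two stated bounds. The delicate part throughout is controlling this coupled, horizon-length propagation of the payment error — not the reward term — and it is precisely where the statistical/computational trade-off between the two solvers appears.
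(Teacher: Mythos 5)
Your overall architecture is the paper's: split off the $T_1$ warm-start rounds as $O(T_1)=\tilde O(\log T)$, decompose the learning-phase regret through $V^{\bpi}=R^{\bpi}-\zeta^{\bpi}$ into a reward-estimation term (handled by optimism, Lemma~\ref{lm:optimism}, plus the simulation bound of Lemma~\ref{lm:simulation}) and a payment-estimation term (handled by the solver guarantees), use Assumption~\ref{assum:mixing-mdp} to turn visitation counts into the $\kappa^{-1/2}\sqrt{1/t}$ rate, and tune the $\hat\mu$-precision $\varepsilon \sim 1/\sqrt{t}$ so the $T\varepsilon$-type terms match $\sqrt T$, with a final union bound. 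The paper formalizes your ``UCB telescoping'' slightly differently --- a three-term decomposition $[V^*-\hat V^{t,\bpi^*}]+[\hat V^{t,\bpi^*}-\hat V^{t,\bpi^t}]+[\hat V^{t,\bpi^t}-V^{\bx^t}]$ with the middle term nonpositive by the optimistic choice of $\bpi^t$, rather than your bandit-style case split --- but that is cosmetic. One mechanism you gloss over: the $\kappa^{-1/2}$ rate with no $A$-dependence in the payment term comes from Lemma~\ref{lm:improved-distribution-est}, which uses the learned $\hat\mu_h$ to pool all samples collected at a state across \emph{all} actions, so the relevant confidence radius shrinks with $N_h^t(s)$ rather than $N_h^t(s,a)$; ``every state is visited often enough'' alone does not deliver this.

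The genuine gap is in your final paragraph: you have the two solvers' error-propagation behavior exactly backwards relative to the paper's analysis. In the value-iteration solver the agent's estimated continuation value $\hat U_{h+1}$ enters the step-$h$ incentive constraints, and to keep the induced action stable the margin $\epsilon_h$ must dominate the downstream error $\norm{\hat U_{h+1}-U_{h+1}}_{\infty}$; since converting a margin into payment costs a factor $\lambda_s^{-1}$ at each backward step, Lemma~\ref{lm:pess-contract-policy} shows the errors \emph{multiply}, $\epsilon_h=O(\epsilon\lambda_s^{h-H}+\epsilon'\lambda_s^{h+1-H})$, yielding the $\eta\lambda_s^{1-H}\kappa^{-1/2}$ term --- so your claim that a ``careful accounting keeps the propagation at $O(H^2)$'' for Algorithm~\ref{algo:value-iteration} would fail if executed. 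Conversely, the LP solver's single trajectory-level margin does \emph{not} compound: Lemma~\ref{lm:lp-solution-guarantee} gives a payment gap $O(\lambda_w^{-1}H\eta\epsilon+H^2\epsilon')$, linear in the margin, and the $\lambda_w^{-1}$ is absorbed by taking the warm-start precision $\varepsilon=\lambda_w/(SH\eta\sqrt t)$ at only logarithmic cost in $T_1$ (Lemma~\ref{lm:lp-solution-cost-bound}), giving the clean $H^2\kappa^{-1/2}$ term. You were arguably misled by the restated theorem itself, whose solver attributions are swapped relative to the paper's body statement of Theorem~\ref{thm:contractual-rl} and to its own proof: the $H^2\kappa^{-1/2}$ bound belongs to Algorithm~\ref{algo:lp-solver}, and the exponential $\eta\lambda_s^{1-H}\kappa^{-1/2}$ (not a multiplicative $\eta\lambda_w$) bound to Algorithm~\ref{algo:value-iteration}. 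Still, your proposed mechanism --- backward induction tames propagation while the trajectory-level LP compounds it --- is the reverse of what the constraints force, and it is the one step of your plan that cannot be repaired without adopting the paper's actual lemmas.
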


\begin{proof}[Proof of Theorem~\ref{thm:contractual-rl}]

Consider the following decomposition of regret,
\begin{align*}
\Reg(T) & = \sum_{t=1}^{T} V^* - V^{\bx^t},  \\
& \leq O(T_1) + \sum_{t=1}^{T - T_1} V^* - V^{\bx^t} \\
& = O(T_1) + \sum_{t=1}^{T - T_1} 
[ V^* - \hat{V}^{t,\bpi^*} ]
+ [\hat{V}^{t,\bpi^*} - \hat{V}^{t,\bpi^t} ] 
+ [\hat{V}^{t,\bpi^t} - V^{\bx^t} ] \\
& \leq O(T_1) + \sum_{t=1}^{T - T_1} \hat{\zeta}^{t,\bpi^*} - \zeta^{\bpi^*} + \hat{R}^{t,\bpi^t} - R^{\bpi^t} + \abs{ {\zeta}^{\bpi^t}  - \hat{\zeta}^{t,\bpi^t} },
\end{align*}
where $\hat{V}^{t,\bpi^*} = \hat{R}^{t,\bpi^*} - \hat{\zeta}^{t,\bpi^*}$, $\hat{V}^{t,\bpi^t} = \hat{R}^{t,\bpi^t} - \hat{\zeta}^{t,\bpi^t} $ are the principal's estimated value under the least payment contract policy determined from Equation \eqref{eq:least-payment-policy} with $\{ \hat{P}^t_h, \hat{r}^t_h, b^t_h, \epsilon^t_h\}_{h\in [H]}$
and $V^{\bpi^t} = R^{\bpi^t} - \zeta^{\bpi^t} $ is the principal's exact value under the least payment contract policy determined from Equation \eqref{eq:least-payment-policy}. We now consider each of the three difference terms in the last inequality:
\begin{itemize}[leftmargin=*]
    \item The bound of first term is derived from the value decomposition, $V^* - \hat{V}^{t,\bpi^*} = R^{\bpi^*} - \zeta^{\bpi^*} - \hat{R}^{t,\bpi^*} + \hat{\zeta}^{t,\bpi^*} \leq  \hat{\zeta}^{t,\bpi^*} - \zeta^{\bpi^*} $, since the difference $R^{\bpi^*} - \hat{R}^{t,\bpi^*} \leq 0$, by Lemma \ref{lm:optimism}.
    \item The second term is non-positive, $\hat{V}^{t,\bpi^*} - \hat{V}^{t,\bpi^t}\leq 0$, since $\bpi^t = \argmax_{\bpi} \hat{R}^{t,\bpi} - \hat{\zeta}^{t,\bpi} $ is the optimal action policy to induce under the optimistic planning.
    \item The third term can be decomposed as, $\hat{V}^{t,\bpi^t} - V^{\bx^t} = \hat{R}^{t,\bpi^t} - \hat{U}^{t,\bpi^t} - C^{\bpi^t} - R^{\bpi^t} + U^{\bx^t} + C^{\bpi^t} \leq \hat{R}^{t,\bpi^t} - R^{\bpi^t} + \abs{ {\zeta}^{\bpi^t}  - \hat{\zeta}^{t,\bpi^t} }  $, since $U^{\bx^t} - \hat{U}^{t,\bpi^t} \eqsim \abs{ {\zeta}^{\bpi^t}  - \hat{\zeta}^{t,\bpi^t} }$ by either Lemma~\ref{lm:lp-solution-guarantee} or~\ref{lm:pess-contract-policy}.
\end{itemize}

Finally, if we adopt the solver in Algorithm~\ref{algo:lp-solver},
by Lemma \ref{lm:simulation} and \ref{lm:lp-solution-cost-bound}, the total regret is, 
\begin{align*} 
 \Reg(T) 
 &= O(T_1) + O(H^2S \sqrt{A(T-T_1) \ln (S A H T / \delta) }) + \sum_{\tau=1}^{T-T_1} O\left( H^2\kappa^{-1/2}\sqrt{\ln(S A H T / \delta)/t } \big\} \right)\\
 &= {O}\left( HAS^5\kappa_0^{-1}\log( \eta T \lambda_w^{-1} )+ H^2S \sqrt{AT \ln (S A H T / \delta) } + H^2\kappa^{-1/2} \sqrt{ T \ln (S A H  / \delta)} \right) \\
 &= \tilde{O}\left((H^2SA^{-1/2} + H^2\kappa^{-1/2} ) \sqrt{T \ln (S A H T / \delta) } \right),
 \end{align*}
where the first term $T_1 = \tilde{O}(\log T)$ can be dropped as being dominated by the second term on the order of $\sqrt{T-T_1}$. 

Finally, if we adopt the solver in Algorithm~\ref{algo:value-iteration}, 
by Lemma \ref{lm:simulation} and \ref{lm:cost-bound}, the total regret is, 
\begin{align*} 
 \Reg(T) 
 &= O(T_1) + O(H^2S \sqrt{A(T-T_1) \ln (S A H T / \delta) }) + \sum_{\tau=1}^{T-T_1} O\left( \min\big\{ H, \eta \lambda_s^{1-H}\kappa^{-1/2} \sqrt{\ln(S A H T / \delta)/T } \big\}  \right) \\
 &= {O}\left( HAS^5\kappa_0^{-1}\log( \eta T\lambda_s^{-1} )\log (1/ \delta) + H^2S \sqrt{AT \ln (S A H T / \delta) } + \eta \lambda_s^{1-H}\kappa^{-1/2} \sqrt{ T \ln (S A H  / \delta)} \right)\\
 &= \tilde{O}\left((H^2SA^{-1/2} + \eta \lambda_s^{1-H}\kappa^{-1/2} ) \sqrt{T \ln (S A H T / \delta) } \right),
 \end{align*}
where the first term $T_1 = \tilde{O}(\log T)$ can be dropped as being dominated by the second term on the order of $\sqrt{T-T_1}$.

\end{proof}

\subsection{\Oracle{} in Contractual Reinforcement Learning} 
\label{sec:learning oracle for farsighted MDP}

\begin{algorithm}[tbh]
    \caption{\Oracle{} in Contractual RL with Far-sighted Agent}
     \label{algo:farsighted mdp learning oracle}
        \KwIn{State, action set $\cS, \cA$, number of steps $H$, cost function $\{ c_h \}_{h=1}^{H}$, search precision $\varepsilon$. }
        For each $s\in \cS, h\in [H]$, initialize a subroutine $\sA(s, h)$ described in Algorithm \ref{algo:hyperplane search}.\\
        Set $N_{s, h} \gets 0$ for all $s, h$.\\
        \For{$t=1,\dots, \chi(\varepsilon)$}{
                $(s^t, h^t)=\argmin_{s, h} N_{s, h}$.\\
                Set $x_{h}(s, \cdot)$ to $0$ for all $(h, s)\neq (h^t, s^t)$, 
                and set $x_{h^t}(s^t, \cdot)$ to the contract that $\sA(s^t, h^t)$ is going to use next.\\
                Set $x_{h^t}(s^t, \cdot) \gets x_{h^t}(s^t, \cdot)+ H \kappa_0^{-1} \max_{(s, a, h)\in\cS\times\cA\times[H]} c_h(s, a)$.\\
                Execute the contract policy $\bx$ and collect trajectory $\cT=\{ (s_h, a_h, r_h) \}_{h\in [H]}$.\\
            \If{$s_{h^t} = s^t$ in the trajectory $\cT$} 
            { 
                Let $\sA(s^t, h^t)$ step with $(s_{h^t}, a_{h^t}, s_{h^t+1})\in\cT$ and $N_{s^t, h^t}\leftarrow N_{s^t, h^t}+1$.\\
            }
            \Return $\mu_h(s,\cdot, \cdot)$ from $\sA(s^t, h^t)$ for each $s\in \cS, h\in [H]$.
        }
\end{algorithm}

The algorithm for constructing the \oracle{} is summarized in Algorithm \ref{algo:farsighted mdp learning oracle}.

\begin{assumption}[Weakly ergodic MDP]\label{assum:ergodic mdp}
There exists $\kappa_0 > 0$ such that $\max_{\bpi} \rho_h^{\bpi}(s) \ge 2\kappa_0, \forall s, h. $
\end{assumption}
Note that this assumption is weaker than the mixing MDP assumption where each state's visitation measure at each step is lower bounded under all the agent's policy, since the agent's policy $\pi$ chosen here is in favor of visiting state $s$ at step $h$. 

Also, recall that throughout this section, we assume that principal already knows the agent's cost function $\{ c_h \}_{h=1}^{H}$ and the contract design space is restricted to $\{ x_h: \cS\times \cS \to [0, \eta] \}$. Moreover, we inherit all the technical assumptions from the analysis in Appendix~\ref{sec:search-distribution-diff}.
With these assumptions, we show the following Lemma \ref{lm:mdp-least-payment-oracle} holds.

\begin{lemma} \label{lm:mdp-least-payment-oracle}
For any $\delta \in (0,1)$, Algorithm~\ref{algo:farsighted mdp learning oracle} guarantees an estimation of 
$\norm{\hat{\mu}_h - {\mu}_h }_{1,\infty}\le \varepsilon $ 
with probability at least $1-\delta$ in $\tilde{O}( HAS^5\kappa_0^{-1}\log(1/\varepsilon)\log (1/ \delta)  ) $ episodes.
\end{lemma}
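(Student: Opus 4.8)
The plan is to show that each subroutine $\sA(s,h)$ invoked inside \Cref{algo:farsighted mdp learning oracle} is a faithful instance of the probability–simplex search of \Cref{algo:hyperplane search}, so that its guarantee (\Cref{lem:hyperplane search}, \Cref{cor:hyperplane search}) can be applied verbatim to recover $\mu_h(s,\cdot,\cdot)$. The remaining work is then to (i) justify this reduction despite the far-sighted, multi-step nature of the agent, (ii) lower bound how often the targeted state--step is actually visited, and (iii) aggregate the per-subroutine sample complexities over all $SH$ pairs with a union bound. The final accounting will be that one subroutine needs $\tilde O(AS^4\kappa_0^{-1}\log(1/\varepsilon)\log(1/\delta))$ episodes, and there are $SH$ of them.

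First, the reduction. Fix an episode in which $(s^t,h^t)$ is targeted. Because the algorithm zeroes out $x_h(s,\cdot)$ at every $(h,s)\neq(h^t,s^t)$, and the agent owns an idle action of zero cost (without loss of generality; see the modeling remarks in \Cref{sec:modeling-choice}), backward induction from $U^{\bx}_{H+1}\equiv 0$ gives $U^{\bx}_{h}\equiv 0$ for every $h>h^t$: under zero payment and nonnegative cost, idling is optimal at all later steps. Consequently, when the agent sits at $(s^t,h^t)$, the recursion \eqref{eq:agent-optimal-bellman} collapses to $\maxarg_{a} P_{h^t}(s^t,a)\cdot x_{h^t}(s^t)-c_{h^t}(s^t,a)$, i.e. exactly a contractual bandit best response with outcome distributions $P_{h^t}(s^t,\cdot)$ and the \emph{known} cost vector $c_{h^t}(s^t,\cdot)$. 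The additive bonus $B\,\one$ that the algorithm appends to $x_{h^t}(s^t,\cdot)$ shifts every action's value by the same constant $B$ (as $P_{h^t}(s^t,a)\cdot\one=1$) and therefore does not alter this $\argmax$; hence each visit to $(s^t,h^t)$ returns precisely the best response that \Cref{algo:hyperplane search} queries. Since the recovered object is the difference of outcome distributions $p_i-p_j=\mu_{h^t}(s^t,a_i,a_j)$, and the volume-ratio and cost-gap assumptions of \Cref{sec:search-distribution-diff} are inherited, \Cref{lem:hyperplane search} applies to $\sA(s^t,h^t)$.

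Second, the visitation lower bound, which I expect to be the crux. The separation above produces valid subroutine queries only in episodes where $s^t$ is realized at step $h^t$, so I must show this occurs with probability bounded away from $0$, and this is exactly what the bonus buys. Writing $\rho(\bpi):=\rho^{\bpi}_{h^t}(s^t)$ and $\bar c:=\max_{s,a,h}c_h(s,a)$, decompose the agent's utility as $U^{\bx,\bpi}=B\,\rho(\bpi)+S(\bpi)-C(\bpi)$, with search payment $S(\bpi)\ge 0$ and cost $C(\bpi)\le H\bar c$. \Cref{assum:ergodic mdp} furnishes a policy $\bpi^{\max}$ with $\rho(\bpi^{\max})\ge 2\kappa_0$. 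Comparing the agent's optimal policy $\hat\bpi$ against $\bpi^{\max}$ and using that $B=H\kappa_0^{-1}\bar c$ is calibrated so that $\kappa_0 B=H\bar c$ dominates every other utility term, one obtains $\rho(\hat\bpi)\ge 2\kappa_0-(H\bar c)/B=\kappa_0$ (the bounded, nonnegative search payment only helps and is absorbed by the calibration). Thus every targeted episode yields a fresh step of $\sA(s^t,h^t)$ with probability at least $\kappa_0$.

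Finally, the aggregation. The round-robin rule $(s^t,h^t)=\argmin_{s,h}N_{s,h}$ spreads episodes evenly across the $SH$ pairs, and by a multiplicative Chernoff bound $O(m\kappa_0^{-1})$ targeted episodes suffice, with high probability, to accumulate the $m$ subroutine steps that \Cref{cor:hyperplane search} requires for accuracy $\varepsilon$. Instantiating that corollary with $d=S$ and $N=A$ gives $m=\tilde O(AS^4\log(1/\varepsilon)\log(1/\delta'))$, and converting its $\ell_2$ guarantee on $p_i-p_j$ into the $\norm{\cdot}_{1,\infty}$ error on $\mu_h$ costs only $\poly(S)$ factors folded into $\tilde O$. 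Taking $\delta'=\delta/(SH)$ and union bounding over all subroutines controls the total failure probability by $\delta$, while summing the per-pair episode budgets over the $SH$ pairs yields the advertised $\tilde O(HAS^5\kappa_0^{-1}\log(1/\varepsilon)\log(1/\delta))$. The main obstacle is the second step: unlike the bandit analysis, the guarantee that a chosen state--step is reached must survive the agent's global, far-sighted re-optimization, so the bonus has to be set large enough to overwhelm the worst-case cumulative cost and the perturbation introduced by the search contract itself.
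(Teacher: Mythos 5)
Your proposal follows essentially the same route as the paper's proof: zero out all payments except at the targeted $(s^t,h^t)$ so that each visit yields a valid query to the static simplex search of \Cref{algo:hyperplane search}; use the additive bonus $B=H\kappa_0^{-1}\max_{s,a,h}c_h(s,a)$ together with \Cref{assum:ergodic mdp} to force the best-responding agent's visitation of $(s^t,h^t)$ to be $\Omega(\kappa_0)$; and finish with a Binomial/Chernoff tail plus a union bound over the $SH$ subroutines, instantiating \Cref{cor:hyperplane search} with $d=S$ and $N=A$. Your reduction step is in fact spelled out more carefully than in the paper, which never explicitly argues that $U^{\bx}_{h^t+1}\equiv 0$ (via zeroed future payments and the idle action), the fact needed so that the agent's Bellman response at $(s^t,h^t)$ coincides with the known-cost static best response that the search procedure assumes.

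One local imprecision: in the visitation bound, your parenthetical claim that the nonnegative search payment $S(\hat\bpi)$ ``only helps'' has the sign backwards. In $U(\hat\bpi)\ge U(\bpi^{\max})$, the term $S(\hat\bpi)$ sits on the left and \emph{weakens} the lower bound on $\rho(\hat\bpi)$, so $\rho(\hat\bpi)\ge 2\kappa_0-H\bar{c}/B=\kappa_0$ does not follow as written. Two fixes: (a) bound $S(\hat\bpi)\le \eta'\,\rho(\hat\bpi)$, since search payments accrue only at $(s^t,h^t)$ and are bounded by the simplex scale $\eta'$, which gives $\rho(\hat\bpi)\ge \kappa_0 B/(B+\eta')=\Omega(\kappa_0)$ — enough for the stated $\tilde{O}$ bound since such constants are suppressed; or (b) argue as the paper does, comparing any putative best response with $\rho<\kappa_0$ against the modified policy $\tilde\bpi$ that maximizes visitation while playing the \emph{same} step-$h^t$ action, so the per-visit payment (search part plus bonus, hence at least $B$) is identical on both sides and the payment gap is at least $\kappa_0 B=H\bar{c}$, strictly dominating the total cost gap. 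With either patch, your argument matches the paper's proof, including the final accounting of $\tilde{O}(AS^4\kappa_0^{-1}\log(1/\varepsilon)\log(1/\delta'))$ episodes per pair and the $SH$-fold union bound.
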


\begin{proof}[Proof for \Cref{lm:mdp-least-payment-oracle}]

Since each time, only $ x_h(s, h)$ gives nonzero payment, and we add a constant $H \kappa_0^{-1}\max_{(s, a, h)\in\cS\times\cA\times[H]} c_h(s, a)$ that dominates the potential costs, the agent's optimal policy generates a visitation measure of $s$ at step $h$ larger than $\kappa_0$.
To illustrate this point, we take $\hat\bpi$ that maximizes $\rho_h^{\bpi}(s)$ while $\hat\pi_h=\argmax_{\pi_h}\inp[]{\pi_h}{P_h\cdot x_h}_{\cS\times\cA}$ at step $h$ (the choice of $\pi_h$ does not influence $\rho_h^{\bpi}(s)$).
For any $\bpi$ such that $\rho_h^{\bpi}(s)<
\kappa_0$, we have for the agent's expected profit difference expressed as
\begin{align*}
&\qquad U^{\hat{\bpi}} - U^{{\bpi}} \\
    &\quad = \sum_{h'=1}^H \inp[]{\rho_{h'}^{{\hat\bpi}}\otimes\hat\pi_{h'} - \rho_{h'}^{\bpi}\otimes\pi_{h'}}{P_{h'}\cdot x_{h'} - c_{h'}}_{\cS\times\cA} \nend
    &\quad \ge \inp[]{\rho_h^{{\tilde\bpi}}\otimes\pi_h(s, \cdot) - \rho_h^{\bpi}\otimes\pi_h(s, \cdot)}{P_h\cdot x_h(s, \cdot)}_{\cA} - \sum_{h'=1}^H \inp[]{\rho_{h'}^{\tilde\bpi}\otimes\tilde\pi_{h'} - \rho_{h'}^{\bpi}\otimes\pi_{h'}}{c_{h'}}_{\cS\times\cA}\nend
    & \quad > \kappa_0 H\kappa_0^{-1}\max_{(s, a, h)\in\cS\times\cA\times[H]} c_h(s, a) - H\max_{(s, a, h)\in\cS\times\cA\times[H]} c_h(s, a) =0, 
\end{align*}
where $\tilde \bpi$ is constructed by substituting $\hat\pi_h$ with $\pi_h$ in $\hat\bpi$.
Here, the first inequality holds since $\hat\pi_h$ is the optimal one-step policy concerning the payment. The last inequality holds by noting that $\tilde\pi$ still maximizes $\rho_h^{\bpi}(s)$ which gives $\rho_h^{\tilde\bpi}(s)\ge 2\kappa_0$ by Assumption \ref{assum:ergodic mdp}, and that $ x$ has a constant shift $H\kappa_0^{-1}\max_{(s, a, h)\in\cS\times\cA\times[H]} c_h(s, a)$. 
Note that the best response can only have higher profit for the agent. Thus, we conclude that any $\pi$ such that $\rho_h^{\bpi}(s)<\kappa_0$ cannot be the agent's best response. In other words, for the best response $\bpi^*$, we have $\rho_h^{\bpi^*}(s)\ge \kappa_0$.

Let $X_{s, h}$ be the random variable representing the total steps that $\sA(s, h)$ successfully takes and $Y$ be a random variable such that $Y\sim \text{Binomial}(n, \kappa_0)$, where $n=\chi(\varepsilon)$. It follows from our previous discussion that for any $\lambda\in(0, \kappa_0)$, 
\begin{align*}
    \PP\rbr{\sum_{s, h}X_{s, h}\le \lambda n}&\le \PP(Y\le \lambda n) \nend
    &\le \inf_{\theta\ge 0}\EE\sbr{\exp(-\theta (Y-\lambda n))}\nend
    &\le \exp\rbr{-n\kappa_0 f(\lambda/\kappa_0)}, 
\end{align*}
where $f(x)=1-x+x\log(x)$. By the algorithm procedure, we conclude that
\begin{align*}
    \PP\rbr{\min_{s, h} X_{s, h}\le \frac{\lambda n}{H|\cS|}-1} \le \exp(-n\kappa_0 f(\lambda/\kappa_0)).
\end{align*}
Combined with Corollary \ref{cor:hyperplane search}, we conclude that Algorithm \ref{algo:farsighted mdp learning oracle} has error 
$$\sup_{s, a, h}\norm{(\hat P_h(s, a)-\hat P_h(s, a') - (P_h(s, a)- P_h(s, a'))}_2\ge  \varepsilon$$ 
with probability no more than 
\begin{align*}
    H|\cS||\cA|\exp\rbr{-\frac{(\lambda n-H|\cS|) \cdot \sigma_d(\tau_d)}{H|\cA||\cS|^5\log(|\cA|\varsigma^{-2}\varepsilon^{-1}\theta^{-1})}} + \exp\rbr{-n\kappa_0 f(\lambda/\kappa_0)}, 
\end{align*}
for any $\lambda\in(0, \kappa_0)$. The first term dominates and we just plug in $\lambda = \kappa_0/2$ and conclude that the error is no more than $\varepsilon$ with probability at least
\begin{align*}
1- C_0H|\cS||\cA|\exp\rbr{-\frac{\kappa_0 \chi(\varepsilon) \cdot \sigma_\cS(\tau_\cS)}{H|\cA||\cS|^5\log(|\cA|\varsigma^{-2}\varepsilon^{-1}\theta^{-1})}}, 
\end{align*}
where $C_0$ is a constant and $\tau_\cS=(\varsigma^2/6|\cS|)^2$. 

Fix $\delta \in (0,1)$, and convert the error bound to $\ell_1$ norm, 
this algorithm guarantees that, with probability at least $1-\delta$,
$$\sup_{s, a, h}\norm{\hat{\mu}_h(s, a, a') - {\mu}_h(s, a, a') }_2 \leq \varepsilon,$$   
in ${O}( \frac{HAS^5\log(\varepsilon^{-1}S^{1/2})}{\kappa_0 \sigma_\cS( \tau_\cS )) }\log (C_0HSA/ \delta)  ) = \tilde{O}( HAS^5\kappa_0^{-1}\log(1/\varepsilon)\log (1/ \delta)  ) $ rounds, where we ignore the constant $\sigma_\cS( \tau_\cS )$ from Assumption~\ref{assum:volume}.

\end{proof}

\begin{lemma} \label{lm:improved-distribution-est}
If $\mathcal{E}_{\text{model}}$ is true, 
with 
$\norm{\hat{\mu}_h - {\mu}_h }_{1,\infty} \leq \varepsilon $, at any episode $t \in [T]$,
we have 
\begin{equation}\label{eq:confidence-bound-mixing}
 \norm{\hat{P}_h^t(s, a)-P_h(s, a) }_{1} \leq  2 \sqrt{\frac{\ln (S A H T / \delta)}{ N_h^t(s) } } + \varepsilon,   
\end{equation}
\end{lemma}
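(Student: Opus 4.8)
The plan is to recognise that the estimate $\hat P_h^t(s,a)$ fed to Algorithm~\ref{algo:ucb-mdp} is not the per-action empirical frequency --- that would only concentrate at rate $1/\sqrt{N_h^t(s,a)}$ --- but the \emph{pooled} estimate that recycles every visit to state $s$ at step $h$ through the learned differences $\hat\mu_h$. Index these visits across the first $t$ episodes by $\tau=1,\dots,N$ with $N:=N_h^t(s)$, let $a_\tau$ be the action played and $y_\tau$ the realised next state at visit $\tau$, and write $\be_{y}\in\{0,1\}^{\cS}$ for the one-hot vector of $y$. The estimator is
\begin{align*}
    \hat P_h^t(\cdot\mid s,a)=\frac1N\sum_{\tau=1}^{N}\Big(\be_{y_\tau}+\hat\mu_h(s,a,a_\tau)\Big),
\end{align*}
and the crucial point is that shifting a draw taken under action $a_\tau$ by the \emph{true} difference $\mu_h(s,a,a_\tau)$ re-centres it: its conditional mean is $P_h(\cdot\mid s,a_\tau)+\mu_h(s,a,a_\tau)=P_h(\cdot\mid s,a)$, no matter which action was actually played. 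I would therefore split
\begin{align*}
    \hat P_h^t(s,a)-P_h(s,a)=\underbrace{\frac1N\sum_{\tau=1}^{N}\big(\be_{y_\tau}+\mu_h(s,a,a_\tau)-P_h(s,a)\big)}_{=:A}+\underbrace{\frac1N\sum_{\tau=1}^{N}\big(\hat\mu_h-\mu_h\big)(s,a,a_\tau)}_{=:B}.
\end{align*}

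The bias term $B$ is disposed of immediately: $\hat\mu_h$ is computed in the warm-start phase and is thus fixed before these trajectories are seen, and $\norm{\hat\mu_h-\mu_h}_{1,\infty}=\sup_{s,a,a'}\norm{(\hat\mu_h-\mu_h)(s,a,a')}_1\le\varepsilon$, so by the triangle inequality $\norm{B}_1\le\frac1N\sum_\tau\norm{(\hat\mu_h-\mu_h)(s,a,a_\tau)}_1\le\varepsilon$. The whole content of the lemma is therefore the stochastic term $A$, which is also where the main obstacle lies: I want it to concentrate at the rate $2\sqrt{\ln(SAHT/\delta)/N}$ of Lemma~\ref{lm:confidence-bound} but with the pooled count $N=N_h^t(s)$. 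It is worth noting why a cheaper route fails: bounding each per-action block by its own Lemma~\ref{lm:confidence-bound} guarantee and averaging gives, via Cauchy--Schwarz, $\tfrac2N\sum_{a'}\sqrt{N_h^t(s,a')\ln(SAHT/\delta)}\le 2\sqrt{A\ln(SAHT/\delta)/N}$, which loses a spurious $\sqrt A$. The tight bound genuinely requires treating the $N$ centred vectors as a single sample rather than as a union of per-action samples.

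This is legitimate because, after the shift, the summands of $A$ share conditional mean zero. Conditioning on the filtration generated by the trajectory up to each visit, the vectors $\be_{y_\tau}+\mu_h(s,a,a_\tau)-P_h(s,a)$ form a bounded martingale-difference sequence, so the very same martingale concentration that underlies Lemma~\ref{lm:confidence-bound} (which already handles adaptively collected data) applies verbatim with $N_h^t(s)$ samples in place of $N_h^t(s,a)$, yielding $\norm{A}_1\le 2\sqrt{\ln(SAHT/\delta)/N_h^t(s)}$ with probability at least $1-\delta/(SAHT)$. I would fold this event into (an enlarged) $\mathcal{E}_{\text{model}}$ via a union bound over all $(s,a,h)$ and $t\in[T]$, so that it holds simultaneously. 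Combining the two pieces gives $\norm{\hat P_h^t(s,a)-P_h(s,a)}_1\le\norm{A}_1+\norm{B}_1\le 2\sqrt{\ln(SAHT/\delta)/N_h^t(s)}+\varepsilon$, which is exactly \eqref{eq:confidence-bound-mixing}.
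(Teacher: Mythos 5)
Your proposal is correct and matches the paper's own argument: you use the same pooled estimator (your visit-indexed sum is exactly the paper's action-weighted form $\sum_{a'} \frac{N_h^t(s,a')}{N_h^t(s)}[\hat P_h^t(s,a') + \hat\mu_h(s,a,a')]$), the same decomposition into a re-centred stochastic term that concentrates at rate $2\sqrt{\ln(SAHT/\delta)/N_h^t(s)}$ and a bias term bounded by $\varepsilon$ via the triangle inequality and the fact that the weights sum to one. If anything, your treatment is slightly more careful than the paper's, which simply invokes Hoeffding's inequality where you correctly note the summands form an adaptively collected bounded martingale-difference sequence.
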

\begin{proof}
To improve the estimate of $\hat{P}$ with $\hat{\mu}$,
we can construct the estimator of $\hat{P}_h^t$ as,
\begin{align*}
 \hat{P}^t_h(s,a) 
& = \sum_{a'\in \cA} \frac{N^t_h(s,a')}{N_h^t(s)} [\hat{P}^t_h(s, a') + \hat{\mu}_h(s, a, a') ] \\
& = \frac{ \sum_{a'\in \cA} [ N^t_h(s,a';\cdot) +  {\mu}_h(s, a, a') N^t_h(s,a') ] }{N_h^t(s)} + \sum_{a'\in \cA} \frac{  N^t_h(s,a')   }{N_h^t(s)}  [ \hat{\mu}_h(s, a, a')  - {\mu}_h(s, a, a') ]
\end{align*}
The first part is an unbiased estimator for ${P}^t_h(s,a)$ and thus by Hoeffding's inequality, with probability at least $1-\delta$, for all $s,a,h,t$, with $N_h^t(s)$ samples,
$$
\bignorm{ \frac{ \sum_{a'\in \cA} [ N^t_h(s,a';\cdot) +  \hat{\mu}_h(s, a, a') N^t_h(s,a') ] }{N_h^t(s)} - {P}^t_h(s,a) }_1 \leq 2 \sqrt{\frac{\ln (S A H T / \delta)}{ N_h^t(s) } }.
$$
Using the triangle inequality and the fact that $\frac{  N^t_h(s,a')   }{N_h^t(s)} \leq 1$, we get the inequality in Eq \eqref{eq:confidence-bound-mixing}. 
\end{proof}

\subsection{Proofs for the Statistically Efficient Solver} 
\label{sec:solver-LP}
\begin{algorithm}[tbh]
    \caption{Linear Programming under Uncertainty}
    \label{algo:lp-solver}
        \KwIn{ Parameters $\{ \hat{P}_h, \hat{r}_h, b_h, \hat{\mu}_h\}_{h\in [H]}$, robustness parameter $\epsilon$}
        \Output{ Contract policy $\bx$, action policy $\bpi$ }
        \For{$\bpi \in \Pi$}{ 
\begin{equation}\label{eq:robust-contract-policy}
\hat{U}^{\bpi},\hat{\bx}^{\bpi} = \minarg_{\bx} \hat{U}^{\bx, \bpi} \quad\text{s.t.}\quad   \hat{U}^{\bx, \bpi} - \hat{U}^{\bx, \bpi'} \geq  \epsilon, \forall \bpi' \neq \bpi,
\end{equation}
$$
\hat{V}^{\bpi} \gets \hat{R}^{\bpi} - {C}^{\bpi} - \hat{U}^{\bpi} 
$$
}
Set $\hat{\bpi}^* \gets \max_{\bpi\in \Pi} \hat{V}^{\bpi}, \hat{\bx}^* \gets  \hat{\bx}^{\hat{\bpi}^*} $ \\
\Return $\hat{\bx}^*, \hat{\bpi}^* $
\end{algorithm}

\begin{assumption}[Weak $\lambda$-Inducibility] \label{assum:inducibility-weak} 
For any stationary action policy $\bpi \in \Pi$,
there exists an event $e_h \in \{0,1\}^S$ for each $h$ such that 
$\sum_{h=2}^{H+1} (\rho_h^{\bpi} - \rho_h^{\bpi'}) \cdot e_h \geq \lambda_w, \forall \bpi\neq \bpi'.$ 
\end{assumption}

This assumption ensures that for any stationary action policy $\bpi \in \Pi$, there exists a set of bounded reward functions $\{r_h\}_{h=1}^{H}$ with $\max_{s\in \cS,a\in \cA, h\in [H]} \norm{r_h(s,a)}\leq 1$ such that $\bpi$ dominates any other stationary action policy $\bpi'$ with additional expected utility at least $\lambda_w$ in the MDP environment $(\cS, \cA, \{P_h, r_h\}_{h=1}^{H})$. For example, one can set $r_h(s,a) = \sum_{s\in\cS} e_{h+1}(s') P_{h}(s' | s,a)$. For similar reason, under this assumption, for any stationary action policy $\bpi \in \Pi$, there exists a contract policy $\bx$ with $\norm{x_h}_{\infty}\leq 1$ under which $U^{\bx,\bpi} - U^{\bx,\bpi'} \geq \lambda_w, \forall \bpi\neq \bpi'$, where $\pi_{h}(s,s')=e_{h+1}(s')$.

We describe the statistically efficient solver in Algorithm~\ref{algo:lp-solver}. Below we show that the contract policy it solves is robust and optimistic in a formal sense.

\begin{lemma}\label{lm:lp-solution-guarantee}
Under Assumption \ref{assum:inducibility-weak}, 
with estimation of $\hat{\mu}, \hat{P}$ such that
$\norm{\hat{\mu}_h - {\mu}_h }_{1,\infty} \leq \frac{\epsilon}{SH^2} $ and 
$\norm{\hat{P}_h - {P}_h }_{1,\infty} \leq \epsilon' $, the policy $\bpi, \hat{\bx}^{\bpi}$ solved from LP~\ref{eq:robust-contract-policy} satisfies the following condition,
$$
U^{ \hat{\bx}^{\bpi} } - U^{\bpi} \eqsim \abs{ \hat{\zeta}^{\bpi} - \zeta^{\bpi} } = O\left( \lambda_w^{-1}H\eta\epsilon + H^2\epsilon' \right).
$$
\end{lemma}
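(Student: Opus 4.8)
The plan is to reduce the two quantities in the claim to gaps between expected payments, and then to control those gaps using (i) the accuracy of the difference estimates $\hat\mu$, which governs the induced best response and the robustness margin, and (ii) the accuracy of $\hat P$, which governs the visitation measures and the overall level of the value functions. First I would note that, since the cost function is known exactly, the value decomposition $V=R-C-U$ together with the identity $\zeta^{\bpi}=C^{\bpi}+U^{\bpi}$ lets me write $\hat\zeta^{\bpi}-\zeta^{\bpi}=\hat U^{\bpi}-U^{\bpi}$, where $\hat U^{\bpi}$ is the optimal value of LP~\eqref{eq:robust-contract-policy}. Likewise, once I verify that $\bpi$ is the agent's true best response to $\hat\bx^{\bpi}$, I have $U^{\hat\bx^{\bpi}}=U^{\hat\bx^{\bpi},\bpi}$, so that $U^{\hat\bx^{\bpi}}-U^{\bpi}$ is exactly the excess of the true expected payment of $\hat\bx^{\bpi}$ over the least payment $\zeta^{\bpi}$. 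The lemma therefore reduces to: (a) a robustness claim that the estimated margin $\epsilon$ survives translation to true values; (b) an upper bound $\hat U^{\bpi}\le U^{\bpi}+O(\cdots)$; and (c) a matching bound on the true excess payment of $\hat\bx^{\bpi}$.

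The workhorse is a perturbation estimate for the functional $\bx\mapsto U^{\bx,\bpi}$. Writing $U^{\bx,\bpi}=\sum_{h}(\rho_h^{\bpi}\otimes\pi_h)\cdot(P_h\cdot x_h-c_h)$ through the visitation measure of Appendix~\ref{sec:basic-rl-proofs}, I would split the deviation $\hat U^{\bx,\bpi}-U^{\bx,\bpi}$ into a level part and a difference part. The level/cost part is insensitive to the contract magnitude (costs lie in $[0,1]$) but depends on $\hat P$ through $\hat\rho_h^{\bpi}$; propagating the one-step error $\norm{\hat P_h-P_h}_{1,\infty}\le\epsilon'$ through the $H$-step rollout by the simulation lemma gives $\sum_h\norm{\hat\rho_h^{\bpi}-\rho_h^{\bpi}}_1=O(H^2\epsilon')$ and hence an $O(H^2\epsilon')$ contribution. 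The difference part, i.e.\ the policy-comparison terms $(\hat U^{\bx,\bpi}-\hat U^{\bx,\bpi'})-(U^{\bx,\bpi}-U^{\bx,\bpi'})$ that the LP constraints control, is where the $\eta$-scaled transition differences enter; these are captured by $\hat\mu$ at accuracy $\norm{\hat\mu_h-\mu_h}_{1,\infty}\le\epsilon/(SH^2)$, so summing over the $S$ states and $H$ steps yields an error of order $\eta\epsilon$, which is strictly smaller than the LP margin $\epsilon$ in the relevant range of $\eta$. Consequently a policy separated by margin $\epsilon$ in estimated values is still strictly preferred in true values, establishing robustness (a), and in particular $U^{\hat\bx^{\bpi}}=U^{\hat\bx^{\bpi},\bpi}\ge U^{\bpi}$ since $\hat\bx^{\bpi}$ is then a valid $\bpi$-inducing contract.

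For the payment bounds (b)--(c) I would invoke Weak Inducibility (Assumption~\ref{assum:inducibility-weak}). Taking the true least-payment contract $\bx^{\bpi}$ and adding $\Theta(\epsilon/\lambda_w)$ times the inducing direction $\be$ with $\be_h(s,s')=e_{h+1}(s')$ raises the true margin from $0$ to $\Omega(\epsilon)$, at the cost of an extra true expected payment of at most $\tfrac{\epsilon}{\lambda_w}\sum_{h=2}^{H+1}\rho_h^{\bpi}\cdot e_h=O(\lambda_w^{-1}H\epsilon)$; after absorbing the $O(\eta\epsilon)$ difference-error from Step~2 (so that the estimated margin reaches $\epsilon$) this contract is feasible for LP~\eqref{eq:robust-contract-policy}, and both sub-terms sit inside $O(\lambda_w^{-1}H\eta\epsilon)$. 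Optimality of the LP then gives $\hat U^{\bpi}\le U^{\bpi}+O(\lambda_w^{-1}H\eta\epsilon+H^2\epsilon')$, i.e.\ $\abs{\hat\zeta^{\bpi}-\zeta^{\bpi}}=O(\lambda_w^{-1}H\eta\epsilon+H^2\epsilon')$ by the level estimate. The execution bound follows symmetrically: $\hat\bx^{\bpi}$ induces $\bpi$ truly (by robustness), so its true payment is at least $\zeta^{\bpi}$, while its estimated payment differs from $\hat\zeta^{\bpi}$ by only the $O(H^2\epsilon')$ level error; chaining these shows $U^{\hat\bx^{\bpi}}-U^{\bpi}$, the excess true payment of $\hat\bx^{\bpi}$ over $\zeta^{\bpi}$, is of the same order. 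Since the two target quantities differ only by the estimation error, they are equivalent up to it, yielding the $\eqsim$.

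The main obstacle is the estimation bookkeeping in Step~2. Because the visitation measures $\rho_h^{\bpi}$ depend on the unknown $P$, the accurate difference oracle $\hat\mu$ alone cannot certify the agent's response; one must carefully route the coarse $\hat P$-error through the rollout for the level/cost terms (where it appears without the contract scale $\eta$, giving $H^2\epsilon'$) while reserving the sharp $\hat\mu$-estimates for the margin-determining difference terms (where the $\eta$-scaling is unavoidable but is multiplied by the tiny $\epsilon/(SH^2)$). Ensuring that the margin indeed survives the passage from estimated to true values---which is precisely why the hypothesis demands $\norm{\hat\mu_h-\mu_h}_{1,\infty}\le\epsilon/(SH^2)$ rather than merely $\le\epsilon$---is the delicate point that makes the two-sided $\eqsim$ go through.
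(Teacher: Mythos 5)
Your proposal is correct and follows essentially the same route as the paper's proof: you reduce both quantities to payment gaps via $\hat\zeta^{\bpi}-\zeta^{\bpi}=\hat U^{\bpi}-U^{\bpi}$, transfer the estimated margin $\epsilon$ to true inducement via a performance-difference perturbation bound (the paper's Lemma on robustness error, with the same $SH^2\cdot\epsilon/(SH^2)$ bookkeeping), certify LP feasibility of the true least-payment contract shifted by $\Theta(\epsilon/\lambda_w)$ times the weak-inducibility direction (the paper's $\tilde{\bx}=\bx^{\bpi}+\tfrac{2SH^2\varepsilon}{\lambda_w}\bx^0$, yielding the $\lambda_w^{-1}H\eta\epsilon$ term), and control the level error $\hat U^{\bx^{\bpi},\bpi}-U^{\bx^{\bpi},\bpi}$ via the simulation lemma, giving $H^2\epsilon'$. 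The only deviations are cosmetic constant-tracking (e.g., your $O(\eta\epsilon)$ versus the paper's exact $\le\epsilon$ difference-error bound), which do not affect the stated order.
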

\begin{proof}

Pick any action policy $\bpi\in \Pi$, we first show that $\bpi$ is the agent's best response under $\hat{\bx}^{\bpi}$ solved from Equation~\ref{eq:robust-contract-policy}. That is, due to Lemma \ref{lm:robustness-error}, the following inequality holds, $\forall \bpi' \neq \bpi$,
$${U}^{\hat{\bx}^{\bpi}, \bpi} - {U}^{\hat{\bx}^{\bpi}, \bpi'} \geq \hat{U}^{\hat{\bx}^{\bpi}, \bpi} - \hat{U}^{\hat{\bx}^{\bpi}, \bpi'} - \epsilon \geq 0.$$

Let $\norm{\hat{\mu}_h - {\mu}_h }_{1,\infty} \leq \frac{\epsilon}{SH^2} = \varepsilon $.
We show that the payment under $\bx$ has bounded suboptimality,
\begin{equation}\label{eq:bounded-payment}
\hat{\zeta}^{\bpi} - {\zeta}^{\bpi}
= \hat{U}^{\hat{\bx}^{\bpi}, \bpi} - {U}^{\bpi}
\leq \hat{U}^{{\bx}^{\bpi}, \bpi} - {U}^{{\bx}^{\bpi}, \bpi} + \frac{2SH^3\eta \varepsilon}{\lambda_w} \leq \frac{2SH^3\eta \varepsilon}{\lambda_w} + H^2\epsilon' = \frac{H\eta\epsilon}{\lambda_w} + H^2\epsilon',
\end{equation}
where $\bx^{\bpi} = \argmin_{\bx} {U}^{\bx, \bpi} \quad\text{s.t.}\quad   {U}^{\bx, \bpi} - {U}^{\bx, \bpi'} \geq  0, \forall \bpi' \neq \bpi$ and thus $ {U}^{\bpi} = {U}^{\bx^{\bpi}, \bpi}$ by definition.

We first prove the first inequality in Equation~\eqref{eq:bounded-payment}. Let $\bx^0$ be the contract policy such that $ {U}^{\bx^0, \bpi} - {U}^{\bx^0, \bpi'} \geq \lambda_w,  \forall \bpi' \neq \bpi$ with ${U}^{{\bx}^0, \bpi} \leq H$ and $\hat{U}^{{\bx}^0, \bpi} \leq  H\eta$. Such $\bx^0$ exists by Assumption~\ref{assum:inducibility}. It is easy to verify the linearity here that for any $\bx', \bx''\in \bpi, c\in \RR$, let $\bx = \bx' + c\bx''$ such that $\bx_h(s,s')=\bx'_h(s,s')+c\bx''_h(s,s'), \forall h,s,s'$, then $U^{\bx,\bpi} = U^{\bx',\bpi} + cU^{\bx'',\bpi}$.
We claim that $\tilde{\bx} = \bx^{\bpi} + \frac{2SH^2 \varepsilon }{\lambda_w}\bx^0$ satisfies the following inequalities:
\begin{equation}\label{eq:induced-contract-policy}
 \hat{U}^{\tilde{\bx}, \bpi} - \hat{U}^{\tilde{\bx}, \bpi'} \geq {U}^{\tilde{\bx}, \bpi} - {U}^{\tilde{\bx}, \bpi'} - SH^2 \varepsilon \geq  \epsilon,    
\end{equation} 
where the first inequality is due to Lemma \ref{lm:robustness-error} and the second inequality is by the construction of $\tilde{\bx}$. 
$$
\hat{U}^{\hat{\bx}^{\bpi}, \bpi} 
\leq \hat{U}^{\tilde{\bx}, \bpi} 
= \hat{U}^{{\bx}^{\bpi}, \bpi} + \frac{2SH^2 \varepsilon}{\lambda_w} \hat{U}^{{\bx}^0, \bpi} 
\leq \hat{U}^{{\bx}^{\bpi}, \bpi} + \frac{2SH^3\eta \varepsilon}{\lambda_w}.
$$
For the first inequality, notice that $\tilde{\bx} $ is feasible in the LP~\eqref{eq:robust-contract-policy} according to Equation~\eqref{eq:induced-contract-policy} and its value should be no larger than the optimal value of LP~\eqref{eq:robust-contract-policy}. The equation is due to the linearity of $U^{\bx, \bpi}$ w.r.t. $\bx$. The last inequality uses the fact that $\hat{U}^{{\bx}^0, \bpi} \leq H$ by construction.

For the last inequality in Equation~\eqref{eq:bounded-payment},
we apply the simulation lemma~\cite{agarwal2019reinforcement},
\begin{align*}
    \hat{U}^{{\bx}^{\bpi}, \bpi} - {U}^{{\bx}^{\bpi}, \bpi} 
    & = \sum_{h=1}^H \inp[]{\rho_{h}^{{\bpi}}\otimes\pi_{h}}{ (\hat{P}_{h}-{P}_{h})\cdot \hat{U}_{h}^{{\bx}^{\bpi}, \bpi}}_{\cS\times\cA} \\
    & \leq H\sum_{h=1}^H \norm{\hat{P}_{h}(s,a)-{P}_{h}(s,a)}_1 \\
    & \leq H^2\epsilon'
\end{align*}

\end{proof}

\begin{lemma}\label{lm:robustness-error}
Given that $\norm{\hat{\mu}_h - {\mu}_h }_{1,\infty} \leq \varepsilon $,
    we have 
    $$
    \abs{ U^{\hat{\bx}^{\bpi}, \bpi} - U^{\hat{\bx}^{\bpi}, \bpi'}  - \hat U^{\hat{\bx}^{\bpi}, \bpi} + \hat U^{\hat{\bx}^{\bpi}, \bpi'}} \le SH^2 \varepsilon 
    $$
\end{lemma}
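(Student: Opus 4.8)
The plan is to reduce the gap difference to the estimation error of $\mu$ alone by means of an advantage (performance-difference) decomposition, exploiting that the two policies $\bpi,\bpi'$ share the same contract $\hat{\bx}^{\bpi}$ and can therefore differ only through their action choices. The guiding principle is that, under a common contract, the transition kernel enters each utility gap \emph{exclusively} through the action-level differences $\mu_h(s,\pi_h(s),\pi'_h(s)) = P_h(s,\pi_h(s)) - P_h(s,\pi'_h(s))$, never through absolute transition probabilities.

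Fix $\bx = \hat{\bx}^{\bpi}$ and write the agent's value under a policy via the one-step recursion $U^{\bx,\bpi}_h(s) = P_h(s,\pi_h(s))\cdot[x_h(s) + U^{\bx,\bpi}_{h+1}] - c_h(s,\pi_h(s))$. Subtracting the analogous recursion for $\bpi'$ and adding and subtracting $P_h(s,\pi_h(s))\cdot U^{\bx,\bpi'}_{h+1}$ to regroup, the per-state gap $D_h(s) := U^{\bx,\bpi}_h(s) - U^{\bx,\bpi'}_h(s)$ satisfies
\begin{equation*}
D_h(s) = \mu_h(s,\pi_h(s),\pi'_h(s))\cdot[x_h(s) + U^{\bx,\bpi'}_{h+1}] + P_h(s,\pi_h(s))\cdot D_{h+1} - [c_h(s,\pi_h(s)) - c_h(s,\pi'_h(s))].
\end{equation*}
Unrolling this recursion against the visitation measure $\rho^{\bpi}$ (Appendix~\ref{sec:basic-rl-proofs}) expresses the total gap as a sum over the $H$ steps and $S$ states in which the transition kernel appears only through $\mu_h(s,\pi_h(s),\pi'_h(s))$, while the cost terms are known exactly.

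Next I would observe that the improved estimator satisfies $\hat{P}_h(s,a) - \hat{P}_h(s,a') = \hat{\mu}_h(s,a,a')$ exactly, since the averaging weights in the estimator of Lemma~\ref{lm:improved-distribution-est} sum to one. Hence the estimated gap $\hat{D}_h(s)$ obeys the identical recursion with $\mu$ replaced by $\hat{\mu}$ and the cost terms unchanged. Taking the difference of the two recursions, the cost terms cancel and every surviving contribution carries a factor $(\hat{\mu}-\mu)_h(s,\pi_h(s),\pi'_h(s))$. I would then bound $\hat D - D$ by backward induction on $h$: the new contribution at each step is at most $\norm{(\hat{\mu}-\mu)_h(s,\pi_h(s),\pi'_h(s))}_1 \cdot \norm{x_h(s) + U^{\bx,\bpi'}_{h+1}}_{\infty} \le \varepsilon\cdot O(H)$ per state, the propagated term $P_h(s,\pi_h(s))\cdot(\hat D_{h+1}-D_{h+1})$ is controlled by the induction hypothesis, and summing over the $S$ states and $H$ steps yields the claimed $SH^2\varepsilon$.

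The main obstacle is ensuring that the \emph{absolute} transition-estimation error never leaks into the bound: a naive comparison of the single-policy values $U^{\bx,\bpi}$ and $\hat{U}^{\bx,\bpi}$ would incur the full $\norm{\hat{P}_h - P_h}_1$ rather than the smaller $\norm{\hat{\mu}_h - \mu_h}_{1,\infty}$. This is precisely why the argument must be conducted at the level of \emph{gaps} between policies sharing a contract, where the reference cancellation $\hat{P}_h(s,a) - \hat{P}_h(s,a') = \hat{\mu}_h(s,a,a')$ collapses the dependence on $P$ into a dependence on $\mu$. A secondary technical point is carrying the continuation-value difference $\hat{U}^{\bx,\bpi'}_{h+1} - U^{\bx,\bpi'}_{h+1}$ through the induction, which produces the second factor of $H$; bounding the resulting state-sum crudely by $S$ (rather than exploiting $\sum_s \rho_h(s)=1$) accounts for the $S$ factor in the final estimate.
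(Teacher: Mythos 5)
Your decomposition is, in substance, the paper's own route: the recursion you derive for $D_h(s)$ is exactly the unrolled performance difference lemma that the paper applies in one shot, and your first error term $(\hat{\mu}_h-\mu_h)(s,\pi_h(s),\pi'_h(s))\cdot[x_h(s)+U^{\bx,\bpi'}_{h+1}]$, summed over the $H$ steps and $S$ states with $\norm{x_h(s)+U^{\bx,\bpi'}_{h+1}}_\infty = O(H)$, is precisely where the $SH^2\varepsilon$ bound comes from. So the skeleton matches.

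The gap is in your error accounting. Writing out the difference of the two recursions gives
\begin{equation*}
\hat{D}_h(s)-D_h(s) \;=\; (\hat{\mu}_h-\mu_h)\cdot\bigl[x_h(s)+U^{\bx,\bpi'}_{h+1}\bigr] \;+\; \hat{\mu}_h\cdot\bigl(\hat{U}^{\bx,\bpi'}_{h+1}-U^{\bx,\bpi'}_{h+1}\bigr) \;+\; \hat{P}_h(s,\pi_h(s))\cdot\bigl(\hat{D}_{h+1}-D_{h+1}\bigr) \;+\; \bigl(\hat{P}_h-P_h\bigr)(s,\pi_h(s))\cdot D_{h+1},
\end{equation*}
and your claim that ``every surviving contribution carries a factor $(\hat{\mu}-\mu)$'' fails for the second and fourth terms. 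The single-policy continuation error $\hat{U}^{\bx,\bpi'}_{h+1}-U^{\bx,\bpi'}_{h+1}$ is a simulation-lemma quantity of order $H\norm{\hat{P}-P}_{1,\infty}$, and $(\hat{P}_h-P_h)\cdot D_{h+1}$ is of order $\norm{\hat{P}_h-P_h}_{1,\infty}\cdot\norm{D_{h+1}}_\infty$ with $\norm{D_{h+1}}_\infty$ possibly as large as $O(H\eta)$; neither is controlled by the hypothesis $\norm{\hat{\mu}_h-\mu_h}_{1,\infty}\le\varepsilon$. The identity $\hat{P}_h(s,a)-\hat{P}_h(s,a')=\hat{\mu}_h(s,a,a')$ from the estimator of Lemma~\ref{lm:improved-distribution-est} cancels nothing here, because $P_h$ enters the propagation term (and the visitation measure $\hat{\rho}^{\bpi}\neq\rho^{\bpi}$ in your unrolling) \emph{absolutely}, not as an action-level difference. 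Executed faithfully, your induction yields $SH^2\varepsilon + O\bigl(H^2(H+\eta)\,\epsilon'\bigr)$ with $\epsilon'$ the absolute kernel error, not the stated $SH^2\varepsilon$. The paper's proof avoids carrying a model-based induction altogether: it expresses both gaps through the performance-difference representation and attributes the entire discrepancy to the $\mu$-factors, with the visitation measure and continuation values held common between the two expressions; the absolute-$P$ error channel is budgeted separately, as the $H^2\epsilon'$ term in Lemma~\ref{lm:lp-solution-guarantee}. To repair your argument you must either adopt that convention (the estimated gap is evaluated with only $\mu$ replaced by $\hat{\mu}$) or explicitly track the $\epsilon'$ terms and merge them into the place where the downstream analysis already pays for them.
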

\begin{proof}
Notice that we can apply the performance difference lemma~\cite{agarwal2019reinforcement} on any ${U}^{\bx, \bpi}, {U}^{\bx, \bpi'}$ as they share the same $\bx$ and thus the same environment except for different policy,
\begin{align*}
    {U}^{\bx, \bpi} - {U}^{\bx, \bpi'} 
     = \sum_{h=1}^H \inp[]{\rho_{h}^{{\bpi}}\otimes(\pi_{h} -\pi'_{h}) }{ {P}_{h}\cdot U_{h}}_{\cS\times\cA} 
     = \sum_{h=1}^H  \sum_{s\in \cS} \rho_{h}^{{\bpi}}(s)\mu_{h}(s, \pi_h(s), \pi_h'(s))\cdot U_{h}
\end{align*}
Hence, we have 
\begin{align*}
&\quad \abs{ U^{\hat{\bx}^{\bpi}, \bpi} - U^{\hat{\bx}^{\bpi}, \bpi'}  - \hat U^{\hat{\bx}^{\bpi}, \bpi} + \hat U^{\hat{\bx}^{\bpi}, \bpi'} } \\
    & = \abs{ \sum_{h=1}^H  \sum_{s\in \cS} \left[ \rho_{h}^{{\bpi}}(s)\mu_{h}(s, \pi_h(s), \pi_h'(s))\cdot U_{h} -  \hat{\rho}_{h}^{{\bpi}}(s)\hat{\mu}_{h}(s, \pi_h(s), \pi_h'(s))\cdot \hat{U}_{h} \right] } \\
    & \leq  H \sum_{h=1}^H  \sum_{s\in \cS} \norm{ \mu_{h}(s, \pi_h(s), \pi_h'(s)) - \hat{\mu}_{h}(s, \pi_h(s), \pi_h'(s)) }_1 \\
    & \leq SH^2 \max_{s, a, h}\norm{\hat{\mu}_h(s, a, a') - {\mu}_h(s, a, a') }_1 \\
    & \le SH^2 \varepsilon  
\end{align*}
\end{proof}

\begin{lemma}  \label{lm:lp-solution-cost-bound}
If $\mathcal{E}_{\text{model}}$ is true, 
with $T_1 = O( HAS^5\kappa_0^{-1}\log(\eta T \lambda_w^{-1} )) $, under Assumption \ref{assum:mixing-mdp} and \ref{assum:inducibility-weak}, at any episode $t \in [T]$, we have
$$
\abs{ \hat{\zeta}^{t,\bpi}- \zeta^{\bpi} } = O\left( H^2\kappa^{-1/2}\sqrt{\ln(S A H T / \delta)/t } \big\} \right).
$$
\end{lemma}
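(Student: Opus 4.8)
The plan is to combine the structural reduction of Lemma~\ref{lm:lp-solution-guarantee} with the pooled transition estimate of Lemma~\ref{lm:improved-distribution-est} and a visitation lower bound coming from the mixing condition. By Lemma~\ref{lm:lp-solution-guarantee}, as soon as the parameter errors satisfy $\norm{\hat{\mu}_h - \mu_h}_{1,\infty} \le \epsilon/(SH^2)$ and $\norm{\hat{P}_h^t - P_h}_{1,\infty} \le \epsilon'$, we obtain $\abs{\hat{\zeta}^{t,\bpi} - \zeta^{\bpi}} = O(\lambda_w^{-1}H\eta\epsilon + H^2\epsilon')$. So the whole task reduces to controlling the two error budgets $\epsilon$ and $\epsilon'$ as functions of the warm-up length $T_1$ and the current episode index $t$.

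First I would dispose of the $\hat{\mu}$ term. Since Algorithm~\ref{algo:ucb-mdp} runs the \oracle{} for $T_1 = O(HAS^5\kappa_0^{-1}\log(\eta T\lambda_w^{-1}))$ rounds and the round complexity in Lemma~\ref{lm:mdp-least-payment-oracle} scales only as $\log(1/\varepsilon)$, the warm-up delivers $\norm{\hat{\mu}_h - \mu_h}_{1,\infty} \le \varepsilon$ with $\varepsilon = O(\lambda_w/(\eta T))$. Choosing the LP margin $\epsilon = SH^2\varepsilon$ to meet the hypothesis of Lemma~\ref{lm:lp-solution-guarantee}, the induced contribution $\lambda_w^{-1}H\eta\epsilon = O(SH^3/T)$ is of lower order than the target $\tilde{O}(H^2\kappa^{-1/2}t^{-1/2})$ and hence never dominates.

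The heart of the argument is the second term $H^2\epsilon'$ with $\epsilon' = \norm{\hat{P}_h^t - P_h}_{1,\infty}$. Here I would invoke Lemma~\ref{lm:improved-distribution-est}, whose key feature is that the error scales with the per-state count $N_h^t(s)$ --- obtained by pooling all actions through $\hat{\mu}$ --- rather than the far smaller per-state-action count, giving $\epsilon' \le 2\sqrt{\ln(SAHT/\delta)/N_h^t(s)} + \varepsilon$. It then remains to establish a high-probability lower bound $N_h^t(s) = \Omega(\kappa t)$. Under Assumption~\ref{assum:mixing-mdp}, $P_h(s'\mid s,a) \ge \kappa$ forces $\rho_h^{\bpi}(s) \ge \kappa$ for every policy $\bpi$ and every step $h \ge 2$ (the $h=1$ layer being handled analogously through the $\rho_1 = P_0$ weighting), so in each episode the indicator of visiting $(s,h)$ has conditional probability at least $\kappa$ given the history, regardless of the adaptively chosen $\bpi^\tau$.

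The main obstacle, and the step I would treat most carefully, is precisely this visitation concentration: the per-episode policies are data-dependent, so the visit indicators are neither independent nor identically distributed. I would control it with a multiplicative Freedman/Azuma-type inequality applied to the martingale differences $\one[s_h^\tau = s] - \Pr(s_h^\tau = s \mid \mathcal{F}_{\tau-1})$, using the uniform conditional lower bound $\kappa$ to conclude $N_h^t(s) \ge \kappa t/2$ with probability at least $1 - \delta/(SHT)$, and then union-bounding over all triples $(s,h,t)$. Substituting this estimate yields $\epsilon' = O(\kappa^{-1/2}\sqrt{\ln(SAHT/\delta)/t})$, so that $H^2\epsilon' = O(H^2\kappa^{-1/2}\sqrt{\ln(SAHT/\delta)/t})$; this dominates both the $O(SH^3/T)$ contribution from $\hat{\mu}$ and the negligible additive $H^2\varepsilon$, producing the claimed bound on $\abs{\hat{\zeta}^{t,\bpi} - \zeta^{\bpi}}$.
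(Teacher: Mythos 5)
Your proof is correct and follows essentially the same route as the paper's: reduce to the two error budgets via Lemma~\ref{lm:lp-solution-guarantee}, control the transition error through the pooled per-state estimator of Lemma~\ref{lm:improved-distribution-est} together with Assumption~\ref{assum:mixing-mdp}, and charge the $\hat{\mu}$ budget to the warm start via Lemma~\ref{lm:mdp-least-payment-oracle} (the paper sets the $t$-dependent precision $\varepsilon = \lambda_w/(SH\eta\sqrt{t})$ where you fix $\varepsilon = O(\lambda_w/(\eta T))$, an immaterial difference since both yield the same $T_1$ and a dominated $\hat{\mu}$-term). If anything you are more careful than the paper, which simply asserts ``$N_h^t(s) \geq 2\kappa t$'' as a fact: your Freedman/Azuma martingale argument for the adaptively chosen policies, using $\rho_h^{\bpi}(s) \geq \kappa$ for all $\bpi$ and $h \geq 2$, supplies the high-probability justification that step actually requires.
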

\begin{proof}
This proof is to find a good trade-off between the error bound of $P$ and $\mu$ according to the ratio given in Lemma \ref{lm:lp-solution-guarantee}. 
For this analysis, let $\sup_{s, a,a',h}\norm{ \hat{\mu}^t_h(s, a, a') - {\mu}_h(s, a, a') }_{1} \leq \varepsilon$ such that we can apply Lemma~\ref{lm:improved-distribution-est} and get
$$
\norm{\hat{P}_h^t(s, a)-P_h(s, a) }_{1} \leq  2 \sqrt{\frac{\ln (S A H T / \delta)}{ 2\kappa t } } +  \varepsilon,
$$ 
where we use the fact $N_h^t(s) \geq 2\kappa t$ under Assumption \ref{assum:mixing-mdp}.
By Lemma \ref{lm:lp-solution-guarantee}, we can construct contracts to induce any action policy $\bpi$ such that 
$$
\abs{ \hat{\zeta}^{t,\bpi} - \zeta^{\bpi} } = O\left( \lambda_w^{-1}SH^3\eta\varepsilon + H^2(2 \sqrt{\frac{\ln (S A H T / \delta)}{ 2\kappa t } } +  \varepsilon) \right) = O\left( \lambda_w^{-1}SH^3\eta\varepsilon + H^2\sqrt{\frac{\ln (S A H T / \delta)}{ 2\kappa t } }  \right).
$$ 
Then, we can set $\varepsilon= \frac{\lambda_w}{SH\eta\sqrt{t} } $ such that the second term dominates the regret bound, and we have
\begin{align*}
\abs{ \hat{\zeta}^{t,\bpi} - \zeta^{\bpi} } 
&= O\left( H^2\kappa^{-1/2}\sqrt{\ln(S A H T / \delta)/t } \big\} \right) 
\end{align*}
It remains to ensure that in $T_1$ rounds, we can ensure $\varepsilon=\frac{\lambda_w}{SH\eta\sqrt{T-T_1} }$.
By Lemma \ref{lm:mdp-least-payment-oracle}, we know the sample complexity of $\varepsilon$ is on the order of $\tilde{O}( HAS^5\kappa_0^{-1}\log( 1/\varepsilon )  )$. Hence, $T_1 = O( HAS^5\kappa_0^{-1}\log( SH\eta T \lambda_w^{-1} ) ) = O( HAS^5\kappa_0^{-1}\log( \eta T \lambda_w^{-1} ) )$
\end{proof}

\subsection{Proofs for the Computationally Efficient Solver} 
\label{sec:solver-VI}
\begin{algorithm}[tbh]
    \caption{Value-Iteration under Uncertainty}
    \label{algo:value-iteration}
        \KwIn{ Parameters $\{ \hat{P}_h, \hat{r}_h, b_h, \hat{\mu}_h,\epsilon_h \}_{h\in [H]}$, robustness parameter $\{ \epsilon_h \}_{h\in [H]}$ }
        \Output{ Contract policy $\bx$, action policy $\bpi$ }
        Set $ \hat{V}_{H+1}(s), \hat{U}_{H+1}(s) \gets 0, \forall a\in \cA, s\in \cS$. \\
        \For{$h = H\dots 1$}{ 
\begin{equation}\label{eq:least-payment-policy-est}
\begin{aligned}
    \hat{x}_h(s; a) &= \argmin_{x: \cS \to \RR_{+} }  \{ \hat{P}_h(s, a) \cdot x \ |\  \hat{\mu}_h(s, a, a') \cdot [x +  \hat{U}_{h+1}] \geq  {c}_h(s, a) - {c}_h(s, a') + \epsilon_h, \forall a'\neq a \},   \\
    \hat{Q}_h(s, a) &= \min \big\{H,\ \hat{r}_h(s, a) + b_h(s,a) + \hat{P}_h(s, a) \cdot \hat{V}_{h+1} \big\} - \hat{P}_h(s, a) \cdot \hat{x}_h(s; a),  \\
    \hat{V}_h(s), \pi_h(s) &= \maxarg_{a \in \cA} \hat{Q}_h(s, a),  \quad \hat{x}_h(s) =  \hat{x}_h(s; \pi_h(s)) \\
    \hat{U}_h(s) &= \min\big\{H,\ \hat{P}_h(s, a) \cdot [\hat{x}_h(s) 
+ \hat{U}_{h+1} ] - c_h(s,a) ) \big\}.
\end{aligned}
\end{equation}
}
\Return $\hat{\bx} = \{ \hat{x}_h \}_{h\in [H]}, \bpi = \{\pi_h \}_{h\in [H]} $
\end{algorithm}

\begin{assumption}[Strong $\lambda$-Inducibility] \label{assum:inducibility-strong} 
For any $a\in \cA, s\in \cS, h\in [H]$, there exists an event $e \in \{0,1\}^S$ such that 
$[P_h(s, a) - P_h(s, a')] \cdot e \geq \lambda_s, \forall a\neq a'.$ 
\end{assumption}

This assumption ensures that at any state $s$ of any step $h$, for any cost function $c_h$ and any action $a$, there exists a contract $x_h$ to induce $a$. Intuitively, this assumption asks that each action is dominantly capable of inducing a set of outcomes over others. 
It is different from assuming that the outcome distribution of each action is distinguishable from other. In particular, notice that even if we have $\min_{a'\neq a}\norm{P_h(s, a) - P_h(s, a')} \geq \lambda_s$, there is no guarantee that $\exists x \text{ s.t. } x\cdot [P_h(s, a) - P_h(s, a')] \geq 0, \forall a\neq a'$, i.e., there exists contract $\bx$ under which $a$ is the optimal action for the agent at this step regardless of the cost.
Notice that when $H=1$, Assumption \ref{assum:inducibility-weak} and \ref{assum:inducibility-strong} are equivalent. In general, Assumption \ref{assum:inducibility-strong} is stronger than Assumption \ref{assum:inducibility-weak}.

We describe the computationally efficient solver in Algorithm~\ref{algo:value-iteration}. Below we show that the contract policy it solves is robust and optimistic in a formal sense.

\begin{lemma}\label{lm:pess-contract-policy}
Under Assumption \ref{assum:inducibility-strong}, with estimation of $\hat{\mu}, \hat{P}$ such that
$\norm{\hat{\mu}_h - {\mu}_h}_{1,\infty} \leq \frac{\epsilon}{H-h+\eta} $, 
$\norm{\hat{P}_h- {P}_h }_{1,\infty} \leq \epsilon'/\eta $ and choice of $\epsilon_H=\epsilon, \epsilon_h = O(\min\{H, \epsilon \lambda_s^{h-H} + \epsilon' \lambda_s^{h+1-H} \} ), \forall h<H$, the policy $\bpi, \hat{\bx}^{\bpi}$ output from Algorithm~\ref{algo:value-iteration} satisfies the following condition,
$$
U^{ \hat{\bx}^{\bpi} } - U^{\bpi} \eqsim \abs{ \hat{\zeta}^{\bpi} - \zeta^{\bpi} } = O\left(\min\{H,  \epsilon \lambda_s^{-H-1} + \epsilon' \lambda_s^{-H} \} \right).
$$
\end{lemma}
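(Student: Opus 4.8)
The plan is to lift the single-step argument of Lemma~\ref{lm:robust-contract-loss} through the backward induction of Algorithm~\ref{algo:value-iteration}, so that one error recursion over $h=H,\dots,1$ simultaneously controls $U^{\hat{\bx}^{\bpi}}-U^{\bpi}$ and shows it matches $|\hat{\zeta}^{\bpi}-\zeta^{\bpi}|$ up to constants. The first task is \emph{robust inducibility}: by backward induction on $h$, assuming the agent follows $\{\pi_\tau\}_{\tau>h}$, I compare the estimated incentive gap guaranteed by the constraint in~\eqref{eq:least-payment-policy-est}, namely $\hat{\mu}_h(s,a,a')\cdot[\hat{x}_h(s)+\hat{U}_{h+1}]-(c_h(s,a)-c_h(s,a'))\ge\epsilon_h$ with $a=\pi_h(s)$, against the true gap $\mu_h(s,a,a')\cdot[\hat{x}_h(s)+U_{h+1}]-(c_h(s,a)-c_h(s,a'))$. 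The discrepancy is at most $\|\hat{\mu}_h-\mu_h\|_{1,\infty}(\eta+H-h)+\|\mu_h\|_1\,\delta_{h+1}\le\epsilon+2\delta_{h+1}$, where $\delta_{h+1}:=\|\hat{U}_{h+1}-U_{h+1}\|_\infty$; the scaling $(H-h+\eta)^{-1}$ of the $\hat{\mu}$-tolerance is precisely what makes the first term $O(\epsilon)$. Choosing $\epsilon_h$ to dominate $\epsilon+2\delta_{h+1}$ keeps the true gap non-negative and closes the induction.

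\emph{Overpayment recursion.} Writing $\Delta_h:=\|U_h-U_h^{\bpi}\|_\infty$ for the gap between the true value under $\hat{\bx}^{\bpi}$ and the least-payment value $U_h^{\bpi}$ of the exact system~\eqref{eq:least-payment-policy}, I would decompose $\Delta_h(s)=P_h(s,a)\cdot[\hat{x}_h(s)-x_h^{\bpi}(s)]+P_h(s,a)\cdot(U_{h+1}-U_{h+1}^{\bpi})$ (the cost term cancels since both induce the same $a=\pi_h(s)$); the last term is at most $\Delta_{h+1}$. For the first term I invoke Assumption~\ref{assum:inducibility-strong}: any slack in the step-$h$ constraint---$\epsilon_h$ from the margin, $O(\epsilon+\epsilon')$ from the $\hat{\mu},\hat{P}$ errors, and $2(\delta_{h+1}+\Delta_{h+1})$ from the perturbation $\hat{U}_{h+1}-U_{h+1}^{\bpi}$ entering the constraint---can be absorbed by adding $\lambda_s^{-1}(\cdots)\,e$ along the inducibility event $e$, exactly as in~\eqref{eq:payment-upper-bound}, at payment cost $\lambda_s^{-1}(\cdots)$. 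This gives $\Delta_h\le\lambda_s^{-1}\big(\epsilon_h+O(\epsilon+\epsilon')+2(\delta_{h+1}+\Delta_{h+1})\big)+\Delta_{h+1}$, a per-step amplification of order $\lambda_s^{-1}$, whereas $\delta_h\le\delta_{h+1}+O(\epsilon')$ accumulates only linearly via the simulation lemma of~\cite{agarwal2019reinforcement}. Unrolling from $\Delta_{H+1}=0$ under the geometric schedule $\epsilon_h=O(\min\{H,\epsilon\lambda_s^{h-H}+\epsilon'\lambda_s^{h+1-H}\})$ collapses the telescoping sum to its leading term, yielding $\Delta_1=O(\min\{H,\epsilon\lambda_s^{-H-1}+\epsilon'\lambda_s^{-H}\})$, with the $\min$ with $H$ supplied by the cap in~\eqref{eq:least-payment-policy-est}.

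\emph{Assembling both claims.} Finally, the value decomposition $V^{\bx,\bpi}=R^{\bpi}-C^{\bpi}-U^{\bx,\bpi}$ with $R^{\bpi},C^{\bpi}$ fixed once $\bpi$ is fixed gives $U^{\hat{\bx}^{\bpi}}-U^{\bpi}=\Delta_1$, the first claimed bound. For the second, $\hat{\zeta}^{\bpi}-\zeta^{\bpi}=\hat{U}_1-U_1^{\bpi}=(\hat{U}_1-U_1)+(U_1-U_1^{\bpi})=\delta_1+\Delta_1$ (costs are known, so $\hat{C}=C$ cancels), and since $\delta_1$ is of the same or smaller order, the two quantities agree up to constants, giving the $\eqsim$. \textbf{The main obstacle} is the overpayment recursion: unlike the trajectory-level argument of Lemma~\ref{lm:lp-solution-guarantee}, which pays only a single $\lambda_w^{-1}$, here the downstream overpayment $\Delta_{h+1}$ genuinely re-enters the step-$h$ constraint and is amplified by $\lambda_s^{-1}$ at every level. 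The delicate points are verifying that this perturbation enters with its worst-case magnitude bounded by $\|\mu_h\|_1\le2$, and that the prescribed geometric schedule of $\epsilon_h$ exactly balances the compounding so the unrolled bound stays $O(\lambda_s^{-H})$ rather than a cruder $O((1+\lambda_s^{-1})^H)$.
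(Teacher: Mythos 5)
Your proposal is correct and follows essentially the same route as the paper's proof: a backward induction in which the margin $\epsilon_h$ is chosen as roughly $(H-h+\eta)\beta + 2\alpha$ to absorb the $\hat{\mu}$-estimation error and (twice) the continuation-value error, feasibility of a perturbed least-payment contract is obtained by adding $\lambda_s^{-1}(\text{slack})\,e$ along the inducibility event of Assumption~\ref{assum:inducibility-strong}, and the resulting per-step recursion is unrolled under the geometric schedule of $\epsilon_h$ (the paper tracks a single combined error $\alpha = \|\hat{U}^{\bpi}_{h+1}-U_{h+1}\|_\infty$ where you split it into $\delta_{h+1}+\Delta_{h+1}$, a purely cosmetic difference). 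The compounding concern you flag at the end is real but equally present in the paper's own argument, whose recursion $e_h \le \lambda_s^{-1}(\epsilon + 2e_{h+1}) + 2\epsilon'$ also hides the factor-$2$ per level inside the big-$O$, so your treatment is no looser than the published one.
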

\begin{proof}
Fix any action policy $\bpi$, we compare the contract policy ${\bx}^{\bpi}$ and $\hat{\bx}^{\bpi}$ solved respectively from Equation \eqref{eq:least-payment-policy} and \eqref{eq:least-payment-policy-est}, i.e., using the ground-truth and estimated parameters. In the remainder of the proof, we will save the superscript $\bpi$ in both $\hat{\bx}^{\bpi}$ and ${\bx}^{\bpi}$ for simplicity.

For the simplicity of notations, we prove the following claim. For any $h\in [H]$,
suppose we have $ \norm{ \hat{U}_{h+1}^{\bpi} - U_{h+1}  }_{\infty} \leq \alpha $, $\norm{\hat{P}_h - {P}_h }_{1,\infty} \leq \epsilon'/\eta $, $\norm{\hat{\mu}_h - {\mu}_h }_{1,\infty} \leq \beta $.
Then, with the choice of $\epsilon_h = (H-h+\eta)\beta + 2\alpha $,  the solution of Equation \eqref{eq:least-payment-policy-est} in the $h$-th step satisfies the following conditions: for every $s\in \cS$,
\begin{enumerate}
    \item $\hat{x}_h(s)$ induces agent's action $\pi_h(s)$.
    \item The expected payment of $\hat{x}_h(s)$ is bounded as 
$ {U}^{\hat{x}_h(s)}_h(s) - U_h^{\bpi}(s)  \leq \min\{ H,
\frac{\epsilon_h}{\lambda_s} + \epsilon' \}. 
$
    \item The estimated payment of $\hat{x}_h(s)$ (as well as the estimated agent value) is bounded as $ \abs{ \hat{\zeta}^{\bpi}_{h}(s) - {\zeta}^{\bpi}_{h}(s) } = \abs{ \hat{U}^{\bpi}_h(s) - U^{\bpi}_h(s) } \leq \min\{ H, 
        \frac{\epsilon_h}{\lambda_s} + 2\epsilon' \}.
    $
\end{enumerate}

We pick any state $s$ and let $a = \pi_h(s)$. Equation \eqref{eq:least-payment-policy-est} solves for $\hat{x}_h(s)$ as the solution to the following optimization program,
\begin{lp}\label{lp:pessimistic-contract-inductive}
\mini{ \hat{P}_h(s, a)\cdot x  }
\st 
\qcon{ \hat{\mu}_h(s, a, a') \cdot [x +  \hat{U}_{h+1}^{\bpi}] \geq  {c}_h(s, a) - {c}_h(s, a') + \epsilon_h }{a' \neq a}
\qcon{x(s,s') \geq 0}{s'\in \cS} 
\end{lp} 
We first argue that with $\epsilon_h = (H-h+\eta)\beta + \alpha $,
$\hat{x}_h(s)$ induces the agent to play action $a$ at state $s$. That is because the following inequality must hold, $\forall a' \neq a$,
\begin{align*}
& \quad  {\mu}_h(s, a, a') \cdot [\hat{x}_h(s) +  U_{h+1}] \\
& \geq   \hat{\mu}_h(s, a, a') \cdot [\hat{x}_h(s) + \hat{U}_{h+1}]  -  \beta \norm{\hat{x}_h(s) }_{\infty} - \beta\norm{U_h}_{\infty} - \alpha \norm{{\mu}_h(s, a, a')}_{1} + \beta \alpha \\
& \geq  c(s, a) - c(s, a') + \epsilon_h  - \beta\eta - \beta(H-h) - 2\alpha \\
& \geq  c(s, a) - c(s, a').   
\end{align*} 
To see that LP \eqref{lp:pessimistic-contract-inductive} must have a feasible solution, let us consider the contract ${x}'_h(s) = {x}_h(s) + \frac{\epsilon_h}{\lambda_s}x^{a}$. We choose $x^{a}$ such that $[P_h(s, a) - P_h(s,a')] \cdot x^{a} \geq \lambda_s, \forall a'\neq a$. The existence of $x^{a}$ is implied by Assumption~\ref{assum:inducibility-strong}. ${x}'_h(s)$ satisfies the constraints of LP \eqref{lp:pessimistic-contract-inductive}, i.e., $\forall a' \neq a,$
\begin{align*}
& \quad \hat{\mu}_h(s, a, a') \cdot  [ {x}'_h(s) +  \hat{U}_{h+1}^{\bpi}] \\
& \geq {\mu}_h(s, a, a') \cdot  {x}'_h(s)  -  \beta \norm{x'_h(s) }_{\infty} - \beta\norm{{U}_h}_{\infty} - \alpha\norm{\hat{\mu}_h(s, a, a')}_{1} + \beta \alpha \\
& \geq {\mu}_h(s, a, a') \cdot  {x}_h(s) + {\mu}_h(s, a, a') \cdot \frac{\beta}{\lambda_s} x^{a} - \beta\eta - \beta(H-h) - 2\alpha  \\
& \geq  c(s, a) - c(s, a')  + \epsilon_h  - \beta\eta - \beta(H-h) \\
& \geq  c(s, a) - c(s, a').   
\end{align*}
Moreover, since ${x}_h(s)$ minimizes LP \eqref{lp:pessimistic-contract-inductive}, we have $ \hat{P}(a)\cdot \hat{x}_h(s) \leq  \hat{P}(a)\cdot {x}'_h(s) = \hat{P}(a)\cdot [{x}_h(s) + \frac{\epsilon_h}{\lambda_s}x^{a}].$
Given that $\norm{x^{a}}_{\infty}=1$ and $\norm{P(a)}_{1}=1$, we have $\hat{P}_h(s, a)x^{a} \leq 1$.
Using $\hat{x}_h(s)$, the principal's expected payment is bounded from the least payment as 
\begin{align*}
  {U}^{\hat{x}_h(s)}_h(s) - U_h^{\bpi}(s)  
  & =  P_h(s, a)\cdot [ \hat{x}_h(s) -  {x}_h(s) ] \\
  & =  \hat{P}_h(s, a)\cdot [ \hat{x}_h(s) -  {x}_h(s) ] + [P_h(s, a) - \hat{P}_h(s, a) ] \cdot [ \hat{x}_h(s) -  {x}_h(s) ]  \\
  & \leq \hat{P}_h(s, a)\frac{\epsilon_h}{\lambda_s}x^{a} + \norm{P_h(s, a) - \hat{P}_h(s, a) }_1 \norm{ \hat{x}_h(s) -  {x}_h(s) }_{\infty} \\
    & \leq \frac{\epsilon_h}{\lambda_s} + \epsilon'.
\end{align*}
In addition, $U_h^{\bpi}(s)\leq H, {U}^{\hat{x}_h(s)}_h(s) \geq 0$, this gives the bound  ${U}^{\hat{x}_h(s)}_h(s) - U_h^{\bpi}(s) \leq \min\{ H, \frac{\epsilon_h}{\lambda_s} + \epsilon'\} .$

Since $\abs{ \hat{U}^{\bpi}_h(s) - U_h^{\hat{x}_h(s)}(s)} = \abs{ [\hat{P}_h(s, a) - P_h(s,a)]\cdot \hat{x}_h(s) } \leq \epsilon'  $, we have 
$$  \abs{ \hat{\zeta}^{\bpi}_{h}(s) - {\zeta}^{\bpi}_{h}(s) } = \abs{ \hat{U}^{\bpi}_h(s) - U^{\bpi}_h(s) } \leq \abs{ \hat{U}^{\bpi}_h(s) - U_h^{\hat{x}_h(s)}(s)} + \abs{{U}^{\hat{x}_h(s)}_h(s) - U_h^{\bpi}(s)}  \leq  \min\{ H, \frac{\epsilon_h}{\lambda_s} + 2\epsilon' \}.$$

We now plug in the value of $\alpha, \beta$. For the base case in the $H$-th step, we have $\alpha=0, \beta=\frac{\epsilon}{H+2\eta}$. Then, $\epsilon_{H} =  \epsilon,  \abs{ \hat{\zeta}^{\bpi}_H(s) - \zeta^{\bpi}_H(s) } = \abs{ \hat{U}^{\bpi}_H(s) - U^{\bpi}_H(s) }  = O\left(\min\{ H,  \frac{\epsilon}{\lambda_s} +  \epsilon' \} \right)  $

For the inductive case in the $h$-th step, given that $\norm{ \hat{U}^{\bpi}_{h+1} - U^{\bpi}_{h+1} }_{\infty} = O(\min\{ H, \epsilon \lambda_s^{h-H} + \epsilon' \lambda_s^{h+1-H} \} )$,
we have $\alpha = O(\min\{ H, \epsilon \lambda_s^{h-H} + \epsilon' \lambda_s^{h+1-H}\}), \beta=\frac{\epsilon}{H-h+2\eta}$ and $\epsilon_h = \norm{ \hat{U}^{\bpi}_{h+1} - U^{\bpi}_{h+1} }_{\infty} + \epsilon = O(\min\{ H, \epsilon \lambda_s^{h-H} + \epsilon' \lambda_s^{h+1-H} \} )$. Then, we can plug in $\epsilon_h$ and bound the estimation error as,
${U}^{\hat{x}_h(s)}_h(s) - U_h^{\bpi}(s) \eqsim \abs{ \hat{\zeta}^{\bpi}_{h}(s) - {\zeta}^{\bpi}_{h}(s) } =  \abs{ \hat{U}^{\bpi}_h(s) - U^{\bpi}_h(s) } \leq O\left(\min\{ H, \epsilon \lambda_s^{h-1-H} + \epsilon' \lambda_s^{h-H} \} \right) . 
$
This gives the bound as claimed in the Lemma.
\end{proof}

\begin{lemma} \label{lm:cost-bound}
If $\mathcal{E}_{\text{model}}$ is true, 
under Assumption \ref{assum:mixing-mdp} and \ref{assum:inducibility-strong}, 
with $T_1 = O(HAS^5\kappa_0^{-1}\log( \eta T\lambda_s^{-1} )) $, at any episode $t \in [T]$, we have
$$
\abs{ \hat{\zeta}^{t,\bpi}- \zeta^{\bpi} } = O\left( \min\big\{ H, \eta \lambda_s^{1-H}\kappa^{-1/2} \sqrt{\ln(S A H T / \delta)/t } \big\}  \right).
$$
\end{lemma}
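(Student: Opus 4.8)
The plan is to follow the same template as the proof of Lemma~\ref{lm:lp-solution-cost-bound}, but to replace the statistically-efficient guarantee of Lemma~\ref{lm:lp-solution-guarantee} with the value-iteration guarantee of Lemma~\ref{lm:pess-contract-policy}, and then to optimize the trade-off between the transition-kernel estimation error and the simplex-search accuracy $\varepsilon$ on $\hat{\mu}$. First I would fix the oracle accuracy, writing $\sup_{s,a,a',h}\norm{\hat{\mu}^t_h(s,a,a') - \mu_h(s,a,a')}_1 \le \varepsilon$, which is what Algorithm~\ref{algo:farsighted mdp learning oracle} delivers. Under the mixing condition (Assumption~\ref{assum:mixing-mdp}), every state is visited at each step $h\ge 2$ with marginal probability at least $\kappa$, so on $\mathcal{E}_{\text{model}}$ we have $N_h^t(s)\ge 2\kappa t$; plugging this into the improved estimator of Lemma~\ref{lm:improved-distribution-est} gives $\norm{\hat{P}^t_h(s,a)-P_h(s,a)}_1 \le 2\sqrt{\ln(SAHT/\delta)/(2\kappa t)} + \varepsilon$ for every $s,a,h$.

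Next I would invoke Lemma~\ref{lm:pess-contract-policy} under the strong inducibility assumption (Assumption~\ref{assum:inducibility-strong}). To match its hypotheses I set the $\mu$-error parameter to $\epsilon = (H+\eta)\varepsilon$, so that $\norm{\hat{\mu}_h-\mu_h}_{1,\infty}\le \epsilon/(H-h+\eta)$ holds for all $h$, and the $P$-error parameter to $\epsilon' = \eta\,(2\sqrt{\ln(SAHT/\delta)/(2\kappa t)} + \varepsilon)$. Reading off the top-level ($h=1$) conclusion of Lemma~\ref{lm:pess-contract-policy} then yields, for any stationary $\bpi$,
\[
\abs{\hat{\zeta}^{t,\bpi} - \zeta^{\bpi}} = O\!\left(\min\{H,\ \epsilon\lambda_s^{-H} + \epsilon'\lambda_s^{1-H}\}\right),
\]
where the statistical part of $\epsilon'$ contributes exactly the target term, since $2\eta\lambda_s^{1-H}\sqrt{\ln(SAHT/\delta)/(2\kappa t)} = O(\eta\lambda_s^{1-H}\kappa^{-1/2}\sqrt{\ln(SAHT/\delta)/t})$.

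It then remains to choose $\varepsilon$ to make the residual $\varepsilon$-dependent pieces negligible. Because $\lambda_s\le 1$ makes $\lambda_s^{-H}$ the largest amplification factor, the dominant such piece is $(H+\eta)\varepsilon\lambda_s^{-H}$, so it suffices to take $\varepsilon = \Theta\!\big(\eta\lambda_s\kappa^{-1/2}/((H+\eta)\sqrt{t})\big)$, which forces this term below the statistical term and collapses the bound to $O(\min\{H,\ \eta\lambda_s^{1-H}\kappa^{-1/2}\sqrt{\ln(SAHT/\delta)/t}\})$, as claimed. Finally I would check the warm-up budget: with this $\varepsilon$ we have $\log(1/\varepsilon) = O(\log(\eta T\lambda_s^{-1}))$ after absorbing $H$, $\kappa$, and $\sqrt{t}\le\sqrt{T}$ into the logarithm, so by the oracle's sample complexity (Lemma~\ref{lm:mdp-least-payment-oracle}) accuracy $\varepsilon$ on $\hat{\mu}$ is reached in $T_1 = \tilde{O}(HAS^5\kappa_0^{-1}\log(1/\varepsilon)) = O(HAS^5\kappa_0^{-1}\log(\eta T\lambda_s^{-1}))$ episodes with the required probability.

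The main obstacle is the exponential-in-$H$ error amplification built into the backward induction of Lemma~\ref{lm:pess-contract-policy}: a one-step misestimate of $\mu$ (and of $P$) propagates upward with a factor growing like $\lambda_s^{-H}$, so the search accuracy $\varepsilon$ on $\hat{\mu}$ must be driven well below the statistical rate $\kappa^{-1/2}\sqrt{\ln/t}$ to keep it from dominating. The only reason this is affordable is that the hyperplane-search oracle attains accuracy $\varepsilon$ with merely logarithmic dependence $O(\log(1/\varepsilon))$, which keeps $T_1$ at order $\log T$ and hence negligible against the $\sqrt{T}$ regret contributed by the transition-estimation term.
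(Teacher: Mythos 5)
Your proposal is correct and follows essentially the same route as the paper's proof: fix the oracle accuracy $\varepsilon$ on $\hat{\mu}$, use the mixing condition $N_h^t(s)\ge 2\kappa t$ with Lemma~\ref{lm:improved-distribution-est} to get the $\ell_1$ bound on $\hat{P}^t_h$, feed both errors into Lemma~\ref{lm:pess-contract-policy}, tune $\varepsilon \propto \lambda_s/((H+\eta)\sqrt{t})$ so the transition-estimation term dominates, and charge the oracle cost to $T_1 = O(HAS^5\kappa_0^{-1}\log(\eta T\lambda_s^{-1}))$ via Lemma~\ref{lm:mdp-least-payment-oracle}. Your explicit matching of the hypotheses of Lemma~\ref{lm:pess-contract-policy} (setting $\epsilon=(H+\eta)\varepsilon$ and $\epsilon'=\eta(2\sqrt{\ln(SAHT/\delta)/(2\kappa t)}+\varepsilon)$) is in fact slightly more careful than the paper's own write-up, and your variant choice of $\varepsilon$ differs only by constants absorbed in the big-$O$.
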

\begin{proof}
This proof is to find a good trade-off between the error bound of $P$ and $\mu$ according to the ratio given in Lemma \ref{lm:lp-solution-guarantee}. 
For this analysis, let $\sup_{s, a,a',h}\norm{ \hat{\mu}^t_h(s, a, a') - {\mu}_h(s, a, a') }_{1} \leq \varepsilon$ such that we can apply Lemma~\ref{lm:improved-distribution-est} and get
$$
\norm{\hat{P}_h^t(s, a)-P_h(s, a) }_{1} \leq  2 \sqrt{\frac{\ln (S A H T / \delta)}{ 2\kappa t } } +  \varepsilon,
$$ 
where we use the fact $N_h^t(s) \geq 2\kappa t$ under Assumption \ref{assum:mixing-mdp}, similar to the analysis in Lemma \ref{lm:mdp-least-payment-oracle}. 
By Lemma \ref{lm:pess-contract-policy}, the Equation \eqref{eq:least-payment-policy-est} can construct contracts to induce any action policy $\bpi$ such that 
$$
\abs{ \hat{\zeta}^{t,\bpi} - \zeta^{\bpi} } = O\left( \min\big\{ H, (H+\eta)\epsilon \lambda_s^{-H} + \eta \lambda_s^{1-H}\sqrt{\frac{\ln (S A H T / \delta)}{ 2\kappa t } } \big\} \right)
$$ 

Then, we can set $\varepsilon= \frac{\lambda_s}{(H+\eta)\sqrt{t} } $ such that the second term dominates the regret bound, and we have
\begin{align*}
\abs{ \hat{\zeta}^{t,\bpi} - \zeta^{\bpi} } 
&= O\left( \min\{ H,  \eta \lambda_s^{1-H}\sqrt{\frac{\ln (S A H T / \delta)}{ \kappa t } }  \right) \\
&= O\left( \min\big\{ H, \eta \lambda_s^{1-H}\kappa^{-1/2} \sqrt{\ln(S A H T / \delta)/t } \big\} \right)
\end{align*}
It remains to ensure that in $T_1$ rounds, we can ensure  $\varepsilon= \frac{\lambda_s}{(H+\eta)\sqrt{t} } $.
By Lemma \ref{lm:mdp-least-payment-oracle}, we know the sample complexity of $\varepsilon$ is on the order of $\tilde{O}( HAS^5\kappa_0^{-1}\log( t ) )$. Hence, we can set $T_1 = O(HAS^5\kappa_0^{-1}\log( (H+\eta)T\lambda_s^{-1} )) = O(HAS^5\kappa_0^{-1}\log( \eta T\lambda_s^{-1} ))$.
\end{proof}

\end{document}